\documentclass{article}
\usepackage{iclr2027_conference,times}

\usepackage{amsmath,amsfonts,bm}

\def\eqref#1{equation~\ref{#1}}

\def\1{\bm{1}}

\DeclareMathAlphabet{\mathsfit}{\encodingdefault}{\sfdefault}{m}{sl}
\SetMathAlphabet{\mathsfit}{bold}{\encodingdefault}{\sfdefault}{bx}{n}

\newcommand{\R}{\mathbb{R}}

\usepackage{amsmath}
\usepackage{amssymb}
\usepackage{amsthm}
\usepackage{booktabs}
\usepackage{array}
\usepackage{graphicx}
\usepackage{algorithm}
\usepackage{algpseudocode}
\usepackage{tikz}
\usepackage{placeins}
\usepackage{xcolor}
\usepackage{hyperref}
\usepackage{url}
\hypersetup{hidelinks}

\newcommand{\method}{Convex-SKM-Net}

\newcommand{\metrichead}[2]{\shortstack{#1\\(#2)}}
\renewcommand{\eqref}[1]{Eq.~(\ref{#1})}

\newtheorem{theorem}{Theorem}[section]
\newtheorem{lemma}[theorem]{Lemma}
\newtheorem{proposition}[theorem]{Proposition}
\newtheorem{corollary}[theorem]{Corollary}

\title{Convex-SKM-Net: Low-Latency\\Block Halfspace Corrections for\\Neural Convex Constraint Satisfaction}

\iclrfinalcopy
\author{Haoyu Zhu\thanks{Equal contribution.}\\
Zhejiang University\\
\texttt{zhuhaoyu03@zju.edu.cn}
\And
Xiaochen Zhang\footnotemark[1]\\
Zhejiang University\\
\texttt{xiaochen.23@intl.zju.edu.cn}
\AND
Qingchun Hou\thanks{Corresponding author.}\\
Zhejiang University\\
\texttt{houqingchun@zju.edu.cn}}

\begin{document}

\maketitle

\begin{abstract}
Fast inference and strict constraint satisfaction are essential for learning-based decision-making. Existing methods often rely on constraint-specific projections or feasible-set parameterizations, while iterative repair can be costly. We introduce \method{}, a convex constraint-satisfaction layer that forms linearization halfspaces from residual and (sub)gradient oracles without constraint-specific inequality projectors. It applies fixed-depth dual projected-gradient descent (dual PGD) to minimum-displacement block problems using GPU-friendly matrix operations. We jointly learn the predictor and a positive-definite mobility, adapting the correction geometry to improve objective quality. We prove convergence of the outer iterates at fixed dual depth and linear convergence in distance under Slater's condition and bounded subgradients. All our test outputs on three second-order-cone model-predictive control (SOC-MPC) horizons and six power-system convex relaxations satisfy a $10^{-3}$ feasibility tolerance. Comparisons with the latest constraint-satisfaction methods show inference speedups of up to $144.3\times$ on SOC-MPC and $7.7\times$ on a convex quadratically constrained quadratic programming (QCQP) benchmark with joint training. On QCQP, the mean objective gap is $0.012\%$, with full test feasibility at $10^{-4}$. Learning mobility reduces the five-seed affine-task mean gap by $22.4\%$ and improves gaps in matched power-system comparisons.
\end{abstract}

\section{Introduction}
Learning-based decision-making in power systems, model-predictive control, and wireless resource allocation requires fast predictions that satisfy physical constraints and retain good objective values~\citep{donti2021dc3,chen2018constrainedmpc,alizadeh2023powercontrol}. Neural predictions can violate constraints even when violations are penalized during training~\citep{zeng2024glinsat,donti2021dc3}. Constraint layers enforce feasibility through feasible-set parameterizations, optimization, or iterative repair. Their use depends on the operations available for each constraint family: parameterizations may require interior points~\citep{tordesillas2023rayen}, and splitting methods require component projections~\citep{grontas2026pinet,tang2026lminet}. Residual-based repair works directly with constraint values and derivatives, but repeated correction can remain costly~\citep{donti2021dc3,nguyen2025fsnet,chu2026snarenet}. Repair also changes the predicted decision, so its direction affects the final objective. We therefore design both the correction computation and its directional cost.

\method{} uses convexity to give different inequalities a common local representation. A residual and a subgradient define a linearization halfspace containing the feasible set, allowing constraints to be handled without family-specific projectors. After satisfying the affine equalities, we formulate minimum-displacement problems over selected blocks of these halfspaces. Fixed-depth dual projected-gradient descent (dual PGD) computes block updates with GPU-friendly matrix operations, without per-block factorizations or linear solves. Several block corrections reuse one linearization. On second-order-cone model-predictive control (SOC-MPC), inference is $80.7$--$144.3\times$ faster than fixed-$400$-step $\Pi$net. On the 300-bus semidefinite relaxation, postprocessing a fully fine-tuned predictor is $7.7\times$ faster than LMI-Net, with a lower mean gap on identical raw predictions.

We also learn how displacement is measured during repair. Euclidean distance assigns the same cost to equal-length moves, although their effects on the task objective can differ. A positive-definite mobility changes these directional costs while preserving the feasible set. Joint training adjusts both the predictor's initial output and the geometry of its subsequent corrections. The same layer supports postprocessing with a prescribed mobility, including the identity. Learning mobility reduces the five-seed mean gap on the affine task by $22.4\%$. In matched joint-training comparisons, it also reduces mean gaps by $11.0\%$ and $8.1\%$ on the 300-bus conic and semidefinite relaxations. Our contributions are:

\noindent\textbf{(1) General convex constraint satisfaction.} We introduce a layer using residual and (sub)gradient oracles without constraint-specific inequality projectors, supporting both postprocessing and joint training.

\noindent\textbf{(2) Block corrections with dual PGD.} Our GPU-friendly block updates use fixed dual depth without per-block factorizations. We establish convergence of the outer iterates and linear convergence in distance under Slater's condition and bounded subgradients.

\noindent\textbf{(3) Learnable mobility.} We learn directional correction costs jointly with the predictor, improving objective gaps on affine and nonlinear convex tasks.

\section{Related Work}
We review neural constraint layers and the numerical methods underlying our corrections.

\paragraph{Feasible-set parameterization.}
Linear-constraint constructions use polyhedral generators, gauge maps, or decision rules~\citep{frerix2020homogeneous,li2023hardlinear,constante2025decisionrules}; other approaches calibrate constraints or restrict network maps over prescribed input regions~\citep{zhao2023preventive,balestriero2023police}. RAYEN uses an interior point and ray boundaries, computing the point offline for fixed constraints~\citep{tordesillas2023rayen}. Radial transformations, invertible maps, and interior-point predictions also construct constrained outputs~\citep{konstantinov2023output,liang2024homeomorphic,liang2025bisection,schneider2026softradial}.

\paragraph{Differentiable solver layers.}
Affine repair uses closed-form updates or subconstraint decompositions~\citep{min2025hardnet,zhao2026caffnet}. Optimization layers differentiate through constrained solutions during training~\citep{amos2017optnet,agrawal2019differentiable,min2025hardnet}. Linear constraints admit generalized Sinkhorn and entropy-regularized dual iterations, with linear repair also used in tabular generative models~\citep{wang2023linsatnet,zeng2024glinsat,stoian2024cdgm}. $\Pi$net and LMI-Net use component projections in operator splitting~\citep{grontas2026pinet,tang2026lminet}; Dykstra's algorithm and alternating projections also support constrained prediction~\citep{chen2018constrainedmpc,cristian2023projectnet}.

Among residual-based layers, DC3 combines equality completion with violation gradients, and FSNet minimizes violation~\citep{donti2021dc3,nguyen2025fsnet}. Other updates use adaptive repair, damped linearization, optimality conditions, or regularized Newton steps~\citep{lastrucci2025enforce,goertzen2026hardnetpp,iftakher2026kkthardnet,chu2026snarenet}.

Sampling Kaczmarz--Motzkin (SKM) corrects the most violated sampled row~\citep{deloera2017sampling}; nonlinear SKM handles nonlinear equations and variants with convex constraints~\citep{zhangSamplingKaczmarzMotzkin2023}. T-SKM-Net unrolls row repair; AT-SKM-Net uses graph-guided sampling for dynamic linear constraints~\citep{zhu2026tskm,zhang2026atskm}. Linear block methods project onto equalities or fit boundaries~\citep{needell2014paved,zhang2021block,zhang2023randomized,briskman2015block}; subgradient, parallel, block-iterative, and minibatch methods also address convex inequalities~\citep{dossantos1987parallel,kiwiel1995block,bauschke2017convex,necoara2019minibatch}.

\section{Method: \method{}}\label{sec:method}
\textbf{Overview.} Figure~\ref{fig:pipeline} illustrates the layer $y_{\rm corr}=\mathcal R(x,\hat y;P)$, which takes an instance $x$, a prediction $\hat y$, and a prescribed positive-definite mobility $P$. It first satisfies the affine equalities, then restricts corrections to their nullspace. Each outer iteration builds linearization halfspaces from residuals and (sub)gradients. The layer selects a block using normalized violation scores and applies $T$ dual PGD steps to its minimum-displacement problem. It performs $S$ block corrections before refreshing the linearization. Mobility controls normalization and directional costs and stays fixed during one repair. A coordinate transformation reuses the Euclidean dual solver. Algorithm~\ref{alg:layer} gives the full procedure. Postprocessing supplies $P$ directly; joint training learns its parameters together with the predictor.

For input $x$, a predictor returns $\hat y=f_\theta(x)\in\mathbb R^n$. Assume the nonempty feasible set $\mathcal Y(x)=\{y:C(x)y=d(x),\ g_i(y,x)\le0,\ i=1,\ldots,m\}$ has closed convex inequalities with queryable residuals and subgradients, and projectable affine equalities. Set $y_{\rm eq}=P_{\{Cy=d\}}(\hat y)$ and take an orthonormal basis $N$ of $\ker C$, so $CN=0$ and $N^\top N=I_q$. Then
\begin{equation}\label{eq:reduced-setting}
y=y_{\rm eq}+Nz,\qquad h_i(z)=g_i(y_{\rm eq}+Nz,x),\qquad
\Omega=\{z:h_i(z)\le0,\ \forall i\}.
\end{equation}
Input dependence is omitted below. Appendix~\ref{app:oracles} gives the residual and (sub)gradient oracles for each task.

\begin{figure}[!htbp]
\centering
\input{figures/pipeline.tex}
\caption{Repair in reduced space with fixed (blue) or learned (red) mobility. The inset shows block reselection under one linearization. Trajectories are schematic.}\label{fig:pipeline}
\end{figure}

\subsection{Halfspace block corrections}\label{sec:single-block}
At $z^k$, take $a_i^k\in\partial h_i(z^k)$. Convexity gives $h_i(z)\ge\ell_i^k(z)=h_i(z^k)+\langle a_i^k,z-z^k\rangle$. Hence the linearization halfspace contains the feasible set, with single-row SKM correction
\begin{equation}
H_i^k=\{z:\ell_i^k(z)\le0\},\quad \Omega\subseteq H_i^k,\qquad
\Delta_i^k=-\frac{[h_i(z^k)]_+}{\|a_i^k\|^2}a_i^k.
\end{equation}
Satisfied rows make no update; violated rows have nonzero subgradients (Lemma~\ref{lem:nonzero}). We apply this construction to a block of constraints.

For nonzero rows define
\[
u_i^k=\frac{a_i^k}{\|a_i^k\|},\qquad
\beta_i^k=\frac{(a_i^k)^\top z^k-h_i(z^k)}{\|a_i^k\|},
\]
so $H_i^k=\{z:(u_i^k)^\top z\le\beta_i^k\}$. Zero-subgradient rows use zero placeholders. At inner iterate $w$, initially $z^k$, select the $b$ largest normalized violation scores $\delta_i^k(w)=[(u_i^k)^\top w-\beta_i^k]_+$, with $1\le b\le m$.
Normalization removes positive row scaling; zero-score rows fill the block when fewer than $b$ rows are violated. Stack the selected rows in $U_B$, the offsets in $\beta_B$, and set $s=U_Bw-\beta_B$. The block problem retains these signed residuals and minimizes displacement into the halfspace intersection:
\begin{equation}\label{eq:block_primal}
\min_\Delta\tfrac12\|\Delta\|^2\quad\text{s.t.}\quad U_B\Delta+s\le0.
\end{equation}
Containment makes this quadratic program (QP) feasible. For $b=1$, \eqref{eq:block_primal} gives the projection step from $w$ onto the selected halfspace.

\subsection{Dual PGD}\label{sec:dual}
Abbreviate $U=U_B$ and $G_B=UU^\top$. The dual objective and updates are
\begin{equation}\label{eq:dual-update}
\begin{aligned}
\max_{\lambda\ge0}\phi(\lambda)&=s^\top\lambda-\tfrac12\|U^\top\lambda\|^2,\\
\lambda^{j+1}&=\left[\lambda^j+\frac{s-G_B\lambda^j}{L_B}\right]_+,\qquad
\Delta=-U^\top\lambda^T.
\end{aligned}
\end{equation}
Each block takes $T\ge1$ steps from $\lambda^0=0$, reusing $G_B$. The dual variables weight row normals to give the displacement. We take $\lambda_{\max}(G_B)\le L_B\le\bar L<\infty$, with coefficients and steps fixed within each block; zero blocks stay still. Normalization permits $\bar L=b$; Appendix~\ref{app:gershgorin} covers broader steps.

Given residuals and (sub)gradients, block updates use matrix products, reductions, and elementwise clipping, without per-block factorizations or linear solves. Fixed block shapes support GPU batching; complexity is in Appendix~\ref{app:complexity}.

\label{sec:block-operations}
\begin{table}[!htbp]
\caption{Local operations on selected linearizations. Boundary fitting selects a minimum-norm least-squares displacement.}\label{tab:block_comparison}
\centering\setlength{\tabcolsep}{3pt}
\begin{tabular}{@{}p{1.0in}p{2.45in}p{1.78in}@{}}
\toprule
Operation & Local objective & Computation\\\midrule
Single-row SKM & Minimum displacement into one halfspace & Closed-form projection\\
Equality-LS & Boundary fit on all selected rows & Pseudoinverse or linear solve\\
Block-pinv & Boundary fit on violated rows & Pseudoinverse\\
Violation-GD & $\tfrac12\|[U_B\Delta+s]_+\|^2$ & Primal gradient steps\\
Dual PGD (ours) & Minimum displacement under inequalities & Dual projected-gradient steps\\\bottomrule
\end{tabular}
\end{table}
Boundary fitting targets hyperplanes~\citep{needell2014paved,briskman2015block}, while violation descent minimizes the squared positive residual~\citep{donti2021dc3}. Our block problem minimizes displacement under halfspace inequalities.

\subsection{Learnable mobility}\label{sec:mobility}
We introduce a positive-definite mobility $P$ to control directional displacement costs. Joint training learns $P=P_\psi(x)$ together with the predictor $f_\theta$, improving the initial prediction and subsequent repair geometry. Postprocessing prescribes $P$; $P=I_n$ recovers Euclidean repair.

We use $P=\operatorname{diag}(p_j)\succ0$ and $P_z=N^\top PN\succ0$, fixed during one complete repair. Let $A_B$ stack selected unnormalized rows $(a_i^k)^\top$ and $s_B^{\rm raw}=(\ell_i^k(w))_{i\in B}$ contain their signed linearized residuals. We replace~\eqref{eq:block_primal} with
\begin{equation}\label{eq:mobility_primal}
\min_{\Delta z}\tfrac12\Delta z^\top P_z^{-1}\Delta z
\quad\text{s.t.}\quad A_B\Delta z+s_B^{\rm raw}\le0.
\end{equation}
Larger mobility eigenvalues lower the cost of the corresponding directions. Selection uses $[\ell_i^k(w)]_+/\sqrt{(a_i^k)^\top P_za_i^k}$, and the feasible set is unchanged. The rectangular factor $R=N^\top P^{1/2}\in\mathbb R^{q\times n}$ satisfies $RR^\top=P_z$. Setting $z=R\nu$, with $\nu\in\mathbb R^n$, reuses the Euclidean solver (Algorithm~\ref{alg:layer}).\label{sec:training} Parameterization and training are in Appendix~\ref{app:training-protocol}; finite-step equivalence is in Appendix~\ref{app:mobility-theory}.

\begin{algorithm}[!htbp]
\caption{\method{} repair with prescribed mobility}\label{alg:layer}
\begin{algorithmic}[1]
\Require $x,\hat y$, mobility $P=\operatorname{diag}(p)\succ0$, block size $b$, outer cap $K$, inner steps $S\ge1$, dual depth $T\ge1$, tolerance $\varepsilon$
\State Obtain $y_{\rm eq}=P_{\{Cy=d\}}(\hat y)$ and an orthonormal nullspace basis $N$
\State $R\gets N^\top P^{1/2}$; $\nu\gets0$
\State Define $\tilde h_i(\nu)=g_i(y_{\rm eq}+NR\nu,x)$
\For{$k=0,\ldots,K$}
\State $y\gets y_{\rm eq}+NR\nu$; $r\gets v(x,y)$
\State \textbf{if} $r\le\varepsilon$ or $k=K$ \textbf{then return} $y,r,(r\le\varepsilon)$
\State Linearize and normalize $\tilde h$ at $\nu$ to obtain $(\bar U,\bar\beta)$; $\tilde w\gets\nu$
\For{$t=1,\ldots,S$}
\State $s_{\rm all}\gets\bar U\tilde w-\bar\beta$; $B\gets\operatorname{top}_b([s_{\rm all}]_+)$
\State $(U,s)\gets(\bar U_B,(s_{\rm all})_B)$
\State $\lambda\gets$ \textsc{DualPGD}$(U,s,T)$ starting at zero
\State $\tilde w\gets\tilde w-U^\top\lambda$
\EndFor
\State $\nu\gets\tilde w$
\EndFor
\end{algorithmic}
\end{algorithm}
Here $v(x,y)$ is the maximum absolute equality residual or positive inequality residual in the original coordinates; success means $v\le\varepsilon$. \textsc{DualPGD} uses the update form in~\eqref{eq:dual-update}; task-specific step sizes are in Appendix~\ref{app:training-protocol}.

\section{Theoretical Guarantees}\label{sec:theory}
We analyze how fixed-depth block corrections drive the outer iterates to feasibility.
Fix an input with nonempty feasibility, finite convex $h_i$, and violated-row subgradients bounded by $M$ on the iterate region. We use Algorithm~\ref{alg:layer}'s zero initialization, $1/L_B$ steps, and full displacements, in exact arithmetic with zero tolerance and no outer cap; feasible outputs stay fixed. Fixed mobility first transforms to Euclidean coordinates (Appendix~\ref{app:mobility-theory}); distances and constants below refer to these coordinates.

For a block, let $\delta_B=[s]_+$, $E_B=\|\delta_B\|^2$, $G_B=UU^\top$, and $c_{B,T}=2\phi(\lambda^T)$. For $E_B>0$, also set $r_B=\delta_B^\top G_B\delta_B/E_B$.
\begin{theorem}[Progress at finite dual depth]\label{thm:inexact-main}
For $E_B>0$, $T\ge1$, and any $z^\star\in\Omega$, the update $z^+=z-U^\top\lambda^T$ satisfies
\begin{equation}
\begin{aligned}
\|z^+-z^\star\|^2&\le\|z-z^\star\|^2-c_{B,T},\\
c_{B,T}&\ge E_B\max\left\{L_B^{-1},(r_B+L_B/T)^{-1}\right\}.
\end{aligned}
\end{equation}
If $E_B=0$, then $\lambda^T=0$, $c_{B,T}=0$, and $z^+=z$. Any fixed $T\ge1$ therefore permits updates without an inner convergence test.
\end{theorem}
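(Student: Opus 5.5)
The argument has three parts. We first write $\|z^+-z^\star\|^2$ in terms of the dual objective, which gives the first inequality. We then show that the dual PGD iterates never decrease $\phi$, so that $c_{B,T}\ge 2\phi(\lambda^1)\ge E_B/L_B$. Finally, we use the standard $O(1/T)$ projected-gradient rate against a scaled comparator to obtain the second lower bound.

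\textbf{Distance identity.} Write $\Delta=-U^\top\lambda$ with $\lambda=\lambda^T\ge0$. Then
\[
\|z^+-z^\star\|^2=\|z-z^\star\|^2-2\langle U^\top\lambda,z-z^\star\rangle+\|U^\top\lambda\|^2 .
\]
Since $z^\star\in\Omega\subseteq H_i^k$ for every selected row, we have $Uz^\star\le\beta_B$. Hence $U(z-z^\star)\ge Uz-\beta_B=s$, where $w=z$ is the inner iterate. Nonnegativity of $\lambda$ then gives $\langle\lambda,U(z-z^\star)\rangle\ge s^\top\lambda$, so
\[
\|z^+-z^\star\|^2\le\|z-z^\star\|^2-\bigl(2s^\top\lambda-\|U^\top\lambda\|^2\bigr)=\|z-z^\star\|^2-2\phi(\lambda^T).
\]
This proves the first line with $c_{B,T}=2\phi(\lambda^T)$. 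Zero-placeholder rows have $u=0$ and $\beta=0$, so they satisfy the containment trivially.

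\textbf{Monotonicity and the $L_B^{-1}$ bound.} The function $-\phi$ is convex with $\lambda_{\max}(G_B)$-Lipschitz gradient, and $L_B\ge\lambda_{\max}(G_B)$. The projected-gradient step with step $1/L_B$ onto $\mathbb R_+^b$ therefore satisfies the sufficient-decrease inequality, so $\phi(\lambda^{j+1})\ge\phi(\lambda^j)$ and $\phi(\lambda^T)\ge\phi(\lambda^1)$. From $\lambda^0=0$ we get $\lambda^1=[s]_+/L_B=\delta_B/L_B$. Substituting,
\[
\phi(\lambda^1)=\frac{E_B}{L_B}-\frac{\delta_B^\top G_B\delta_B}{2L_B^2}\ge\frac{E_B}{L_B}-\frac{E_B}{2L_B}=\frac{E_B}{2L_B},
\]
where we used $\delta_B^\top G_B\delta_B\le L_BE_B$ and $s^\top\delta_B=E_B$. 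Thus $c_{B,T}\ge E_B/L_B$.

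\textbf{The $T$-dependent bound.} This is the main step. For projected gradient with step $1/L_B$ started at $\lambda^0=0$, the standard rate is
\[
\phi(\mu)-\phi(\lambda^T)\le\frac{L_B\|\mu\|^2}{2T}\qquad\text{for every }\mu\ge0 .
\]
This is the usual telescoping bound, valid for any comparator $\mu$, not only the maximizer. We choose $\mu=t\delta_B$ with $t\ge0$, which gives
\[
\phi(t\delta_B)=tE_B-\tfrac12 t^2 r_BE_B .
\]
Hence
\[
2\phi(\lambda^T)\ge E_B\bigl(2t-t^2(r_B+L_B/T)\bigr).
\]
Maximizing the right-hand side at $t=(r_B+L_B/T)^{-1}$ yields $c_{B,T}\ge E_B(r_B+L_B/T)^{-1}$. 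Combining this with the previous paragraph gives the maximum of the two bounds. The delicate point is confirming that the comparator-form rate holds with constant $L_B$ rather than $\lambda_{\max}(G_B)$. It does, because the three-point inequality for the projected step only requires $L_B\ge\lambda_{\max}(G_B)$.

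\textbf{Case $E_B=0$.} Then $s\le0$, and we argue by induction that $\lambda^j=0$ for all $j$. If $\lambda^j=0$, then $\lambda^{j+1}=[s/L_B]_+=0$. Consequently $c_{B,T}=0$ and $z^+=z$. Zero blocks are the special case $U=0$, which the paper keeps fixed by convention. Since both bounds hold for every $T\ge1$, no inner convergence test is needed.
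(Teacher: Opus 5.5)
Your proposal is correct and follows the paper's proof essentially step for step. The paper also derives the distance descent from the dual value (its dual-descent lemma at $\eta=1$), gets $c_{B,T}\ge E_B/L_B$ from dual monotonicity and the first step $\lambda^1=\delta_B/L_B$, and obtains the second bound from the comparison inequality $\phi(\lambda^T)\ge\phi(\mu)-L_B\|\mu\|^2/(2T)$ with $\mu=a\delta_B$ optimized over $a\ge0$. The only difference is that you cite this comparator-form PGD rate as standard, whereas the paper derives it explicitly from projection optimality and the quadratic upper bound.
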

\textit{Proof sketch.}
Halfspace containment gives a squared-distance decrease of at least $2\phi(\lambda^T)$. The first PGD step and dual monotonicity yield $c_{B,T}\ge E_B/L_B$. Applying the PGD comparison bound to $\mu=a\delta_B$ and optimizing over $a\ge0$ gives the second term. Appendix~\ref{app:pgd-depth} gives the proof.

\begin{corollary}[Convergence and linear rate]\label{cor:progress}
With fixed $S,T\ge1$, the outer iterates converge to a point in $\Omega$. If $z^0\notin\Omega$ and there exist $\bar z$ and $\sigma>0$ such that $h_i(\bar z)\le-\sigma$ for all $i$, then the distance $d_k=\operatorname{dist}(z^k,\Omega)$ satisfies
\begin{equation}
d_K^2\le(1-\gamma)^Kd_0^2,\qquad
\gamma=\frac{b}{m\bar L\kappa_2^2},\quad
\kappa_2=\frac{M\|z^0-\bar z\|}{\sigma}.
\end{equation}
For an infeasible start, $0<\gamma\le1$; a feasible start stays fixed.
\end{corollary}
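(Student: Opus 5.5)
The plan is to chain Theorem~\ref{thm:inexact-main} over all block corrections. This gives Fejér monotonicity with respect to $\Omega$. We then obtain the linear rate by lower-bounding the first block's progress at each linearization with a Slater-type error bound.

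\textbf{Step 1 (Fejér monotonicity and convergence).} Within outer iteration $k$ the $S$ inner blocks all use halfspaces $H_i^k\supseteq\Omega$. Theorem~\ref{thm:inexact-main} applies to each of them and gives
\[
\|w_{t+1}-z^\star\|^2\le\|w_t-z^\star\|^2-c_{B_t,T}
\]
for every $z^\star\in\Omega$. The sequence of all inner iterates is therefore Fejér monotone, hence bounded, and $\sum c_{B,T}<\infty$.

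Since $c_{B,T}\ge E_B/\bar L$, the first block of iteration $k$ has $E_{B}\to0$. For $b\ge1$ that block contains the most violated normalized row at $w=z^k$. Its normalized score is $[h_i(z^k)]_+/\|a_i^k\|\ge[h_i(z^k)]_+/M$, so $\max_i[h_i(z^k)]_+\to0$.

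Boundedness gives a cluster point. Continuity of the finite convex $h_i$ puts that cluster point in $\Omega$. Fejér monotonicity then upgrades subsequential convergence to convergence of the whole sequence, by the standard argument. A feasible iterate produces $E_B=0$ for every block, so it stays fixed.

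\textbf{Step 2 (per-iteration decrease).} Take $z^\star=P_\Omega(z^k)$. Using only the first block, together with monotonicity of the remaining blocks, we get
\[
d_{k+1}^2\le\|z^{k+1}-z^\star\|^2\le d_k^2-E_{B}/\bar L.
\]
The top-$b$ selection captures at least a $b/m$ fraction of the total normalized squared violation. This gives
\[
E_B\ge\frac{b}{m}\sum_i\frac{[h_i(z^k)]_+^2}{\|a_i^k\|^2}\ge\frac{b}{mM^2}\max_i[h_i(z^k)]_+^2.
\]

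\textbf{Step 3 (Slater error bound, the main obstacle).} We need $\operatorname{dist}(z,\Omega)\le\frac{\|z-\bar z\|}{\sigma}\max_i[h_i(z)]_+$. Let $v=\max_i[h_i(z)]_+$ and set $\theta=\sigma/(\sigma+v)$. By convexity the point $\theta z+(1-\theta)\bar z$ satisfies
\[
h_i\le\theta v-(1-\theta)\sigma=0,
\]
so it lies in $\Omega$. Its distance from $z$ is $(1-\theta)\|z-\bar z\|\le v\|z-\bar z\|/\sigma$.

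Fejér monotonicity with $z^\star=\bar z$ only when $\bar z\in\Omega$, which holds since $h_i(\bar z)\le-\sigma$, gives $\|z^k-\bar z\|\le\|z^0-\bar z\|$. Hence $d_k\le\kappa_2\max_i[h_i(z^k)]_+/M$. Combining with Step 2 yields
\[
d_{k+1}^2\le\Bigl(1-\frac{b}{m\bar L\kappa_2^2}\Bigr)d_k^2,
\]
and iterating gives $d_K^2\le(1-\gamma)^Kd_0^2$.

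The delicate points are all in this combination. We must make sure the bound on $\|a_i^k\|$ is used only on violated rows, as the hypothesis allows. We must check that the selection fraction argument survives ties and zero-score padding. Finally, $\gamma\le1$ must be checked. This follows because $E_B/\bar L\le d_k^2$, which Step 2 forces, so that $d_{k+1}^2\ge0$; alternatively it follows directly from $\kappa_2\ge1$ and $\bar L\ge\lambda_{\max}(G_B)\ge b$-type normalization bounds when some row is violated. Positivity of $\gamma$ is immediate.
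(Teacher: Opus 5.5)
Your proposal is correct and takes essentially the paper's route. Fej\'er monotonicity comes from halfspace containment, first-block progress is $c\ge E_B/\bar L\ge (b/m)\Psi_k/\bar L$ by top-$b$ selection, and the Slater convex-combination error bound uses $\|z^k-\bar z\|\le\|z^0-\bar z\|$; this is exactly Corollary~\ref{cor:outer-general} combined with Lemma~\ref{lem:slater-error-bound}. Your main argument for $\gamma\le1$ (via $E_B/\bar L\le c_{B,T}\le d_k^2$) is also the paper's, and only your ``alternative'' is misstated: unit rows give $1\le\lambda_{\max}(G_B)\le b$, not $\ge b$, so that route should use $\bar L\ge1$, $b\le m$, and $\kappa_2>1$.
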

\textit{Proof sketch.}
Let $\Psi_k=\sum_i(\delta_i^k(z^k))^2$. The first top-$b$ block gives $d_{k+1}^2\le d_k^2-b\Psi_k/(m\bar L)$. Fej\'er monotonicity, bounded subgradients, and continuity imply feasible-point convergence. Under Slater's condition, $d_k^2\le\kappa_2^2\Psi_k$ yields the stated linear rate. Further inner updates preserve containment and add descent. Appendices~\ref{app:inner-composition}--\ref{app:block-gain} give the proofs.

\section{Experiments}\label{sec:experiments}
We evaluate postprocessing and joint training across convex constraint families.

\begin{samepage}
Gap averages per-instance percentage differences from reference objectives, retaining signs for second-order-cone model-predictive control (SOC-MPC) and clipping negatives elsewhere. Feasibility uses maximum original violation; statistics include failures. Neural inference uses RTX 4090 GPUs, full solvers CPUs. Appendix~\ref{app:training-protocol} specifies data, training, timing, and repair budgets. In all curves, only $\Pi$net's inference depth $K$ varies; checkpoints and other settings stay fixed, and every input takes $K$ steps.\par
\end{samepage}

\subsection{Quadrotor SOC-MPC}\label{sec:soc-mpc}
We predict state and control sequences for a six-state, two-input quadrotor model with linear dynamics and second-order-cone constraints, using horizons $H=20,40,80$. Neural methods train without labels for 100 epochs. Ours, $\Pi$net, and SnareNet share initial predictor weights and sample order. Table~\ref{tab:soc_mpc} uses $\Pi$net at $K=400$, the smallest evaluated fixed depth yielding full test feasibility at every horizon; Appendix~\ref{app:soc-mpc-controls} gives the full scan and original validation-selected depths.

\begin{table}[!htb]
\caption{SOC-MPC: $1{,}000$ tests per horizon; mean signed Gap and mean single-instance time. Bold marks best neural Gap/time. Gurobi/MOSEK use single-thread CPUs. ---: SnareNet training exceeded 24 GB on RTX 4090 with default XLA.}
\label{tab:soc_mpc}
\centering\setlength{\tabcolsep}{1pt}
\begin{minipage}[t]{0.323\linewidth}
\centering
\textbf{(a) $H=20$}\par
\begin{tabular*}{\linewidth}{@{\extracolsep{\fill}}lrrr@{}}
\toprule
Method & \metrichead{Gap}{\%} & \metrichead{Feas.}{\%} & \metrichead{Time}{ms} \\
\midrule
Gurobi & 0.000 & 100 & 20.809 \\
MOSEK & 0.000 & 100 & 4.974 \\
DC3 & 5.909 & 100 & \textbf{0.566} \\
$\Pi$net & \textbf{0.130} & 100 & 75.952 \\
SnareNet & 2.913 & 100 & 58.567 \\
Ours & 0.192 & 100 & 0.941 \\
\bottomrule
\end{tabular*}
\end{minipage}\hfill
\begin{minipage}[t]{0.323\linewidth}
\centering
\textbf{(b) $H=40$}\par
\begin{tabular*}{\linewidth}{@{\extracolsep{\fill}}lrrr@{}}
\toprule
Method & \metrichead{Gap}{\%} & \metrichead{Feas.}{\%} & \metrichead{Time}{ms} \\
\midrule
Gurobi & 0.000 & 100 & 57.957 \\
MOSEK & 0.000 & 100 & 10.023 \\
DC3 & 16.696 & 100 & 1.810 \\
$\Pi$net & 0.567 & 100 & 153.111 \\
SnareNet & 1.762 & 100 & 175.515 \\
Ours & \textbf{0.236} & 100 & \textbf{1.061} \\
\bottomrule
\end{tabular*}
\end{minipage}\hfill
\begin{minipage}[t]{0.323\linewidth}
\centering
\textbf{(c) $H=80$}\par
\begin{tabular*}{\linewidth}{@{\extracolsep{\fill}}lrrr@{}}
\toprule
Method & \metrichead{Gap}{\%} & \metrichead{Feas.}{\%} & \metrichead{Time}{ms} \\
\midrule
Gurobi & 0.000 & 100 & 59.254 \\
MOSEK & 0.000 & 100 & 20.470 \\
DC3 & 13.346 & 99.4 & 6.623 \\
$\Pi$net & 0.230 & 100 & 304.138 \\
SnareNet & --- & --- & --- \\
Ours & \textbf{0.162} & 100 & \textbf{3.125} \\
\bottomrule
\end{tabular*}
\end{minipage}
\end{table}

All $3{,}000$ test outputs from \method{} satisfy the $10^{-3}$ tolerance, with mean gaps below $0.240\%$ and inference times of $0.94$--$3.13$ ms. This gives an $80.7$--$144.3\times$ speedup over fixed-$400$-step $\Pi$net. $\Pi$net has a lower gap at $H=20$, while \method{} has lower gaps at $H=40$ and $H=80$. Figure~\ref{fig:soc_mpc_quality_latency} shows that $\Pi$net can attain small mean gaps well before all outputs meet the feasibility tolerance.

\begin{figure}[!htb]
\centering
\includegraphics[width=\linewidth]{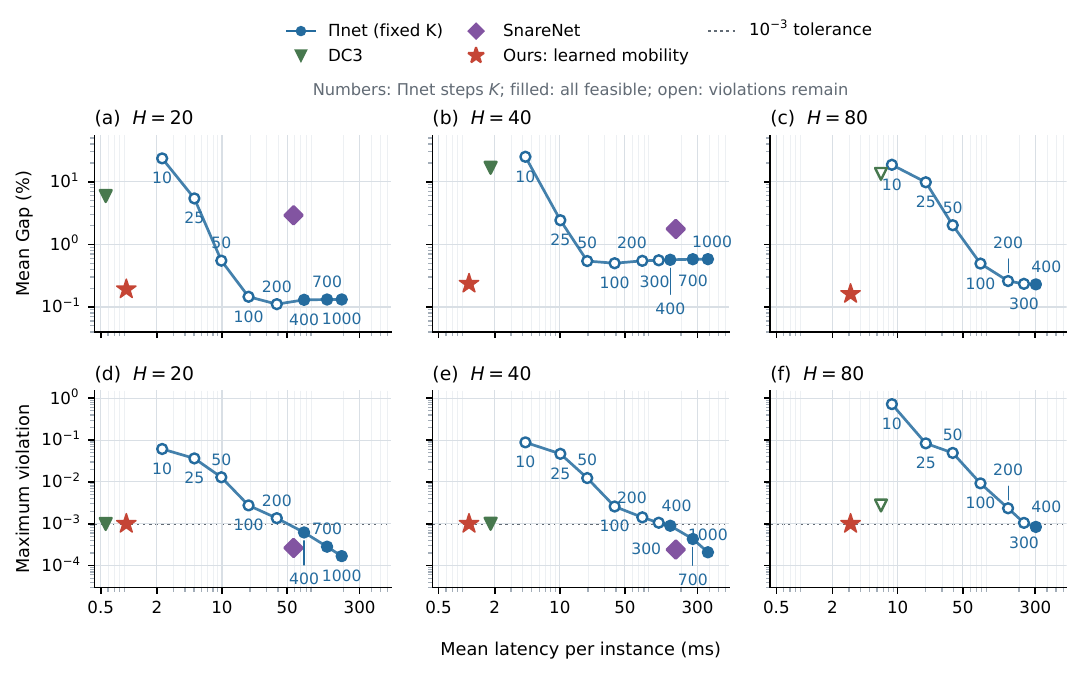}
\caption{Fixed-depth SOC-MPC: columns $H=20,40,80$; mean signed Gap (top) and maximum original violation (bottom) versus mean latency. Each point covers $1{,}000$ tests at tolerance $10^{-3}$. Other methods: Table~\ref{tab:soc_mpc} settings. Full values: Appendix~\ref{app:soc-mpc-controls}.}
\label{fig:soc_mpc_quality_latency}
\end{figure}

\subsection{Power-system convex relaxations}\label{sec:acopf}
We evaluate second-order-cone programming (SOCP) and semidefinite programming (SDP) relaxations of alternating-current optimal power flow (ACOPF) on 57-, 118-, and 300-bus systems. The cases contain $1{,}004$--$5{,}658$ inequalities, with $200$, $400$, and $1{,}000$ test instances at the three sizes. Three variants fine-tune (FT) a common pretrained predictor for $100$ epochs: full FT post updates the entire predictor, head FT post only its solution head, both with identity-mobility repair; joint freezes the trunk and learns solution and mobility heads. External postprocessors share raw predictions only with full FT post. Joint training supervises both raw and corrected predictions. Feasibility is evaluated at tolerance $10^{-3}$; Appendix~\ref{app:training-protocol} gives budgets and stopping rules.

\begin{table}[!htb]
\caption{SOCP/SDP: mean Gap and mean single-instance time, including failures. External postprocessors share full FT post's raw predictions. Bold compares our variants, including rounded ties; ---: case-300 SnareNet was too slow to evaluate fully.}
\label{tab:acopf_socp}\label{tab:acopf_sdp}
\centering\setlength{\tabcolsep}{1.5pt}
\begin{tabular}{@{}l rrr@{\hspace{8pt}}rrr@{\hspace{8pt}}rrr@{}}\toprule
Method & \multicolumn{3}{c}{57 buses} & \multicolumn{3}{c}{118 buses} & \multicolumn{3}{c}{300 buses}\\
\cmidrule(lr){2-4}\cmidrule(lr){5-7}\cmidrule(lr){8-10}
 & \metrichead{Gap}{\%} & \metrichead{Feas.}{\%} & \metrichead{Time}{ms} & \metrichead{Gap}{\%} & \metrichead{Feas.}{\%} & \metrichead{Time}{ms} & \metrichead{Gap}{\%} & \metrichead{Feas.}{\%} & \metrichead{Time}{ms}\\\midrule
\multicolumn{10}{c}{\textbf{(a) SOCP}}\\

MOSEK & 0.000 & 100 & 31.234 & 0.000 & 100 & 108.978 & 0.000 & 100 & 410.376 \\
$\Pi$net & 0.059 & 100 & 203.366 & 0.396 & 98.5 & 2169.503 & 0.106 & 99.2 & 20934.989 \\
FSNet & 0.053 & 100 & 26.556 & 0.395 & 97.0 & 355.968 & 0.112 & 8.5 & 3199.870 \\
SnareNet & 0.116 & 100 & 48.862 & 0.530 & 88.25 & 6452.791 & --- & --- & --- \\
Ours-post (full) & 0.051 & 100 & 0.579 & 0.419 & 100 & 3.279 & 0.098 & 100 & 41.676 \\
Ours-post (head) & 0.051 & 100 & \textbf{0.571} & 0.184 & 100 & \textbf{3.160} & 0.041 & 100 & 35.788 \\
Ours-joint & \textbf{0.050} & 100 & 0.676 & \textbf{0.167} & 100 & 3.386 & \textbf{0.037} & 100 & \textbf{22.672} \\
\midrule\multicolumn{10}{c}{\textbf{(b) SDP}}\\

MOSEK & 0.000 & 100 & 192.095 & 0.000 & 100 & 357.923 & 0.000 & 100 & 991.394 \\
LMI-Net & 0.076 & 100 & 14.768 & 0.658 & 100 & 93.379 & 0.086 & 100 & 2515.586 \\
SnareNet & 0.163 & 87.5 & 9831.992 & 0.706 & 26.25 & 60196.882 & --- & --- & --- \\
Ours-post (full) & 0.075 & 100 & 5.522 & 0.373 & 100 & 17.659 & 0.052 & 100 & 328.205 \\
Ours-post (head) & \textbf{0.064} & 100 & \textbf{4.614} & 0.165 & 100 & 13.780 & 0.054 & 100 & 188.657 \\
Ours-joint & \textbf{0.064} & 100 & 4.933 & \textbf{0.144} & 100 & \textbf{12.744} & \textbf{0.038} & 100 & \textbf{183.702} \\
\bottomrule\end{tabular}
\end{table}

All three variants in Table~\ref{tab:acopf_socp} satisfy the $10^{-3}$ tolerance on all six ACOPF benchmarks. On SDP300, full FT post yields a lower mean gap than LMI-Net with a $7.7\times$ inference speedup on identical raw predictions. Joint training with learned mobility matches or improves on head FT post in mean gap on every case.

With the same head-only joint-training scope, task loss, and repair budget, adding mobility reduces mean gap by $11.0\%$ on SOCP300 and $8.1\%$ on SDP300 relative to identity mobility. Appendices~\ref{app:acopf-mobility}--\ref{app:dc3-acopf} give full controls, failure residuals, and DC3 evaluations.

\FloatBarrier
\subsection{Nonconvex optimization with affine constraints}\label{sec:affine}
The public $\Pi$net task~\citep{grontas2026pinet} has $100$ variables, $50$ equalities, $50$ linear inequalities, and sine terms in its objective. Five seeds each test all $1{,}024$ instances. Methods train for $50$ epochs except DC3, which retains its official $1{,}000$-epoch setting; JAXopt uses differentiable OSQP projection. Our two variants differ only in learning mobility.

\begin{table}[!htb]
\caption{Affine task: five-seed means, tolerance $10^{-3}$. Single-instance latency averages seed timing medians. Identity/learned refers to mobility. Full statistics: Appendix~\ref{app:pinet31-controls}.}\label{tab:pinet31}
\centering\setlength{\tabcolsep}{4pt}
\begin{tabular*}{\linewidth}{@{\extracolsep{\fill}}lrrrrrr@{}}\toprule
 & DC3 & JAXopt & SnareNet & $\Pi$net ($K=50$) & Ours (identity) & Ours (learned)\\\midrule
Gap (\%) & 7.894 & 0.388 & 6.169 & 0.331 & 0.245 & \textbf{0.190} \\
Feas. (\%) & 99.824 & 100 & 100 & 100 & 100 & 100 \\
Time (ms) & 3.043 & 7.226 & 3.041 & 0.642 & \textbf{0.410} & 0.523 \\
\bottomrule\end{tabular*}
\end{table}

Table~\ref{tab:pinet31} shows that learning mobility reduces the five-seed mean gap from $0.245\%$ to $0.190\%$, with improvements in four seeds. Figure~\ref{fig:affine_quality_latency} shows a lower gap than the fully feasible $K=40$ $\Pi$net configuration at comparable single-instance latency, $0.523$ versus $0.533$ ms.

\begin{figure}[!htb]
\centering
\includegraphics[width=0.92\linewidth]{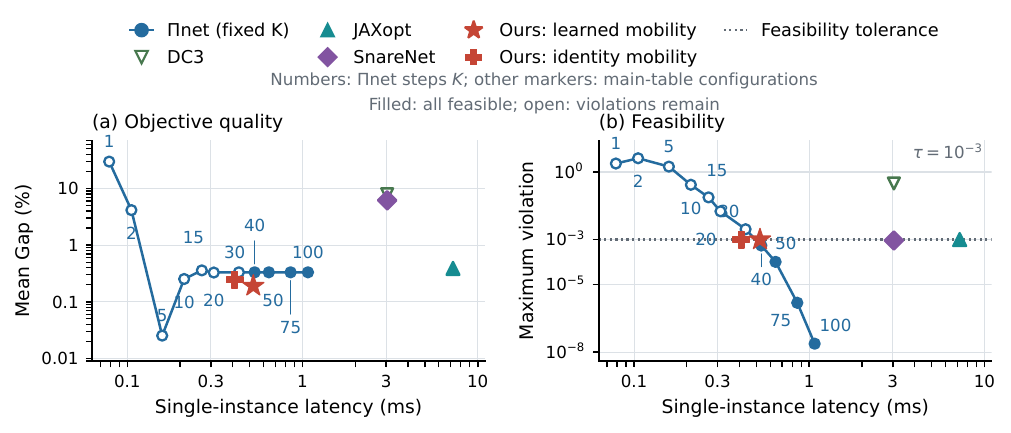}
\caption{Affine-task fixed-depth comparison: mean nonnegative Gap (left) and maximum violation over $5{,}120$ outputs (right) versus latency; both axes are logarithmic. Tolerance is $10^{-3}$. Gap and latency average five seeds. Full values: Appendix~\ref{app:pinet31-controls}.}
\label{fig:affine_quality_latency}
\end{figure}

\FloatBarrier
\subsection{Convex quadratically constrained optimization}\label{sec:qcqp}
SnareNet's public convex quadratically constrained quadratic program (QCQP)~\citep{chu2026snarenet} has $100$ variables, $100$ quadratic inequalities, and $50$ affine equalities. All methods train for $1{,}500$ epochs with one training seed and test all $833$ instances. Joint models train through their layers; our postprocessor uses identity mobility. SnareNet retains AdaRel; ours uses original constraints. $\Pi$net converts quadratic inequalities to SOCs; its tabulated joint/post depths and DC3's test repair are validation-selected.

\begin{table}[!htb]
\caption{QCQP: all seven configurations achieve $100\%$ feasibility at $10^{-4}$. Mean nonnegative Gap and prediction-plus-repair latency, averaged over three complete test passes. Bold marks best Gap/time per mode, including rounded ties.}\label{tab:qcqp}
\centering\setlength{\tabcolsep}{4pt}
\begin{minipage}[t]{0.47\linewidth}
\vspace{0pt}\centering
\textbf{(a) Joint}\par
\begin{tabular*}{\linewidth}{@{\extracolsep{\fill}}lrr@{}}\toprule
Method & Gap (\%) & Time (ms)\\\midrule
DC3 & 66.918 & \textbf{0.600} \\
SnareNet & 0.159 & 18.963 \\
$\Pi$net ($K=200$) & 0.040 & 6.126 \\
Ours & \textbf{0.012} & 2.469 \\
\bottomrule\end{tabular*}
\end{minipage}\hfill
\begin{minipage}[t]{0.47\linewidth}
\vspace{0pt}\centering
\textbf{(b) Post}\par
\begin{tabular*}{\linewidth}{@{\extracolsep{\fill}}lrr@{}}\toprule
Method & Gap (\%) & Time (ms)\\\midrule
SnareNet & 3.963 & 13.846 \\
$\Pi$net ($K=400$) & \textbf{3.942} & 11.880 \\
Ours & \textbf{3.942} & \textbf{0.617} \\
\bottomrule\end{tabular*}
\end{minipage}
\end{table}

With joint training, \method{} returns feasible outputs on all $833$ test inputs and is $7.7\times$ faster than SnareNet with a lower mean gap (Table~\ref{tab:qcqp}). It also improves on the fully feasible $K=100$ $\Pi$net configuration in both mean gap ($0.012\%$ versus $0.040\%$) and latency ($2.469$ versus $3.175$ ms).

Figure~\ref{fig:qcqp_quality_latency} compares three postprocessors sharing one soft-constraint predictor. Ours attains a $3.942\%$ mean gap, comparable to fixed-$400$-step $\Pi$net, with $0.617$ ms inference and a $22.4\times$ speedup over SnareNet.

\begin{figure}[!t]
\centering
\includegraphics[width=0.90\linewidth]{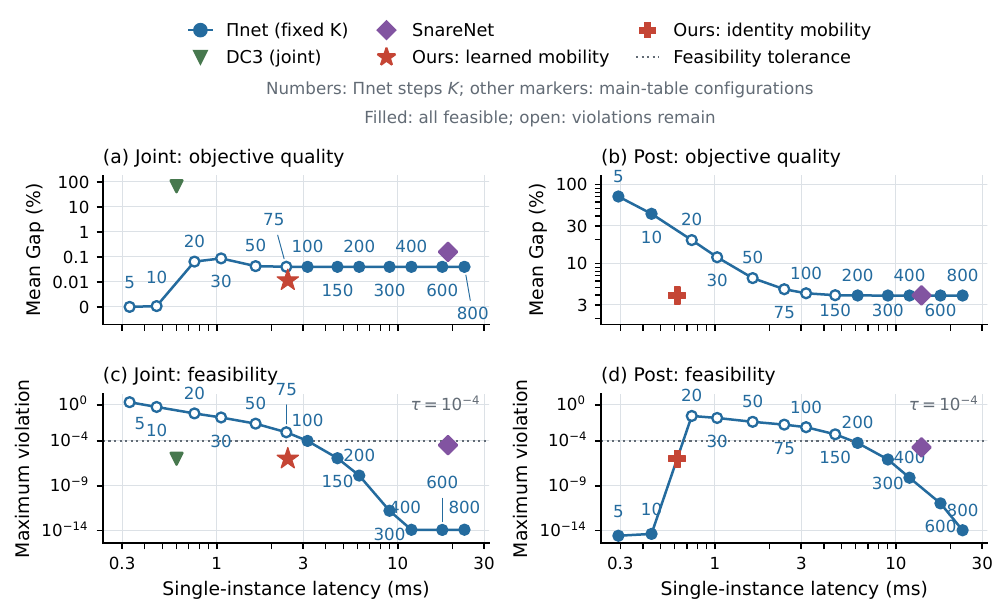}
\caption{QCQP fixed-depth sweep: joint/post columns, mean Gap/maximum original violation rows, and mean latency; all $833$ tests per point at tolerance $10^{-4}$. Axes are logarithmic except joint Gap on $[0,0.01\%]$. Full values: Appendix~\ref{app:qcqp-details}.}
\label{fig:qcqp_quality_latency}
\end{figure}

\FloatBarrier
\subsection{Block correction ablations}\label{sec:block-ablation-main}
Table~\ref{tab:block_comparison}'s operations share initial points, identity mobility, FP64, and $10^{-3}$ tolerance. Violation-GD and dual PGD match block sizes and iteration budgets. Block principal pivoting (BPP) allows $16$ pivots. Speedups over extrapolated minibatch stochastic subgradient projection (M-SSP)~\citep{necoara2019minibatch} use paired timings on the same GPU, as detailed in Appendix~\ref{app:block-ablation}.

\begin{table}[!htbp]
\caption{Median time to feasibility (ms), small/medium per cell, five successful instances each. Small/medium $(n,n_{\rm eq},m)=(256,32,2048)/(512,64,4096)$. }
\label{tab:synthetic_constraints}
\centering\setlength{\tabcolsep}{4pt}
\begin{tabular}{@{}lr@{\,/\,}rr@{\,/\,}rr@{\,/\,}r@{}}\toprule
Method & \multicolumn{2}{c}{Squared affine} & \multicolumn{2}{c}{Log-sum-exp} & \multicolumn{2}{c}{Rotated SOC}\\\midrule
Single row (frozen) & 29.745 & 107.956 & 34.383 & 100.015 & 18.444 & 66.106 \\
M-SSP & 16.916 & 41.322 & 23.284 & 67.219 & 11.842 & 73.798 \\
Violation-GD & 4.076 & 5.004 & 3.438 & 4.257 & 2.790 & 3.585 \\
BPP-16 & 54.299 & 43.175 & 33.791 & 28.204 & 29.333 & 24.068 \\
Dual PGD (ours) & \textbf{3.470} & \textbf{4.759} & \textbf{3.047} & \textbf{3.822} & \textbf{2.363} & \textbf{3.201} \\
\bottomrule\end{tabular}
\end{table}

Dual PGD is fastest across all six groups (Table~\ref{tab:synthetic_constraints}): $1.06$--$1.17\times$ over Violation-GD, $4.88$--$19.56\times$ over M-SSP, and $7.82$--$27.35\times$ over single-row repair with matched relinearization, using paired timings.

\section{Conclusion}
\method{} unifies convex constraint repair through residual and (sub)gradient oracles. GPU-friendly fixed-depth dual PGD gives outer convergence and linear convergence in distance under Slater's condition and bounded subgradients. Control, power-system, and public optimization tasks demonstrate low inference cost; mobility learned jointly with the predictor further improves objective gaps.

\clearpage
\subsection*{AI use statement}
In this work, we used generative AI tools to help refine theoretical models and mathematical arguments, assist with writing and reviewing mathematical proofs, assist with method implementation based on algorithms and pseudocode provided by the authors, and assist with translation into English. We did not use generative AI tools to generate experimental data, propose or refine research hypotheses, design or provide feedback on research methods or experiments, clean or reformat datasets, or interpret experimental results; the remaining tasks requiring disclosure are not applicable to this work. Additionally, we used generative AI tools to assist with writing and checking experimental code, literature search and synthesis, identifying relevant literature, and editing author-written text to improve readability. All AI-assisted work was reviewed by the authors. The authors take responsibility for the final content of this work, including text, claims, and other research materials produced with the assistance of generative AI.

\subsection*{Reproducibility statement}
Appendix~\ref{app:proofs} provides the full assumptions and proofs. Appendix~\ref{app:training-protocol} specifies data splits, training settings, stopping rules, mobility implementation, and complexity. Appendix~\ref{app:oracles} defines task constraints and oracles. Appendix~D provides all controls, fixed-depth sweeps, baseline failure records, and timing protocols.

\bibliography{references}
\bibliographystyle{iclr2027_conference}

\appendix
\section{Full Theoretical Analysis}
\label{app:proofs}

Fix an input $x$. We analyze Euclidean correction coordinates, applying the change of coordinates below first when mobility is fixed and positive definite. Assume $\Omega\ne\varnothing$, each $h_i:\mathbb R^q\to\mathbb R$ is finite and convex, and $a_i^k\in\partial h_i(z^k)$ is available. Finite convex functions on $\mathbb R^q$ are continuous, hence $\Omega$ is closed and convex. Bounds involving true violations also assume $\|a_i^k\|\le M<\infty$ for violated rows on the iterate region. The algorithm uses zero stopping tolerance and stays fixed after feasibility. Every local PGD solve starts at zero; its rows, signed residuals, and step size are fixed within that solve but may change between blocks. Norms below are Euclidean, except that $\|G\|_\infty$ denotes the maximum absolute row sum.

\subsection{Halfspace containment and block notation}
\label{app:mobility-theory}
\paragraph{Coordinate convention.}
For $P_z\succ0$ fixed during one repair, take its symmetric positive-definite square root and let $\xi=P_z^{-1/2}z\in\mathbb R^q$. Then $\widetilde h_i(\xi)=h_i(P_z^{1/2}\xi)$ remains convex, has subgradient $P_z^{1/2}a_i$, and satisfies
\begin{equation}
\operatorname{dist}(\xi,P_z^{-1/2}\Omega)
=\inf_{z^\star\in\Omega}\|z-z^\star\|_{P_z^{-1}}.
\end{equation}
The rectangular factor $R$ from Appendix~\ref{app:mobility-implementation} satisfies $RR^\top=P_z$. The transformed rows have the same pairwise inner products $a_i^\top P_z a_j$ under either representation, and hence identical row norms, normalized residual scores, and normalized Gram matrices. Given the same block choices and step sizes, zero-initialized finite dual PGD produces the same dual iterates and the same correction mapped back to $z$. Below we again call the transformed objects $z,h_i,\Omega$; subgradient and error-bound constants refer to these coordinates. Invertibility of $P_z^{1/2}$ makes feasible-point convergence equivalent in both coordinates.

\begin{lemma}[Nonzero subgradient at a violated row]
\label{lem:nonzero}
If $h_i(z)>0$, $a_i\in\partial h_i(z)$, and $\Omega\ne\varnothing$, then $a_i\ne0$.
\end{lemma}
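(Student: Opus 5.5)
The plan is a proof by contradiction that uses only the subgradient inequality and the nonemptiness of $\Omega$. Suppose $h_i(z)>0$ and $a_i\in\partial h_i(z)$, but $a_i=0$. The definition of a subgradient of the finite convex function $h_i$ at $z$ gives, for every $z'\in\mathbb R^q$,
\begin{equation*}
h_i(z')\ge h_i(z)+\langle a_i,z'-z\rangle=h_i(z).
\end{equation*}
So $z$ would be a global minimizer of $h_i$, and $h_i(z')\ge h_i(z)>0$ would hold for all $z'$.

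Next, invoke $\Omega\ne\varnothing$. Pick any $z^\star\in\Omega$. By the definition of $\Omega$ in the reduced setting of~\eqref{eq:reduced-setting}, every inequality holds at $z^\star$, and in particular $h_i(z^\star)\le0$. Setting $z'=z^\star$ in the display gives $0\ge h_i(z^\star)\ge h_i(z)>0$, a contradiction. Hence $a_i\ne0$.

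There is no substantive obstacle. The only point to check is that the argument does not depend on the coordinate convention. In the mobility-transformed coordinates of Appendix~\ref{app:mobility-theory}, the transformed subgradient is $P_z^{1/2}a_i$. Since $P_z^{1/2}$ is invertible, this vector is nonzero if and only if $a_i$ is nonzero. So the lemma holds in either coordinate system, and it justifies both the normalization $u_i^k=a_i^k/\|a_i^k\|$ for violated rows and the single-row correction $\Delta_i^k$.

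Finally, I would note that $M$ plays no role and only the violated row $i$ needs a feasible point. In fact the argument gives the quantitative bound $\|a_i\|\,\|z-z^\star\|\ge h_i(z)-h_i(z^\star)\ge h_i(z)$, which could later be combined with Slater's condition to obtain error-bound constants.
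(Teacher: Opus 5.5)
Your proof is correct and matches the paper's: assuming $a_i=0$ makes $z$ a global minimizer of $h_i$ by the subgradient inequality, and any $z^\star\in\Omega$ then gives $h_i(z)\le h_i(z^\star)\le 0$, contradicting $h_i(z)>0$. Your extra remarks on coordinate invariance under $P_z^{1/2}$ and the quantitative bound $\|a_i\|\,\|z-z^\star\|\ge h_i(z)$ are also valid, though the paper does not need them here.
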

\begin{proof}
If $a_i=0$, the subgradient inequality makes $z$ a global minimizer of $h_i$. Taking $z^\star\in\Omega$ would give $h_i(z)\le h_i(z^\star)\le0$, a contradiction.
\end{proof}

\begin{lemma}[Feasible-set containment]
\label{lem:containment}
For $a_i\in\partial h_i(z)$, define $H_i(z)=\{w:h_i(z)+\langle a_i,w-z\rangle\le0\}$. Then $\Omega\subseteq H_i(z)$, so any selected intersection also contains $\Omega$.
\end{lemma}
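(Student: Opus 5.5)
The argument uses only the subgradient inequality, applied at the linearization point $z$ rather than at the query point $w$. Fix $w\in\Omega$. Since $a_i\in\partial h_i(z)$, convexity of $h_i$ gives
\[
h_i(w)\ge h_i(z)+\langle a_i,w-z\rangle .
\]
Feasibility of $w$ means $h_i(w)\le0$. Chaining the two inequalities yields $h_i(z)+\langle a_i,w-z\rangle\le0$, which is exactly $w\in H_i(z)$. As $w\in\Omega$ was arbitrary, $\Omega\subseteq H_i(z)$.

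For the second claim, the inclusion holds for every index $i$ separately. Hence $\Omega\subseteq\bigcap_{i\in B}H_i(z)$ for any selected block $B$.

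Two degenerate cases need a word but cause no trouble.
\begin{itemize}
\item If $a_i=0$, then $H_i(z)$ is either the whole space or empty. The displayed inequality still shows that $\Omega\subseteq H_i(z)$, and with $\Omega\ne\varnothing$ this forces $h_i(z)\le0$, in line with Lemma~\ref{lem:nonzero}.
\item For nonzero rows, the normalized description $\{w:(u_i)^\top w\le\beta_i\}$ from Section~\ref{sec:single-block} is the same set, obtained by dividing by $\|a_i\|>0$.
\end{itemize}

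Containment should also hold across inner iterates, because the algorithm keeps reusing one linearization at $z^k$ while $w$ moves. This is handled by the fact that $H_i(z)$ depends only on the point $z$ where the subgradient was taken, not on $w$. The inclusion is therefore unaffected.

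There is no real obstacle here; the only care needed is this bookkeeping of the linearization point.
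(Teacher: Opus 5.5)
Your proof is correct and matches the paper's argument: apply the subgradient inequality at $z$ to an arbitrary feasible point, chain it with $h_i(z^\star)\le0$, and intersect over the selected rows. Your remarks on zero-subgradient rows, the normalized halfspace form, and reuse of one linearization across inner iterates are accurate extras that the lemma itself does not need.
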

\begin{proof}
For $z^\star\in\Omega$, the subgradient inequality gives $h_i(z)+\langle a_i,z^\star-z\rangle\le h_i(z^\star)\le0$. Intersect over selected rows.
\end{proof}

For nonzero rows set $u_i=a_i/\|a_i\|$ and $s_i=h_i(z)/\|a_i\|$. A satisfied zero-subgradient row can be represented by $(u_i,s_i)=(0,0)$. Let $U$ stack the normalized rows of a block $B$ and let $s$ contain their signed residuals. Then
\begin{equation}
\mathcal C_B(z)=\{\Delta:U\Delta+s\le0\},\qquad
\Omega-z\subseteq\mathcal C_B(z).
\end{equation}
In particular this polyhedron is nonempty. Define
\begin{equation}
G_B=UU^\top,\quad \rho_B=\lambda_{\max}(G_B),\quad
\delta_B=[s]_+,\quad E_B=\|\delta_B\|^2,\quad
D_B=\|P_{\mathcal C_B(z)}(0)\|.
\end{equation}
The full Gram matrix includes satisfied filler rows with negative slack; the PGD spectral condition refers to this full matrix. For $E_B>0$, set $r_B=\delta_B^\top G_B\delta_B/E_B$. Every feasible $\Delta$ satisfies $\delta_B^\top U\Delta\le-\delta_B^\top s=-E_B$. Therefore $U^\top\delta_B\ne0$, and $0<r_B\le\rho_B$. The ratio is unused when $E_B=0$.

An implementation may use a norm floor $\nu_i=\max\{\|a_i\|,\varepsilon_{\rm norm}\}$ and normalize both row and residual by $\nu_i$. Containment is unchanged and row norms are at most one. The proofs then use those normalized violations and replace $M$ in residual bounds by $\max\{M,\varepsilon_{\rm norm}\}$.

\subsection{Dual progress for PGD alone}
\label{app:dual-proof}
The block projection and its dual are
\begin{equation}
\min_\Delta\tfrac12\|\Delta\|^2\quad\text{s.t. }U\Delta+s\le0,
\qquad
\max_{\lambda\ge0}\phi(\lambda)=s^\top\lambda-\tfrac12\lambda^\top G_B\lambda.
\end{equation}
Minimizing the Lagrangian over $\Delta$ gives $\Delta=-U^\top\lambda$.

\begin{lemma}[Dual value controls distance descent]
\label{lem:dual-descent}
For any $\lambda\ge0$, let $\Delta=-U^\top\lambda$ and $z^+=z+\eta\Delta$ with $0<\eta\le1$. Then, for every $z^\star\in\Omega$,
\begin{equation}
\|z^+-z^\star\|^2\le\|z-z^\star\|^2
-2\eta\phi(\lambda)-\eta(1-\eta)\|U^\top\lambda\|^2.
\end{equation}
\end{lemma}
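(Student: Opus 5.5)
The plan is to expand the square and use halfspace containment from Lemma~\ref{lem:containment}. Write $z^+-z^\star=(z-z^\star)+\eta\Delta$, so
\begin{equation*}
\|z^+-z^\star\|^2=\|z-z^\star\|^2+2\eta\langle \Delta,z-z^\star\rangle+\eta^2\|\Delta\|^2 .
\end{equation*}
With $\Delta=-U^\top\lambda$, the cross term becomes $2\eta\langle\lambda,U(z^\star-z)\rangle$. Set $\Delta^\star=z^\star-z$. Then $\Delta^\star\in\Omega-z\subseteq\mathcal C_B(z)$, so $U\Delta^\star+s\le0$ componentwise. Because $\lambda\ge0$, this gives $\langle\lambda,U\Delta^\star\rangle\le-s^\top\lambda$. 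Hence the cross term is at most $-2\eta s^\top\lambda$.

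Next I combine this with the quadratic term. Using $\|\Delta\|^2=\|U^\top\lambda\|^2=\lambda^\top G_B\lambda$, we get
\begin{equation*}
\|z^+-z^\star\|^2\le\|z-z^\star\|^2-2\eta s^\top\lambda+\eta^2\|U^\top\lambda\|^2 .
\end{equation*}
Now rewrite $\eta^2=\eta-\eta(1-\eta)$. This turns $-2\eta s^\top\lambda+\eta^2\|U^\top\lambda\|^2$ into
\begin{equation*}
-2\eta\bigl(s^\top\lambda-\tfrac12\|U^\top\lambda\|^2\bigr)-\eta(1-\eta)\|U^\top\lambda\|^2=-2\eta\phi(\lambda)-\eta(1-\eta)\|U^\top\lambda\|^2,
\end{equation*}
which is the claimed inequality.

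The only point needing care is the sign bookkeeping in the containment step. Nonnegativity of $\lambda$ is what lets the componentwise inequality $U\Delta^\star\le-s$ pass through the inner product. The filler rows with $(u_i,s_i)=(0,0)$, and the satisfied rows with negative slack, need no special treatment, because containment holds for every row of the block. Nothing else is an obstacle. The condition $\eta\le1$ is used only to make the last term nonpositive, and the identity itself holds for any $\eta$.
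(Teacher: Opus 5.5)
Your proof is correct and follows the paper's argument: containment $U(z^\star-z)+s\le0$, paired with $\lambda\ge0$, bounds the cross term by $-2\eta s^\top\lambda$, and expanding the square and writing $\eta^2=\eta-\eta(1-\eta)$ gives the $\phi$ form. One small correction to your closing remark: the cross-term bound is where $\eta>0$ is actually used, while the inequality itself holds for any $\eta>0$, so $\eta\le1$ is needed only to read the last term as extra descent.
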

\begin{proof}
Containment gives $U(z^\star-z)+s\le0$. Multiplication by $\lambda\ge0$ yields
\begin{equation}
\langle-U^\top\lambda,z-z^\star\rangle
=\lambda^\top U(z^\star-z)\le-s^\top\lambda.
\end{equation}
Expand the squared distance after the update and substitute the definition of $\phi$.
\end{proof}
The lemma also applies to correctly shifted frozen linearizations in Appendix~\ref{app:inner-composition}.

\begin{lemma}[Block progress at a stable PGD step size]
\label{lem:pgd-ascent}
For $G_B\ne0$, let $\lambda^0=0$ and $\lambda^{t+1}=[\lambda^t+\alpha(s-G_B\lambda^t)]_+$ with $0<\alpha<2/\rho_B$. For $q_t=\lambda^{t+1}-\lambda^t$,
\begin{equation}
\phi(\lambda^{t+1})-\phi(\lambda^t)
\ge(1/\alpha-\rho_B/2)\|q_t\|^2.
\end{equation}
If $E_B>0$, then for every $T\ge1$,
\begin{equation}
c_{B,T}:=2\phi(\lambda^T)
\ge\alpha(2-\alpha r_B)E_B
\ge\alpha(2-\alpha\rho_B)E_B>0.
\end{equation}
If $E_B=0$, all $\lambda^t$ are zero.
\end{lemma}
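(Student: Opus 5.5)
The argument has three parts: the per-step ascent inequality, a lower bound on $\phi(\lambda^1)$ obtained by evaluating the first step, and monotonicity to pass from $\lambda^1$ to $\lambda^T$.

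\textbf{Ascent inequality.} We use the standard descent lemma for projected gradient on the concave quadratic $\phi$. The gradient $\nabla\phi=s-G_B\lambda$ is $\rho_B$-Lipschitz, so for any $q$,
\[
\phi(\lambda^t+q)\ge\phi(\lambda^t)+\langle\nabla\phi(\lambda^t),q\rangle-\tfrac{\rho_B}{2}\|q\|^2 .
\]
Since $\lambda^{t+1}$ is the projection onto $\mathbb R^m_+$ of $\lambda^t+\alpha\nabla\phi(\lambda^t)$, the variational inequality for projections with test point $\lambda^t\ge0$ gives
\[
\langle\lambda^t+\alpha\nabla\phi(\lambda^t)-\lambda^{t+1},\,\lambda^t-\lambda^{t+1}\rangle\le0 .
\]
Rearranged, this is $\langle\nabla\phi(\lambda^t),q_t\rangle\ge\|q_t\|^2/\alpha$. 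Substituting into the descent lemma yields $\phi(\lambda^{t+1})-\phi(\lambda^t)\ge(1/\alpha-\rho_B/2)\|q_t\|^2$. The coefficient is nonnegative because $\alpha<2/\rho_B$, so $\phi(\lambda^t)$ is nondecreasing in $t$.

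\textbf{First step.} Starting from $\lambda^0=0$ we get $\lambda^1=[\alpha s]_+=\alpha\delta_B$. Evaluating directly,
\[
\phi(\alpha\delta_B)=\alpha s^\top\delta_B-\tfrac{\alpha^2}{2}\delta_B^\top G_B\delta_B .
\]
Because $s^\top[s]_+=\|[s]_+\|^2=E_B$, this equals $\alpha E_B-\tfrac{\alpha^2}{2}r_BE_B$. Hence $2\phi(\lambda^1)=\alpha(2-\alpha r_B)E_B$. Monotonicity then gives $c_{B,T}=2\phi(\lambda^T)\ge2\phi(\lambda^1)$ for every $T\ge1$. The remaining inequalities follow from $0<r_B\le\rho_B$, established in the preceding notation paragraph: the first gives the comparison with $\alpha(2-\alpha\rho_B)E_B$, and $\alpha<2/\rho_B$ makes that quantity strictly positive. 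The case $E_B=0$ is immediate. Then $s\le0$, so $\lambda^1=[\alpha s]_+=0$, and induction keeps every iterate at zero because $\lambda^t=0$ implies $\lambda^{t+1}=[\alpha s]_+=0$.

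\textbf{Main obstacle.} The step needing the most care is the variational-inequality bound, in particular keeping the sign conventions straight for ascent rather than descent. A secondary issue is the hypothesis $G_B\ne0$, which only ensures the step-size range $0<\alpha<2/\rho_B$ is meaningful. When $E_B>0$ it is automatic, since $U^\top\delta_B\ne0$. The first-step computation itself is routine once one notes that $s^\top[s]_+=E_B$.
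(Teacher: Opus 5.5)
Your proposal is correct and follows essentially the same route as the paper. You combine projection optimality (with $\lambda^t$ as comparator) and the $\rho_B$-bounded quadratic expansion to get the per-step ascent, then use the explicit first step $\lambda^1=\alpha\delta_B$ with $s^\top\delta_B=E_B$ and dual monotonicity; the only cosmetic difference is that the paper phrases the ascent step via $F=-\phi$ and its exact quadratic expansion rather than the descent lemma.
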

\begin{proof}
Set $F=-\phi$. Projection optimality with $\lambda^t$ as comparator gives $\langle\nabla F(\lambda^t),q_t\rangle\le-\|q_t\|^2/\alpha$. Since $F$ is quadratic with Hessian $G_B$,
\begin{equation}
F(\lambda^{t+1})-F(\lambda^t)
=\langle\nabla F(\lambda^t),q_t\rangle+\tfrac12q_t^\top G_Bq_t
\le-(1/\alpha-\rho_B/2)\|q_t\|^2.
\end{equation}
The first step is $\lambda^1=\alpha\delta_B$, and $s^\top\delta_B=E_B$, so $2\phi(\lambda^1)=2\alpha E_B-\alpha^2\delta_B^\top G_B\delta_B=\alpha(2-\alpha r_B)E_B$. Later dual values are nondecreasing. When $E_B=0$, $s\le0$ and zero remains fixed.
\end{proof}
Weak duality and containment imply $c_{B,T}\le D_B^2\le\operatorname{dist}(z,\Omega)^2$.

\subsection{Standard step size and finite dual depth}
\label{app:pgd-depth}
Assume $E_B>0$ and use $\alpha=1/L_B$ with $L_B\ge\rho_B>0$. Lemma~\ref{lem:pgd-ascent} immediately gives $c_{B,T}\ge E_B/L_B$; when $E_B=0$, return zero.

\paragraph{Finite-step comparison inequality.}
For every $\mu\ge0$, projection optimality and convexity give
\begin{equation}
F(\lambda^{t+1})-F(\mu)
\le\frac{L_B}{2}\bigl(\|\lambda^t-\mu\|^2-\|\lambda^{t+1}-\mu\|^2\bigr).
\end{equation}
Indeed, with $p=\lambda^{t+1}$ and $x=\lambda^t$, projection optimality gives
\begin{equation}
\langle\nabla F(x),p-\mu\rangle
\le\frac{L_B}{2}\bigl(\|x-\mu\|^2-\|p-\mu\|^2-\|p-x\|^2\bigr),
\end{equation}
while $F(p)-F(\mu)\le\langle\nabla F(x),p-\mu\rangle+\rho_B\|p-x\|^2/2$. Sum from $t=0$ to $T-1$. Since $\lambda^0=0$ and $F(\lambda^t)$ is nonincreasing,
\begin{equation}
\phi(\lambda^T)\ge\phi(\mu)-\frac{L_B}{2T}\|\mu\|^2.
\end{equation}

\begin{proof}[Proof of Theorem~\ref{thm:inexact-main}]
Take $\mu=a\delta_B$ with $a\ge0$ in the comparison inequality. Then
\begin{equation}
c_{B,T}\ge E_B\left[2a-(r_B+L_B/T)a^2\right].
\end{equation}
The right side is maximized at $a=1/(r_B+L_B/T)$. Combine the resulting bound with $c_{B,T}\ge E_B/L_B$, weak duality, and Lemma~\ref{lem:dual-descent} at $\eta=1$. If $E_B=0$, zero belongs to $\mathcal C_B(z)$, so $D_B=c_{B,T}=0$.
\end{proof}

\paragraph{Progress relative to exact block projection.}
For $E_B>0$, the minimum-norm displacement $\Delta_\star$ in the nonempty polyhedron is unique. Its normal cone is generated by active row normals, so a nonnegative optimal multiplier exists. Choose the minimum-Euclidean-norm one, $\lambda^\dagger$. KKT conditions give
\begin{equation}
\Delta_\star=-U^\top\lambda^\dagger,\qquad
s^\top\lambda^\dagger=\|U^\top\lambda^\dagger\|^2=D_B^2.
\end{equation}
Define $\chi_B=L_B\|\lambda^\dagger\|^2/D_B^2\ge1$. Applying the comparison bound with $\mu=a\lambda^\dagger$ gives
\begin{equation}
c_{B,T}\ge2aD_B^2-a^2\left(D_B^2+\frac{L_B}{T}\|\lambda^\dagger\|^2\right).
\end{equation}
Choosing $a=T/(T+\chi_B)$ yields
\begin{equation}
\frac{T}{T+\chi_B}D_B^2\le c_{B,T}\le D_B^2.
\end{equation}
For a fixed block the guaranteed fraction approaches exact projection as dual depth grows. A uniform fraction across blocks additionally requires a uniform upper bound on $\chi_B$.

\paragraph{Finite-depth displacement error.}
Let $\Delta_T=-U^\top\lambda^T$. Under the same step-size rule, the finite-step correction and the minimum displacement for the current halfspace block satisfy
\begin{equation}\label{eq:block-displacement-error}
\|\Delta_T-\Delta_\star\|^2
\le D_B^2-c_{B,T}
\le\frac{\chi_B}{T+\chi_B}D_B^2.
\end{equation}
\begin{proof}
Since $\lambda^T\ge0$ and $U\Delta_\star+s\le0$,
\begin{align}
\|\Delta_T-\Delta_\star\|^2
&=D_B^2-2\phi(\lambda^T)+2(\lambda^T)^\top(U\Delta_\star+s)\\
&\le D_B^2-c_{B,T}.
\end{align}
Substitute the preceding progress bound. If $E_B=0$, both displacements are zero. This bounds error relative to the current linearized block projection, not projection onto the original convex feasible set.
\end{proof}

\subsection{Step-size choice and amplified-step finite-depth bounds}
\label{app:gershgorin}
\paragraph{Gershgorin upper bound.}
For a nonzero Gram matrix, $\|G_B\|_\infty=\max_i\sum_j|(G_B)_{ij}|\ge\rho_B$. Choose $\ell_B\ge\|G_B\|_\infty$, possibly with numerical padding, and $\alpha=\omega/\ell_B$ for $0<\omega<2$. Recompute $\ell_B$ per block but hold the step fixed inside that solve; return zero for an all-zero block.

\begin{corollary}[Block descent with an amplified step]
\label{cor:gershgorin}
For any $T\ge1$ under this step-size rule,
\begin{equation}\label{eq:gershgorin-progress}
c_{B,T}\ge\frac{\omega(2-\omega)}{\ell_B}E_B.
\end{equation}
\end{corollary}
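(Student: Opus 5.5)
The plan is to apply Lemma~\ref{lem:pgd-ascent} with the step $\alpha=\omega/\ell_B$ and then use the Gershgorin bound to compare $\ell_B$ with $\rho_B$. There are two cases.

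\emph{Trivial block.} If $E_B=0$, Lemma~\ref{lem:pgd-ascent} gives $\lambda^t=0$ for all $t$. Hence $c_{B,T}=0$ and the claimed bound holds with equality. The same conclusion covers an all-zero block, for which the rule returns zero. So assume $E_B>0$. As recorded after the block notation, this forces $U^\top\delta_B\ne0$, hence $G_B\ne0$ and $\rho_B>0$, so $\ell_B$ is well defined.

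\emph{Step admissibility.} Gershgorin gives $\rho_B\le\|G_B\|_\infty\le\ell_B$. Therefore $\alpha\rho_B=\omega\rho_B/\ell_B\le\omega<2$, that is, $0<\alpha<2/\rho_B$. This is exactly the stability hypothesis of Lemma~\ref{lem:pgd-ascent}. The lemma then gives monotone dual ascent and, for every $T\ge1$,
\[
c_{B,T}\ge\alpha(2-\alpha r_B)E_B\ge\alpha(2-\alpha\rho_B)E_B,
\]
using $r_B\le\rho_B$.

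\emph{Final comparison.} Substitute $\alpha=\omega/\ell_B$ into the weaker bound:
\[
\alpha(2-\alpha\rho_B)=\frac{\omega}{\ell_B}\Bigl(2-\frac{\omega\rho_B}{\ell_B}\Bigr).
\]
Since $\rho_B/\ell_B\le1$ and $\omega>0$, the factor in parentheses is at least $2-\omega>0$. This yields $c_{B,T}\ge\omega(2-\omega)E_B/\ell_B$, which is~\eqref{eq:gershgorin-progress}.

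The only delicate point is checking that Lemma~\ref{lem:pgd-ascent} applies. This needs two facts: $G_B\ne0$ whenever $E_B>0$, and $\rho_B$ is the spectral radius of the full block Gram matrix, including filler rows. Gershgorin must be applied to that same matrix. Both facts follow from the block notation, and the numerical padding of $\ell_B$ only increases $\ell_B$, so the bound is unaffected.
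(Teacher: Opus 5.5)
Your proposal is correct and matches the paper's proof: for $E_B>0$ you apply Lemma~\ref{lem:pgd-ascent} with $\alpha=\omega/\ell_B$, use $\alpha\rho_B\le\omega$ (from $\rho_B\le\|G_B\|_\infty\le\ell_B$) to get $\alpha(2-\alpha\rho_B)\ge\omega(2-\omega)/\ell_B$, and note that both sides vanish when $E_B=0$. Your explicit checks that $G_B\ne0$ and that the step meets the lemma's stability hypothesis are details the paper leaves implicit.
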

\begin{proof}
When $E_B>0$, Lemma~\ref{lem:pgd-ascent} and $\alpha\rho_B\le\omega$ yield $c_{B,T}\ge\alpha(2-\alpha\rho_B)E_B\ge\omega(2-\omega)E_B/\ell_B$. For $E_B=0$ both sides vanish.
\end{proof}
For $\omega=1.8$, the base guarantee is $c_{B,T}\ge0.36E_B/\ell_B$. We next retain dual-depth dependence, including $\alpha>1/\rho_B$.

\paragraph{Comparison inequality for a general stable step.}
For any $\mu\ge0$, projection optimality gives
\begin{equation}
F(\lambda^{t+1})-F(\mu)
\le\frac{\|\lambda^t-\mu\|^2-\|\lambda^{t+1}-\mu\|^2}{2\alpha}
+\frac{\rho_B-1/\alpha}{2}\|q_t\|^2.
\end{equation}
Let $A_\alpha=1/\alpha-\rho_B/2>0$ and $B_\alpha=(\rho_B-1/\alpha)_+/2$. Dual monotonicity and $\phi(\lambda^0)=0$ imply $\sum_{t<T}\|q_t\|^2\le\phi(\lambda^T)/A_\alpha$. Summing the comparison bound and dropping its nonpositive final squared-distance term yields
\begin{equation}
T\bigl(\phi(\mu)-\phi(\lambda^T)\bigr)
\le\frac{\|\mu\|^2}{2\alpha}
+\frac{B_\alpha}{A_\alpha}\phi(\lambda^T).
\end{equation}
With $\xi_\alpha=(\alpha\rho_B-1)_+/(2-\alpha\rho_B)$, this gives
\begin{equation}
\phi(\lambda^T)\ge\frac{T}{T+\xi_\alpha}
\left(\phi(\mu)-\frac{\|\mu\|^2}{2\alpha T}\right).
\end{equation}
The Gershgorin step satisfies $\alpha\rho_B\le\omega$. Because $\phi(\lambda^T)\ge0$, we may replace $\xi_\alpha$ by the larger $\xi_\omega=(\omega-1)_+/(2-\omega)$. Taking $\mu=a\delta_B$ and maximizing over $a\ge0$ gives
\begin{equation}
c_{B,T}\ge E_B\max\left\{
\frac{\omega(2-\omega)}{\ell_B},
\frac{T}{T+\xi_\omega}\frac1{r_B+\ell_B/(\omega T)}
\right\}.
\end{equation}
At $\omega=1.8$, $\xi_\omega=4$, and the two terms become $0.36/\ell_B$ and $[T/(T+4)]/[r_B+\ell_B/(1.8T)]$. Each local PGD block may use its own $\ell_B$. If $\omega$ varies across blocks, a uniform convergence constant requires it to stay uniformly away from both zero and two.

Likewise, setting $\mu=a\lambda^\dagger$ gives
\begin{equation}
c_{B,T}\ge\frac{T}{T+\xi_\omega}\frac{T}{T+\chi_{\omega,B}}D_B^2,
\qquad\chi_{\omega,B}=\frac{\ell_B\|\lambda^\dagger\|^2}{\omega D_B^2}.
\end{equation}
These bounds apply to the actual fixed-$T$ PGD output. For a positive Rayleigh estimate $\widehat L_B\le\rho_B$, the step $\alpha_B=1/\widehat L_B$ satisfies $\alpha_B<2/\rho_B$ exactly when $\widehat L_B>\rho_B/2$. Actual power and Gershgorin choices are recorded by experiment.

\subsection{Multiple block steps under a frozen linearization}
\label{app:inner-composition}
At outer point $z^k$, form all normalized rows $\bar U^k$ and signed residuals $\bar s^k$. Starting from $w^{k,0}=z^k$, use the correctly shifted affine residuals
\begin{equation}
s^{k,t}=\bar s^k+\bar U^k(w^{k,t}-z^k).
\end{equation}
Select block $B_{k,t}$, run zero-initialized fixed-depth PGD on it, and denote the dual output by $\lambda^{k,t}$. Set
\begin{align}
\Delta^{k,t}&=-(\bar U^k_{B_{k,t}})^\top\lambda^{k,t},\\
\phi_{k,t}(\lambda)&=(s^{k,t}_{B_{k,t}})^\top\lambda
-\tfrac12\|(\bar U^k_{B_{k,t}})^\top\lambda\|^2,\\
c_{k,t}&=2\phi_{k,t}(\lambda^{k,t}).
\end{align}

\begin{proposition}[Composition of inner corrections]
\label{prop:inner-composition}
Suppose each nonzero block uses $0<\alpha_{k,t}<2/\rho_{k,t}$, positive dual depth, and the full update $w^{k,t+1}=w^{k,t}+\Delta^{k,t}$; zero blocks return zero. Then $c_{k,t}\ge0$ and, for every $z^\star\in\Omega$,
\begin{equation}
\|w^{k,S}-z^\star\|^2\le\|z^k-z^\star\|^2-\sum_{t=0}^{S-1}c_{k,t}.
\end{equation}
\end{proposition}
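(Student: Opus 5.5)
The plan is to prove a one-step inequality and then telescope over $t=0,\dots,S-1$. The key observation is that, under the frozen linearization at $z^k$, the shifted residuals $s^{k,t}$ are exactly the signed residuals of the affine minorants $\ell_i^k$ evaluated at the inner point $w^{k,t}$. So containment at the inner point follows from containment at $z^k$ (Lemma~\ref{lem:containment}), not from a new linearization.

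\textbf{Step 1: containment at the inner point.} For every row $i$ and every $z^\star\in\Omega$, Lemma~\ref{lem:containment} in normalized form gives $\bar u_i^\top(z^\star-z^k)+\bar s_i^k\le0$. Substituting $z^\star-z^k=(z^\star-w^{k,t})+(w^{k,t}-z^k)$ and using the definition of $s^{k,t}$ yields
\[
\bar U^k(z^\star-w^{k,t})+s^{k,t}\le0 .
\]
Hence $\Omega-w^{k,t}\subseteq\{\Delta:\bar U^k_{B_{k,t}}\Delta+s^{k,t}_{B_{k,t}}\le0\}$. This is the block containment hypothesis of Lemma~\ref{lem:dual-descent}, with $z$ replaced by $w^{k,t}$, $U$ by $\bar U^k_{B_{k,t}}$, and $s$ by $s^{k,t}_{B_{k,t}}$. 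Zero-placeholder rows have $(u,s)=(0,0)$ and satisfy it trivially.

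\textbf{Step 2: one-step descent.} Apply Lemma~\ref{lem:dual-descent} with $\eta=1$ and $\lambda=\lambda^{k,t}\ge0$; nonnegativity holds because PGD projects onto the nonnegative orthant. This gives
\[
\|w^{k,t+1}-z^\star\|^2\le\|w^{k,t}-z^\star\|^2-c_{k,t}.
\]
Lemma~\ref{lem:dual-descent} only used the containment inequality and the definition of $\phi$, so it transfers verbatim to the shifted residuals.

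\textbf{Step 3: nonnegativity of $c_{k,t}$.} If the block has no positive residual, all dual iterates stay at zero by Lemma~\ref{lem:pgd-ascent}, so $c_{k,t}=0$. The same holds for an all-zero block, which returns zero. Otherwise, for a nonzero Gram matrix with $0<\alpha_{k,t}<2/\rho_{k,t}$, Lemma~\ref{lem:pgd-ascent} shows that $\phi_{k,t}$ is nondecreasing along the PGD iterates from $\lambda^0=0$. Therefore $c_{k,t}\ge 2\phi_{k,t}(0)=0$; it is in fact strictly positive when some residual is positive.

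\textbf{Step 4: telescoping.} Sum the one-step inequality over $t=0,\dots,S-1$, using $w^{k,0}=z^k$, to obtain the claim.

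\textbf{Main obstacle.} The only delicate point is Step 1: one must check that the residuals actually used at inner step $t$ are the correctly shifted ones. In Algorithm~\ref{alg:layer}, $s_{\rm all}=\bar U\tilde w-\bar\beta$, and since $\bar\beta=\bar U^kz^k-\bar s^k$ this equals $\bar s^k+\bar U^k(\tilde w-z^k)$. Once this identification is made, containment is inherited from the frozen halfspaces, and the rest is a direct invocation of the earlier lemmas.
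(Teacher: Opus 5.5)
Your proof is correct and follows the paper's argument essentially verbatim: the shift identity transfers containment from $z^k$ to $w^{k,t}$, Lemma~\ref{lem:pgd-ascent} gives $c_{k,t}\ge0$, and Lemma~\ref{lem:dual-descent} at $\eta=1$ is then telescoped over the inner steps. Your check that the residual $\bar U\tilde w-\bar\beta$ used in Algorithm~\ref{alg:layer} equals the shifted residual $s^{k,t}$ is a useful detail that the paper leaves implicit.
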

\begin{proof}
For any frozen row and feasible point,
\begin{equation}
s_i^{k,t}+(\bar u_i^k)^\top(z^\star-w^{k,t})
=\bar s_i^k+(\bar u_i^k)^\top(z^\star-z^k)\le0.
\end{equation}
Thus every local block still contains the original feasible set. Lemma~\ref{lem:pgd-ascent} ensures $c_{k,t}\ge0$; apply Lemma~\ref{lem:dual-descent} at each inner step and sum.
\end{proof}
The first inner residuals come from the true nonlinear functions at $z^k$. Setting $c_k=c_{k,0}$, the complete outer step descends by at least $c_k$. Later steps use frozen affine residuals and add progress $C_k=\sum_{t<S}c_{k,t}$ with $c_k\le C_k\le d_k^2$.

With an additional external relaxation $0<\eta_{k,t}\le1$, the per-step decrement becomes $\eta_{k,t}c_{k,t}+\eta_{k,t}(1-\eta_{k,t})\|\Delta^{k,t}\|^2$. The main algorithm uses full steps. Uniform outer rates with relaxed steps also require a uniform positive lower bound on the first inner step's $\eta_{k,0}$. This relaxation scales the correction, whereas $\omega$ sets the inner PGD step size.

\subsection{Outer convergence and block-dependent rates}
\label{app:block-gain}
At each outer iterate set
\begin{equation}
\delta_i^k=\begin{cases}h_i(z^k)/\|a_i^k\|,&h_i(z^k)>0,\\0,&h_i(z^k)\le0,\end{cases}
\quad\Psi_k=\sum_i(\delta_i^k)^2,
\quad v_k=\max_i[h_i(z^k)]_+.
\end{equation}
Let the first block have $E_k=\tau_k\Psi_k$, and let any proved coefficient $\beta_k>0$ satisfy $c_k\ge\beta_kE_k$. Put $w_k=\tau_k\beta_k$.

Let $d_k=\operatorname{dist}(z^k,\Omega)$ and take $\beta_k=\max\{1/L_k,1/(r_k+L_k/T_k)\}$. The following uses Section~\ref{sec:theory}'s convexity, nonempty feasibility, and bounded-subgradient assumptions. Ratios refer to infeasible iterates; after feasibility, errors stay zero and the bounds hold trivially.

\begin{corollary}[Convex convergence and linear rate under a residual error bound]
\label{cor:outer-general}
Each local block starts PGD at zero, every nonzero block uses $\alpha_B=1/L_B$ with $\rho_B\le L_B\le\bar L<\infty$, and every correction is applied as a full step. Then Algorithm~\ref{alg:layer} satisfies
\begin{equation}
d_{k+1}^2\le d_k^2-w_k\Psi_k,\qquad
\sum_{k<K}w_k\Psi_k\le d_0^2,
\end{equation}
and the best-iterate bound
\begin{equation}
\min_{k<K}v_k\le\frac{Md_0}{\sqrt{\sum_{k<K}w_k}}.
\end{equation}
Fixed top-$b$ selection and these step sizes give $w_k\ge\underline w=b/(m\bar L)>0$. Thus the best-iterate bound is $O(K^{-1/2})$, and the full sequence converges to a point in $\Omega$. \emph{Linear rate.} If the constraints satisfy the normalized-residual error bound $d_k^2\le\kappa_2^2\Psi_k$ at the iterates for some $\kappa_2>0$, then
\begin{equation}
d_K^2\le d_0^2\prod_{k<K}\left(1-\frac{w_k}{\kappa_2^2}\right)
\le\left(1-\frac{\underline w}{\kappa_2^2}\right)^K d_0^2.
\end{equation}
\end{corollary}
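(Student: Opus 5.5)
The plan follows the four claims in turn. First, the one-step descent; then the summed and best-iterate bounds and the lower bound on $w_k$; then convergence of the whole sequence; finally the linear rate.

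\emph{One-step descent.} Fix $k$ with $z^k\notin\Omega$, and let $z_\Pi$ be the projection of $z^k$ onto $\Omega$. The first inner block is built from the true residuals at $z^k$. By Theorem~\ref{thm:inexact-main}, applied with $z^\star=z_\Pi$, the first inner step decreases $\|\cdot-z_\Pi\|^2$ by $c_k\ge\beta_kE_k=w_k\Psi_k$. Proposition~\ref{prop:inner-composition} shows the remaining $S-1$ frozen-linearization steps only add nonnegative decrements. Hence $\|z^{k+1}-z_\Pi\|^2\le d_k^2-w_k\Psi_k$. Since $d_{k+1}\le\|z^{k+1}-z_\Pi\|$, this gives $d_{k+1}^2\le d_k^2-w_k\Psi_k$. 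Feasible iterates stay fixed with $\Psi_k=0$, so the inequality is trivial there.

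\emph{Summation, best iterate, and the bound on $w_k$.} Telescoping gives $\sum_{k<K}w_k\Psi_k\le d_0^2$. For the best-iterate bound, a violated row satisfies $h_i(z^k)\le M\delta_i^k$ because $\|a_i^k\|\le M$, so $v_k^2\le M^2\Psi_k$. Then
\[
\min_{k<K}v_k^2\sum_{k<K}w_k\le M^2\sum_{k<K}w_k\Psi_k\le M^2d_0^2 .
\]
For the lower bound on $w_k$, top-$b$ selection at $w=z^k$ picks the $b$ largest of the $m$ squared scores, so $E_k\ge(b/m)\Psi_k$, i.e.\ $\tau_k\ge b/m$. Also $\beta_k\ge1/L_k\ge1/\bar L$. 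Together these give $w_k\ge b/(m\bar L)$, and hence the $O(K^{-1/2})$ rate.

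\emph{Convergence of the full sequence.} This is the step needing the most care. The descent inequality holds with every $z^\star\in\Omega$ in place of $z_\Pi$, since Theorem~\ref{thm:inexact-main} and Proposition~\ref{prop:inner-composition} allow arbitrary feasible points. So the sequence is Fejér monotone with respect to $\Omega$ and therefore bounded. From $\sum_k\Psi_k<\infty$ we get $\Psi_k\to0$, hence $v_k\le M\sqrt{\Psi_k}\to0$. Take a cluster point $\bar z$. Each $h_i$ is finite convex and hence continuous, so $h_i(\bar z)=\lim h_i(z^{k_j})\le\lim v_{k_j}=0$, which puts $\bar z\in\Omega$. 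The standard Fejér argument finishes the proof. The sequence $\|z^k-\bar z\|$ is nonincreasing and has a subsequence tending to zero, so it tends to zero and the whole sequence converges to $\bar z$.

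The one point to check is that $M$ bounds the subgradients uniformly on the iterate region. Fejér boundedness keeps the iterates in a fixed ball around any feasible point, so the assumption applies along the entire sequence.

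\emph{Linear rate.} Substitute $\Psi_k\ge d_k^2/\kappa_2^2$ into the descent inequality to get $d_{k+1}^2\le(1-w_k/\kappa_2^2)\,d_k^2$. Iterating gives the product bound, and $w_k\ge\underline w$ gives the geometric bound. Each factor is nonnegative: by the weak-duality remark after Lemma~\ref{lem:pgd-ascent}, $w_k\Psi_k\le c_k\le d_k^2\le\kappa_2^2\Psi_k$, so $w_k\le\kappa_2^2$.
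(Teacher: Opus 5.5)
Your proposal is correct and follows the paper's proof essentially step for step. It obtains descent from Proposition~\ref{prop:inner-composition} with the first-block bound $c_k\ge\beta_kE_k=w_k\Psi_k$, uses telescoping plus $v_k^2\le M^2\Psi_k$ for the best-iterate bound, $\tau_k\ge b/m$ and $\beta_k\ge1/\bar L$ for $\underline w$, the Fej\'er and cluster-point argument for convergence, and the chain $w_k\Psi_k\le c_k\le d_k^2\le\kappa_2^2\Psi_k$ for nonnegative contraction factors. The only cosmetic difference is that you apply Theorem~\ref{thm:inexact-main} to the first inner step before invoking the proposition for the frozen steps, whereas the paper reads the whole outer step directly off the proposition.
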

\begin{proof}[Proof of Corollary~\ref{cor:outer-general}]
Proposition~\ref{prop:inner-composition} gives $d_{k+1}^2\le d_k^2-c_k\le d_k^2-w_k\Psi_k$. Telescoping yields $\sum_{k<K}w_k\Psi_k\le d_0^2$. The violated-row subgradient bound gives $v_k^2\le\sum_i[h_i(z^k)]_+^2\le M^2\Psi_k$. Therefore
\begin{equation}
\min_{k<K}v_k^2\sum_{k<K}w_k
\le\sum_{k<K}w_kv_k^2\le M^2d_0^2,
\end{equation}
proving the weighted best-iterate bound. If $w_k\ge\underline w>0$, then $\sum_k\Psi_k<\infty$ and $v_k\to0$. Pointwise distance descent makes the iterates Fej\'er monotone and bounded. A convergent subsequence has a feasible limit $\bar z$ by continuity of each $h_i$. The distance to $\bar z$ is nonincreasing and vanishes along that subsequence, so the full sequence converges to $\bar z$.

Under $d_k^2\le\kappa_2^2\Psi_k$, each infeasible iterate satisfies $d_{k+1}^2\le(1-w_k/\kappa_2^2)d_k^2$. Since $w_k\Psi_k\le c_k\le d_k^2\le\kappa_2^2\Psi_k$, the factors are nonnegative. Iterate to obtain the product and uniform linear bounds. If feasibility occurs in finitely many steps, later errors remain zero.
\end{proof}

\begin{lemma}[Slater condition implies a normalized-residual error bound]
\label{lem:slater-error-bound}
Suppose some $\bar z$ and $\sigma>0$ satisfy $h_i(\bar z)\le-\sigma$ for all $i$. For the outer iterates of Proposition~\ref{prop:inner-composition}, assume violated-row subgradient norms are bounded by $M$. With $R=\|z^0-\bar z\|$, every iterate satisfies
\begin{equation}
d_k\le\frac{R}{\sigma}v_k\le\frac{RM}{\sigma}\sqrt{\Psi_k}.
\end{equation}
Thus $\kappa_2=RM/\sigma>0$ is valid when $z^0$ is infeasible; if $z^0$ is feasible, all errors remain zero.
\end{lemma}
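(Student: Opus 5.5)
The plan is the classical Robinson-type convex-combination argument toward the Slater point $\bar z$, followed by a bound on $\|z^k-\bar z\|$ by $R$ via Fej\'er monotonicity. Fix an infeasible outer iterate $z^k$, so $v_k>0$; feasible iterates give $d_k=0$ and the claim is trivial. Set $\theta=v_k/(v_k+\sigma)\in(0,1)$ and $z_\theta=(1-\theta)z^k+\theta\bar z$. For each $i$, convexity gives
\[
h_i(z_\theta)\le(1-\theta)h_i(z^k)+\theta h_i(\bar z)\le(1-\theta)v_k-\theta\sigma=0,
\]
because $h_i(z^k)\le v_k$ and $h_i(\bar z)\le-\sigma$. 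Hence $z_\theta\in\Omega$, and
\[
d_k\le\|z^k-z_\theta\|=\theta\|z^k-\bar z\|\le\frac{v_k}{\sigma}\|z^k-\bar z\|.
\]

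The next step is to replace $\|z^k-\bar z\|$ by $R=\|z^0-\bar z\|$. Since $\bar z\in\Omega$, Proposition~\ref{prop:inner-composition} with $z^\star=\bar z$ gives $\|w^{k,S}-\bar z\|\le\|z^k-\bar z\|$ at every outer step, and $z^{k+1}=w^{k,S}$. Induction then yields $\|z^k-\bar z\|\le R$ for all $k$. Feasible iterates stay fixed, so they do not break the chain. This gives $d_k\le (R/\sigma)v_k$.

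For the second inequality, I would argue as in the proof of Corollary~\ref{cor:outer-general}. Pick a row $i$ attaining $v_k=[h_i(z^k)]_+>0$. That row is violated, so Lemma~\ref{lem:nonzero} gives $a_i^k\ne0$, and the bound gives $\|a_i^k\|\le M$. Therefore
\[
v_k=\|a_i^k\|\,\delta_i^k\le M\delta_i^k\le M\sqrt{\Psi_k}.
\]
Combining the two steps gives $d_k\le (RM/\sigma)\sqrt{\Psi_k}$. Squaring yields the error bound $d_k^2\le\kappa_2^2\Psi_k$ with $\kappa_2=RM/\sigma$, as required in Corollary~\ref{cor:outer-general}.

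Two remaining points need attention. First, positivity of $\kappa_2$ requires $R>0$. If $z^0$ is infeasible, then $z^0\ne\bar z$ because $\bar z\in\Omega$, so $R>0$. If $z^0$ is feasible, the algorithm never moves and all errors are zero. Second, under the norm-floor variant, $M$ is replaced by $\max\{M,\varepsilon_{\rm norm}\}$, exactly as the paper states. The only potentially delicate step is the Fej\'er bound $\|z^k-\bar z\|\le R$: it depends on Proposition~\ref{prop:inner-composition} holding for the shifted frozen residuals, with $\bar z$ taken as the comparator. The rest is routine.
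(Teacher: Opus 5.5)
Your proof is correct and follows the paper's argument essentially step for step. You use the convex combination $(1-\theta)z^k+\theta\bar z$ with $\theta=v_k/(\sigma+v_k)$, Fej\'er monotonicity from Proposition~\ref{prop:inner-composition} to get $\|z^k-\bar z\|\le R$, and the bounded violated-row subgradients to get $v_k\le M\sqrt{\Psi_k}$; your extra remarks on $R>0$ and the norm floor match the lemma's statement and the paper's surrounding text.
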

\begin{proof}
Proposition~\ref{prop:inner-composition} makes the outer iterates Fej\'er monotone with respect to $\bar z\in\Omega$, so $\|z^k-\bar z\|\le R$. If $v_k>0$, set $\theta=v_k/(\sigma+v_k)$ and $y=(1-\theta)z^k+\theta\bar z$. Convexity gives, for every $i$,
\begin{equation}
h_i(y)\le(1-\theta)h_i(z^k)+\theta h_i(\bar z)
\le(1-\theta)v_k-\theta\sigma=0.
\end{equation}
Hence $y\in\Omega$ and $d_k\le\|z^k-y\|=\theta\|z^k-\bar z\|\le Rv_k/\sigma$. For a violated row, $[h_i(z^k)]_+=\|a_i^k\|\delta_i^k\le M\delta_i^k$, so $v_k\le M\sqrt{\Psi_k}$. If $v_k=0$, then $d_k=\Psi_k=0$.
\end{proof}
With a norm floor $\varepsilon_{\rm norm}$, replace $M$ by $\max\{M,\varepsilon_{\rm norm}\}$.

Combining this lemma with Corollary~\ref{cor:outer-general} and $\underline w=b/(m\bar L)$ proves the main Slater rate in Corollary~\ref{cor:progress}.

\paragraph{Constants for fixed top-$b$.}
For $\alpha_B=\omega/\ell_B$, the same outer-convergence proof uses~\eqref{eq:gershgorin-progress} in place of $c_{B,T}\ge E_B/L_B$.
If $p_k$ normalized violations are positive, selecting the largest $b$ gives $\tau_k\ge\min\{1,b/p_k\}\ge b/m$. Unit row norms imply $|(G_B)_{ij}|\le1$, so $\ell_B=\|G_B\|_\infty\le b$. With fixed $\omega\in(0,2)$ and this Gershgorin choice, $w_k\ge\omega(2-\omega)/m$. If the implementation adds a norm floor or safety padding, use its actual uniform upper bound: with $\ell_k\le\bar\ell$, one has $w_k\ge\omega(2-\omega)b/(m\bar\ell)$.

\begin{corollary}[Block geometry and dual depth in a linear bound]
\label{cor:block-rate}
For $\alpha_k=1/L_k$, suppose $\tau_k\ge\underline\tau>0$, $r_k\le\bar r$, $\rho_k\le L_k\le\bar L$, $T_k\ge\underline T\ge1$, and the $\ell_2$ error bound of Corollary~\ref{cor:outer-general} holds. Set
\begin{equation}
\underline w=\underline\tau\max\left\{\frac1{\bar L},
\frac1{\bar r+\bar L/\underline T}\right\}.
\end{equation}
Then $d_K^2\le(1-\underline w/\kappa_2^2)^Kd_0^2$.
\end{corollary}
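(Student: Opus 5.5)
The plan is to reduce Corollary~\ref{cor:block-rate} to the linear-rate part of Corollary~\ref{cor:outer-general}, whose proof only needs the per-iterate decrement $d_{k+1}^2\le d_k^2-w_k\Psi_k$ and a uniform lower bound on $w_k=\tau_k\beta_k$. So the whole argument comes down to showing that $w_k\ge\underline w$ at every infeasible outer iterate, and then repeating the error-bound contraction.

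First, fix an infeasible outer iterate $z^k$ and look at its first inner block, which uses the true nonlinear residuals at $z^k$. Theorem~\ref{thm:inexact-main} with step $1/L_k$ and depth $T_k$ gives
\[
c_k\ge E_k\max\{L_k^{-1},(r_k+L_k/T_k)^{-1}\}.
\]
Each term is monotone in the parameters. We have $1/L_k\ge 1/\bar L$. The function $(r+L/T)^{-1}$ is decreasing in $r$ and $L$ and increasing in $T$, so the bounds $r_k\le\bar r$, $L_k\le\bar L$, $T_k\ge\underline T$ give $(r_k+L_k/T_k)^{-1}\ge(\bar r+\bar L/\underline T)^{-1}$. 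Writing $E_k=\tau_k\Psi_k$ with $\tau_k\ge\underline\tau$ then yields $c_k\ge\underline w\,\Psi_k$.

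Second, Proposition~\ref{prop:inner-composition} shows that later inner steps only add nonnegative descent. This gives $d_{k+1}^2\le d_k^2-c_k\le d_k^2-\underline w\Psi_k$. Here $d_{k+1}^2$ is bounded by $\|w^{k,S}-P_\Omega(z^k)\|^2$ with $z^\star=P_\Omega(z^k)$.

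Third, the error bound $d_k^2\le\kappa_2^2\Psi_k$ gives $\Psi_k\ge d_k^2/\kappa_2^2$, hence $d_{k+1}^2\le(1-\underline w/\kappa_2^2)d_k^2$. The factor is nonnegative because $\underline w\Psi_k\le c_k\le d_k^2\le\kappa_2^2\Psi_k$, using the weak-duality bound $c_k\le d_k^2$. Feasible iterates stay fixed with zero error, so induction over $k<K$ finishes the proof.

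The main obstacle is small: one must check that the first-block ratio $r_k$ is well defined ($E_k>0$ whenever $\Psi_k>0$, since $\tau_k>0$). One must also check that the monotonicity bound on the second term is legitimate when $r_k$ is small; since $r_k>0$, this is immediate.
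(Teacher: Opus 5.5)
Your proposal is correct and follows essentially the same route as the paper. The paper's proof simply cites the finite-depth lower bound of Appendix~\ref{app:pgd-depth} (Theorem~\ref{thm:inexact-main}), combined with the stated monotone bounds, to get $w_k\ge\underline w$, and then invokes the linear-rate part of Corollary~\ref{cor:outer-general}; you re-derive that last step inline via Proposition~\ref{prop:inner-composition}, the error bound, and the weak-duality check that the contraction factor is nonnegative.
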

\begin{proof}
The lower bound in Appendix~\ref{app:pgd-depth} gives $w_k\ge\underline w$; apply Corollary~\ref{cor:outer-general}.
\end{proof}
For a fixed $\omega$ Gershgorin configuration with $\ell_k\le\bar\ell$, the analogous constant is
\begin{equation}
\underline w=\underline\tau\max\left\{
\frac{\omega(2-\omega)}{\bar\ell},
\frac{\underline T}{\underline T+\xi_\omega}
\frac1{\bar r+\bar\ell/(\omega\underline T)}\right\}.
\end{equation}

\paragraph{Fixed points and positive tolerances.}
Let $\mathcal T$ be one complete outer correction. At zero tolerance with fixed top-$b$, zero initialization, and safe step sizes, $\operatorname{Fix}(\mathcal T)=\Omega$: an infeasible point has $E_k>0$ in its first block and strictly decreases its distance to every feasible point, whereas all dual iterates remain zero at a feasible point. For positive tolerance, the best-iterate bound gives $O(M^2d_0^2/(\underline w\varepsilon^2))$ outer steps to the first point with $v_k\le\varepsilon$ if iterations continue. A finite outer cap instead returns its actual residual and iteration-limit status.

\subsection{Exact block projection and parallel correction}
\begin{lemma}[Descent for exact block projection]
\label{lem:exact-fejer}
Let $\Delta_\star=P_{\mathcal C_B(z)}(0)$ and $D_B=\|\Delta_\star\|$. For $0<\eta<2$ and $z^\star\in\Omega$,
\begin{equation}
\|z+\eta\Delta_\star-z^\star\|^2
\le\|z-z^\star\|^2-\eta(2-\eta)D_B^2,
\qquad D_B\le\operatorname{dist}(z,\Omega).
\end{equation}
\end{lemma}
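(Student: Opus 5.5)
The argument is the standard Fejér estimate for relaxed projections onto a closed convex set that contains the feasible region. Let $\mathcal C=\mathcal C_B(z)$. It is a nonempty closed convex polyhedron, and by Lemma~\ref{lem:containment} it satisfies $\Omega-z\subseteq\mathcal C$. Since $\Delta_\star=P_{\mathcal C}(0)$, the projection characterization gives
\begin{equation}
\langle 0-\Delta_\star,\ \Delta-\Delta_\star\rangle\le0\qquad\text{for all }\Delta\in\mathcal C .
\end{equation}
We apply this with $\Delta=z^\star-z\in\mathcal C$, which is admissible for any $z^\star\in\Omega$. This yields $\langle\Delta_\star,z^\star-z\rangle\ge\|\Delta_\star\|^2=D_B^2$.

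Next we expand the squared distance after the relaxed step:
\begin{equation}
\|z+\eta\Delta_\star-z^\star\|^2=\|z-z^\star\|^2+2\eta\langle\Delta_\star,z-z^\star\rangle+\eta^2D_B^2 .
\end{equation}
The previous inequality bounds the cross term by $-2\eta D_B^2$. The right side is therefore at most $\|z-z^\star\|^2-(2\eta-\eta^2)D_B^2=\|z-z^\star\|^2-\eta(2-\eta)D_B^2$, which is the first claim. Note that $\eta(2-\eta)>0$ for $0<\eta<2$, although positivity is not needed for the inequality itself.

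For the second claim, $\Omega$ is closed and convex (Appendix~\ref{app:proofs}), so it has a point $z^\star$ with $\|z-z^\star\|=\operatorname{dist}(z,\Omega)$. For this point, $z^\star-z\in\mathcal C$. Because $\Delta_\star$ is the minimum-norm element of $\mathcal C$, we get $D_B\le\|z^\star-z\|=\operatorname{dist}(z,\Omega)$. Alternatively, one can combine the first claim at $\eta=1$ with nonnegativity of the left side, which gives $D_B^2\le\|z-z^\star\|^2$, and then take the infimum over $z^\star$.

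No step is genuinely difficult. The only point requiring care is that the variational inequality is applied in the displacement variable. One must use $\Delta=z^\star-z$ and not $z^\star$ itself, and it is the containment $\Omega-z\subseteq\mathcal C_B(z)$, with correctly signed residuals, that makes this admissible. The same argument covers frozen, shifted linearizations, by the remark after Proposition~\ref{prop:inner-composition}.
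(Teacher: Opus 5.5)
Your proof is correct and follows the same route as the paper. You apply the projection variational inequality at $0$ to the admissible displacement $z^\star-z\in\mathcal C_B(z)$, expand the squared distance, and obtain $D_B\le\operatorname{dist}(z,\Omega)$ by noting that the displacement to a nearest feasible point is feasible for the block QP.
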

\begin{proof}
Since $z^\star-z\in\mathcal C_B(z)$, projection optimality gives $\langle-\Delta_\star,z^\star-z-\Delta_\star\rangle\le0$, or $\langle\Delta_\star,z-z^\star\rangle\le-\|\Delta_\star\|^2$. Expand the squared distance. The displacement to a nearest feasible point is feasible for the block QP, proving $D_B\le\operatorname{dist}(z,\Omega)$.
\end{proof}

\begin{lemma}[Geometric lower bound for exact block projection]
\label{lem:gram-bound}
For $E_B>0$, $D_B^2\ge E_B/r_B\ge E_B/\rho_B$.
\end{lemma}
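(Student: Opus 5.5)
The plan is to prove $D_B^2\ge E_B/r_B$ by weak duality. Choose a scaled multiplier along $\delta_B$ and optimize the scale. The second inequality then follows from the Rayleigh-quotient bound $r_B\le\rho_B$ recorded in the block notation section.

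\textbf{Step 1 (weak duality).} For any $\lambda\ge0$ and any feasible $\Delta$ with $U\Delta+s\le0$, we have
\begin{equation*}
\tfrac12\|\Delta\|^2\ge\tfrac12\|\Delta\|^2+\lambda^\top(U\Delta+s)\ge\min_{\Delta'}\Bigl\{\tfrac12\|\Delta'\|^2+\lambda^\top(U\Delta'+s)\Bigr\}=\phi(\lambda).
\end{equation*}
The first inequality holds because $\lambda^\top(U\Delta+s)\le0$. The Lagrangian minimum is attained at $\Delta'=-U^\top\lambda$. Applying this to $\Delta_\star=P_{\mathcal C_B(z)}(0)$ gives $D_B^2\ge2\phi(\lambda)$ for every $\lambda\ge0$. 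The polyhedron is nonempty by Lemma~\ref{lem:containment}, so $\Delta_\star$ exists.

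\textbf{Step 2 (choice of multiplier).} Take $\lambda=a\delta_B$ with $a\ge0$. Since $\delta_B=[s]_+$, we have $s^\top\delta_B=\|\delta_B\|^2=E_B$. By definition, $\delta_B^\top G_B\delta_B=r_BE_B$. Hence
\begin{equation*}
2\phi(a\delta_B)=2aE_B-a^2r_BE_B.
\end{equation*}
Because $E_B>0$, the block notation section gives $r_B>0$. The quadratic is therefore maximized at $a=1/r_B$, which yields $D_B^2\ge E_B/r_B$.

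\textbf{Step 3 (spectral comparison).} The Rayleigh quotient of $\delta_B$ satisfies $r_B\le\rho_B=\lambda_{\max}(G_B)$, so $E_B/r_B\ge E_B/\rho_B$.

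The only delicate point is guaranteeing $r_B>0$, which is needed to divide. This was established earlier: feasibility forces $\delta_B^\top U\Delta\le-E_B<0$, so $U^\top\delta_B\ne0$.
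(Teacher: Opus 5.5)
Your proof is correct, but it argues through the dual rather than the primal, as the paper does. The paper works directly with feasible displacements. Weighting the block constraints by $\delta_B$ gives $\delta_B^\top U\Delta\le-E_B$, and Cauchy--Schwarz turns this into $\|\Delta\|\ge E_B/\|U^\top\delta_B\|$ for every feasible $\Delta$. Squaring and using $\|U^\top\delta_B\|^2=\delta_B^\top G_B\delta_B=r_BE_B$ then gives $E_B/r_B$.

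You instead bound $D_B^2\ge2\phi(\lambda)$ by weak duality and evaluate $\phi$ on the ray $\lambda=a\delta_B$, with the optimum at $a=1/r_B$. The two arguments use the same certificate from opposite sides. Maximizing $2aE_B-a^2\|U^\top\delta_B\|^2$ over $a\ge0$ gives exactly the squared Cauchy--Schwarz bound $E_B^2/\|U^\top\delta_B\|^2$.

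The paper's primal version is a one-liner and needs no Lagrangian. Your version reuses the dual machinery of Lemma~\ref{lem:dual-descent} and Theorem~\ref{thm:inexact-main}. It also makes explicit that $\delta_B/r_B$ is a dual point certifying the bound. This is the ray used by the first PGD step ($\lambda^1=\alpha\delta_B$), and it is the ray maximizer named in the extrapolated parallel-projection remark.

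Your route also shows that the lemma follows from Theorem~\ref{thm:inexact-main} itself. Combine $c_{B,T}\ge E_B/(r_B+L_B/T)$ with $c_{B,T}\le D_B^2$ and let $T\to\infty$.
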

\begin{proof}
The inequality $\delta_B^\top U\Delta\le-E_B$ from Appendix~\ref{app:mobility-theory} and Cauchy--Schwarz imply $\|\Delta\|\ge E_B/\|U^\top\delta_B\|$ for every feasible displacement. Minimize over feasible displacements and square.
\end{proof}
The exact projection has optimal dual value $2\phi(\lambda^\dagger)=D_B^2$; Appendices~\ref{app:pgd-depth} and~\ref{app:gershgorin} quantify finite-step PGD progress relative to it.

\paragraph{Relation to extrapolated parallel subgradient projection.}
\label{app:ray-connection}
Let $J=\{i:s_i>0\}$, $p=|J|$, and average the violated-row closed-form corrections:
\begin{equation}
\bar\Delta=-p^{-1}U^\top\delta_B,\qquad
L_{\rm par}=\frac{\|\bar\Delta\|^2}{E_B/p}=\frac{r_B}{p}.
\end{equation}
With equal row weights and relaxation parameter one, the adaptive extrapolation of~\citet[Section 4.3]{necoara2019minibatch} gives
\begin{equation}
\frac{\bar\Delta}{L_{\rm par}}=-\frac1{r_B}U^\top\delta_B.
\end{equation}
This also maximizes the dual along the ray $\lambda=a\delta_B$ for $a\ge0$. The first PGD step is $-\alpha U^\top\delta_B=\alpha p\bar\Delta$; later Gram products adjust individual nonnegative row coefficients.

\subsection{Local rate for exact block corrections}
\begin{proposition}[Local quadratic decrease of infeasibility]
\label{prop:local-quadratic}
Let $z^\star\in\Omega$ and $I^\star=\{i:h_i(z^\star)=0\}$. Near $z^\star$, suppose active constraints are $C^2$ with $\|\nabla^2h_i(z)\|\le L_{\rm loc}$, inactive constraints have strict slack, the selected block includes $I^\star$, block projection is exact and uses a full step, and
\begin{equation}
\operatorname{dist}(z,\Omega)\le\kappa_{\rm loc}\max_i[h_i(z)]_+.
\end{equation}
Then sufficiently nearby iterates satisfy $d_{k+1}\le\kappa_{\rm loc}L_{\rm loc}d_k^2/2$.
\end{proposition}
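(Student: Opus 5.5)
The strategy is to show that, near $z^\star$, one exact full block projection lands in a point whose violation is controlled by the second-order linearization error, and then convert violation back to distance with the local error bound. Write $z=z^k$ and $z^+=z+\Delta_\star$, where $\Delta_\star=P_{\mathcal C_B(z)}(0)$ is the exact minimum-displacement correction for the selected block built at $z$.

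\textbf{Step 1: the step is short.} By Lemma~\ref{lem:exact-fejer} with $\eta=1$,
\[
\|\Delta_\star\|=D_B\le\operatorname{dist}(z,\Omega)=d_k,
\]
and the iterate does not move away from $\Omega$. Hence, for $d_k$ small and $z$ near $z^\star$, both $z$ and $z^+$ stay in the neighbourhood where the $C^2$ bounds and the strict slack of inactive constraints hold. Nearness to $z^\star$ itself must be maintained along the iterates. I would handle this by assuming $z$ lies in a small ball around $z^\star$ and using Fejér monotonicity with respect to $z^\star\in\Omega$ (Lemma~\ref{lem:exact-fejer}), which keeps $\|z^+-z^\star\|\le\|z-z^\star\|$.

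\textbf{Step 2: bound the new violation.} For an inactive $i$, strict slack and continuity give $h_i(z^+)<0$ once the neighbourhood is small. For $i\in I^\star\subseteq B$, the constraint is $C^2$, so $a_i=\nabla h_i(z)$. Block feasibility of $\Delta_\star$ gives $h_i(z)+\langle a_i,\Delta_\star\rangle\le0$. Taylor's theorem with $\|\nabla^2h_i\|\le L_{\rm loc}$ then yields
\[
h_i(z^+)\le h_i(z)+\langle\nabla h_i(z),\Delta_\star\rangle+\tfrac{L_{\rm loc}}2\|\Delta_\star\|^2\le\tfrac{L_{\rm loc}}2 d_k^2.
\]
One subtlety is that the algorithm's rows are normalized. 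Normalization only rescales each halfspace by a positive factor, so $\mathcal C_B(z)$ is the same set as the unnormalized one, and the inequality above holds as stated.

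\textbf{Step 3: convert violation to distance.} Steps 1–2 give $\max_i[h_i(z^+)]_+\le L_{\rm loc}d_k^2/2$. Since $z^+$ lies in the neighbourhood, the local error bound applies there and gives
\[
d_{k+1}\le\kappa_{\rm loc}\max_i[h_i(z^+)]_+\le\kappa_{\rm loc}L_{\rm loc}d_k^2/2.
\]

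\textbf{Main obstacle.} The delicate point is the neighbourhood bookkeeping in Step 1: making sure the iterates, the segment $[z,z^+]$, and the selected block's inclusion of $I^\star$ all persist. I would fix a radius $r$ on which the $C^2$ bounds hold, slack exceeds some $\epsilon>0$ for inactive constraints, and the error bound holds. I would then require $\|z-z^\star\|\le r/2$ so that $z^+$ and every point of the segment lie within $r$. Continuity of the inactive $h_i$ shrinks $r$ further so that slack survives. The block-containment hypothesis is taken as given in the statement.
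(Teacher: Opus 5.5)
Your proposal is correct and follows the paper's proof essentially step for step. Exact satisfaction of each active row's linearized inequality, combined with Taylor's theorem, gives $h_i(z^{k+1})\le L_{\rm loc}\|\Delta_\star^k\|^2/2$; Lemma~\ref{lem:exact-fejer} supplies both $\|\Delta_\star^k\|\le d_k$ and confinement to the neighborhood; inactive rows remain strictly satisfied; and the local error bound is then applied at $z^{k+1}$. Your explicit radius bookkeeping (the $r/2$ ball, containment of the segment, and Fejér monotonicity toward $z^\star$) only spells out what the paper compresses into ``sufficiently nearby.''
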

\begin{proof}
For every active row, exact satisfaction of its linearized inequality and Taylor's theorem give $h_i(z^{k+1})\le L_{\rm loc}\|\Delta_\star^k\|^2/2$. Lemma~\ref{lem:exact-fejer} keeps the next point in a sufficiently small neighborhood and gives $\|\Delta_\star^k\|\le d_k$. Inactive constraints remain strictly satisfied there. Apply the local error bound at $z^{k+1}$.
\end{proof}

\section{Training and Implementation Details}\label{app:training-protocol}

\paragraph{Data sources and splits.}
Methods within each task use the same split, with sizes in Table~\ref{tab:experiment-splits}. The affine task uses the data and splits released with $\Pi$net. Convex QCQP uses SnareNet's released dataset and split with $100$ variables, $100$ quadratic inequalities, and $50$ equalities. The affine task's five seeds change training randomness and share one test set. Each ACOPF case has a fixed topology and a random split over load scenarios with split seed $0$.
\begin{table}[!htbp]
\centering\setlength{\tabcolsep}{3pt}
\caption{Training, validation, and test sizes. SOC-MPC data are generated separately for each horizon. Each ACOPF relaxation uses the listed sizes.}\label{tab:experiment-splits}
\begin{tabular}{lrrr}\toprule
Task & Training & Validation & Test\\\midrule
SOC-MPC, each $H$ & 8,000 & 1,000 & 1,000\\
Affine-constrained nonconvex optimization & 7,952 & 1,024 & 1,024\\
Convex QCQP & 8,334 & 833 & 833\\
ACOPF, 57 buses & 2,000 & 200 & 200\\
ACOPF, 118 buses & 4,000 & 400 & 400\\
ACOPF, 300 buses & 10,000 & 1,000 & 1,000\\\bottomrule
\end{tabular}
\end{table}

The synthetic ablation in Section~\ref{sec:block-ablation-main} trains no predictor and has no train/test split. Each of three constraint families and two sizes uses five problem-generation seeds, giving $30$ instances. Methods share instances and initial points; Appendices~C.1 and~\ref{app:block-ablation} describe generation and timing.

\paragraph{Finite unrolling.}
Let $L_{K_{\rm tr},S,T_{\rm dual}}(x,\hat y;P_\psi(x))$ denote the bounded correction operator. Each local block dual variable starts at zero and undergoes a fixed number of PGD steps. With tolerance masking, a sample remains unchanged after its original residual reaches tolerance. The corrected-output training objective is
\[
\min_{\theta,\psi}\frac1{|\mathcal D_{\rm tr}|}\sum_{(x,y^{\rm ref})\in\mathcal D_{\rm tr}}
\ell_{\rm task}\!\left(x,L_{K_{\rm tr},S,T_{\rm dual}}(x,f_\theta(x);P_\psi(x)),y^{\rm ref}\right).
\]
Raw-output and violation losses or mobility regularization can be added. For a fixed Euclidean geometry, set $P=I$ and omit $\psi$. An objective-only loss does not require reference labels. Training depth and inference cap are configured separately; the returned status records whether tolerance was reached.

We differentiate through the operations executed by the finite unroll, holding top-$b$ indices and Boolean branches fixed locally. Differentiating $a_i=N^\top\nabla_y g_i$ involves constraint Hessian information; normalization, the Gram matrix, and its step-size bound also belong to the computation graph. Detaching $a_i$ instead gives a surrogate gradient. Mobility is fixed within one forward repair but differentiated through all its repeated uses. For SDP, the eigenvector defining a minimum-eigenvalue subgradient is detached in the backward pass. The forward row remains a valid selected subgradient, while the backward pass treats this eigenvector as constant and uses a surrogate gradient.

\paragraph{Step sizes.}
ACOPF training uses the Gershgorin step $1.95/\ell_B$, and the other training configurations use $1.8/\ell_B$, where $\ell_B\ge\|G_B\|_\infty$. ACOPF and QCQP inference use a one-step power estimate, while SOC-MPC retains the Gershgorin step. The power estimate may not satisfy the step-size condition in Appendix~\ref{app:gershgorin}. With mobility, $G_B$ is formed from the transformed normalized rows.

\paragraph{SOC-MPC data generation.}
The six states are horizontal position and velocity, vertical position and velocity, pitch, and angular velocity. Dynamics use forward Euler discretization with step $0.05$. Positions and pitch integrate their respective velocities; horizontal acceleration is $9.81$ times pitch, vertical acceleration is $9.81(u_1+u_2)/2$, and angular acceleration is $6(u_2-u_1)$. Cost and constraint parameters are
\[
\begin{gathered}
Q=\operatorname{diag}(8,0.25,10,0.25,0.6,0.12),\qquad R=0.08I_2,\qquad Q_H=5Q,\\
D=\operatorname{diag}(1,0.2,1,0.2,0.35,0.12),\qquad W_u=I_2,\qquad
\rho=0.11,\quad u_{\max}=0.45.
\end{gathered}
\]
Initial-state components are sampled independently and uniformly: pitch from $[-0.04,0.04]$ and the other five components from $[-0.08,0.08]$. Horizontal and vertical targets are sampled from $[-0.38,0.38]$ and $[0.10,0.48]$, respectively. Seed $42+H$ generates independent training, validation, and test inputs sequentially at each horizon, all from the same distribution.

Tube centers are constructed during data generation. We first solve the quadratic objective subject only to the dynamics equalities, adding $10^{-10}I$ to its Hessian. At time $t$, we perturb the resulting control by
\[
0.055\tanh(2x_{\rm tar}^{\rm horiz})\sin\!\left(\frac{\pi t}{H-1}\right)(1,-1)^\top,
\qquad 0\le t<H,
\]
where $x_{\rm tar}^{\rm horiz}$ is the horizontal target position. If the largest control norm exceeds $0.72u_{\max}$, we scale the entire control sequence by a common factor to this bound. Rolling out the dynamics from the initial state gives tube centers $r_0,\ldots,r_H$. This produces a trajectory satisfying every inequality strictly; Appendix~C.2 gives the SOC constraints.

\paragraph{SOC-MPC training.}
For Section~\ref{sec:soc-mpc}, ours and $\Pi$net train for 100 epochs and use the final checkpoint. All horizons use 8,000 training and 1,000 validation instances, two 200-unit ReLU hidden layers without batch normalization, seed 0, Adam with learning rate $3\times10^{-4}$, and batch size 512. Predictor initial weights match; the mobility head starts at zero. Validation selects its learning rate: $3\times10^{-5}$ for $H=20$ and $3\times10^{-4}$ for $H=40,80$, while the predictor rate stays at $3\times10^{-4}$. The lower H20 rate scales the mobility-head update by $0.1$ after Adam computes it, leaving its moment states unchanged. Parameters and Adam states use FP32 and forward computation uses FP64. Training minimizes the batch mean of $\log(f_x(y_{\rm corr})+10^{-8})$, without solution labels. Ours adds a $10^{-5}$ squared penalty on centered $\log p$ with bound $\log 4$. TorchAX/JAX differentiates through the bounded repair. Ours uses training outer cap 32 and block capacity 64, giving effective block size $b=\min\{64,2H\}$. Each linearization uses five inner corrections and 20 dual steps per block, with tolerance $10^{-3}$ and per-sample masking. Training, validation, and test use the Gershgorin step $1.8/\|UU^\top\|_\infty$; validation and test use outer cap 250. Each horizon has 1,000 independent test instances. Identity mobility and the H20 common-rate sensitivity control appear in Appendix~\ref{app:soc-mpc-controls}.

$\Pi$net uses its official fixed-depth projection and custom VJP. Its training forward depths are $(200,300,400)$ for $H=(20,40,80)$, backward depth 200, $\omega=1.7$, and $\sigma=(0.0121563,0.0132498,0.00281115)$. The main table selects $K=400$ at all three horizons as a representative configuration from the fixed-depth test sweep. It is the smallest evaluated depth making every test output feasible at $10^{-3}$ at each horizon. The original validation-selected depths are $(200,300,400)$; selecting the main-table representative points uses test-sweep results. Every instance runs all specified steps without residual checks or early stopping. Appendix~\ref{app:soc-mpc-controls} retains the validation-selected configurations, original depths $(1000,1000,400)$, and full sweep on the same final checkpoints. Neural methods are timed in separate synchronous batch-one runs on RTX 4090 GPUs over the same ordered 1,000 instances; the physical GPU IDs are recorded with each run. Timing includes predictor, correction, and objective evaluation, excluding input transfer, compilation, warm-up, and the independent CPU audit. Objective and residuals are checked with the original NumPy problem definition.

\paragraph{SOC-MPC SnareNet.}\label{app:soc-mpc-snarenet}
At $H=20,40$, SnareNet uses the main experiment's predictor, initialization, sample order, $100$ epochs, Adam learning rate $3\times10^{-4}$, and batch size $512$. Training minimizes only the log corrected objective, without a violation penalty. Parameters and optimizer states use FP32; forward repair uses FP64. The official LM update handles dynamics equalities and SOC inequalities together, with regularization $0.01$, training threshold $10^{-3}$, and cap $100$. It stops on either the residual or update threshold. AdaRel starts from the per-constraint maximum training violation and decays linearly over the first $50$ epochs.

At $H=80$, batch-$512$ training encountered OOM with the default XLA (BFC) allocator on a 24\,GB RTX 4090.

At $H=20,40$, testing reuses the final checkpoint and selects only the internal threshold and iteration cap on validation data; external feasibility always uses $10^{-3}$ original-residual tolerance. Appendix~\ref{app:soc-mpc-controls} reports calibration and test results. The implementation retains the official LM update, adapting control flow and diagonal regularization for compilation. Forward and gradient computations were checked against the official implementation. Timing includes prediction, repair, and objective evaluation on all $1{,}000$ single-instance inputs, excluding compilation, warm-up, and transfer.

\paragraph{SOC-MPC DC3 and solver controls.}
DC3 predicts $2H$ controls, completes states via linear dynamics, and corrects squared positive inequality violations with momentum 0.5. It trains for 100 epochs with the same network, data, seed, optimizer rate, and batch size. The loss adds an inequality-violation norm penalty to log objective. Its training correction has fixed depth; inference stops at tolerance $10^{-3}$. Validation selects the following settings before test.
\begin{table}[t]
\centering\setlength{\tabcolsep}{3pt}
\caption{SOC-MPC DC3 correction settings.}\label{tab:soc_mpc_dc3_config}
\begin{tabular}{rrrrr}\toprule
$H$ & Step & Train steps & Penalty & Test cap\\\midrule
20 & $0.1$ & 5 & 0.5 & 500\\
40 & $8.75876\times10^{-4}$ & 20 & 5 & 8000\\
80 & $3.54856\times10^{-6}$ & 5 & 5 & 12800\\\bottomrule
\end{tabular}
\end{table}
H40 and H80 steps equal 1.5 times the inverse spectral upper bound for the squared-violation gradient. All test cases, including six H80 failures (maximum violation $2.7283\times10^{-3}$), enter mean gap and time.

Gurobi 13.0.1 solves the convex QCP; MOSEK 11.2.0 uses SOC constraints and a rotated-cone epigraph for the quadratic objective. Both are single-threaded on AMD EPYC 9654, reuse models without warm starts, and set only primal feasibility tolerance to $10^{-3}$; other tolerances remain default. All 1,000 instances per horizon pass the original-constraint check. Their times include parameter update, optimization, solution extraction, and objective evaluation, excluding initial setup and warm-up. SOC-MPC gaps retain their signs.

\paragraph{Training on the affine-constraint task.}\label{app:affine-protocol}
In Section~\ref{sec:affine}, $\Pi$net, SnareNet, JAXopt, and our method use predictors with two $200$-unit hidden layers, Adam learning rate $0.01$, batch size $2{,}048$, and $50$ training epochs, with JAX/XLA execution. JAXopt calls OSQP at tolerance $10^{-3}$ to compute the Euclidean projection of the raw prediction onto the feasible set of the affine constraints and trains the predictor through this layer.

DC3 uses its official PyTorch network and training configuration: two $200$-unit hidden layers with batch normalization and dropout, Adam learning rate $10^{-4}$, batch size $200$, and $1{,}000$ epochs. Equality completion precedes partial correction, with both training and test capped at ten correction steps, internal tolerance $10^{-4}$, step size $10^{-7}$, and momentum $0.5$. Equality and inequality soft penalties each have weight $5$. All methods use the same data split and final checkpoints. Reported latency corresponds to each implementation; external feasibility always uses original residuals at tolerance $10^{-3}$.

\paragraph{Convex QCQP training and inference.}\label{app:qcqp-training}
Section~\ref{sec:qcqp} uses SnareNet's public dataset with $100$ variables, $100$ convex quadratic inequalities, and $50$ affine equalities. All four joint configurations and the shared post predictor use the same split, seed $456$, two $200$-unit hidden layers, batch normalization, dropout $0.2$, FP64, Adam with learning rate $10^{-4}$, and batch size $100$. Each trains for $1{,}500$ epochs and uses its final checkpoint. The loss is
\[
\mathcal L(x,y)=f_x(y)+10\left(\|Cy-x\|_2^2+\|[g(y)]_+\|_2^2\right).
\]
Joint training evaluates this loss on corrected outputs. The post predictor uses raw outputs and no training-time repair. Reference objective values do not enter the training loss.

DC3 uses the QCQP baseline configuration released by SnareNet. It predicts $50$ free variables and completes the other $50$ through the equalities. Training unrolls ten partial-correction steps with step size $10^{-12}$ and momentum $0.5$, evaluating the loss on original constraints. The main-table run extends the released $1{,}000$-epoch budget to $1{,}500$ epochs to match the other methods; Appendix~\ref{app:qcqp-details} retains both budgets.

DC3's extra test correction is calibrated only after training. The first ten steps retain their training step size and momentum. On all $833$ validation inputs, we search extra step sizes $\{10^{-12},10^{-11},\ldots,10^{-4}\}$ with cap $10{,}000$ and stopping tolerance $10^{-6}$. We reject nonfinite candidates and those that fail this tolerance on any validation input. Among candidates within $0.01$ percentage points of the best mean Gap, we select the fewest mean extra iterations, giving step size $10^{-5}$. The extra phase retains DC3's momentum reset and mandatory first step. Settings are fixed before testing; external feasibility still uses $10^{-4}$.

SnareNet retains its official LM repair with regularization $0.01$, cap $100$ in training and testing, and internal threshold $10^{-6}$. It stops on either the residual or update threshold. SnareNet uses its AdaRel strategy, linearly reducing constraint slack to zero over the first $600$ epochs. All validation and test evaluations use the original constraints.

\method{} trains with the original constraints throughout. It uses $b=32$, five block updates per outer iteration, and ten dual PGD steps per block. Training has outer cap two and the Gershgorin step with scale $1.8$; testing has outer cap $100$, internal tolerance $10^{-6}$, and power1. The joint mobility head starts at zero; post uses identity mobility. The distinction between the empirical power1 setting and theoretically safe steps is as described above.

$\Pi$net uses the official Douglas--Rachford updates with $\omega=1.7$ and implicit BiCGSTAB differentiation. Each $g_i(y)=y^\top H_i y+a_i^\top y-h_i\le0$ is represented by the equivalent SOC
\[
\left\|\begin{bmatrix}2H_i^{1/2}y\\h_i-a_i^\top y-1\end{bmatrix}\right\|_2
\le h_i-a_i^\top y+1.
\]
The dataset has nonnegative diagonal $H_i$. We vectorize the SOC projections and evaluate the lifted affine projection using an equivalent $100$-dimensional Schur formula with cached coefficients. The implementation retains the official auxiliary variables, zero initialization, and distance objective on the original variables, without constructing a dense $10{,}300$-dimensional projector. Forward and implicit-gradient computations were checked against the official implementation on an actual-data subset of three constraints; QCQP/SOC equivalence was checked for all $100$ constraints.

$\Pi$net training uses a fixed $200$ forward steps and an implicit-backward cap of $200$, without training-time early stopping. Training $\sigma$ is selected from nine logarithmically spaced candidates between $0.001$ and $5.05$, using the initial predictor's first $128$ validation inputs. Among candidates with maximum original residual at most $0.05$, selection minimizes mean correction distance; if none qualifies, it minimizes maximum residual. This selects $5.05$. After training, joint and post separately calibrate test $\sigma$ and depth on all $833$ validation inputs, with depths $\{50,100,200,400,800,1600,3200,6400\}$. Candidates must satisfy $10^{-6}$ original-residual tolerance and have mean correction distance within a factor $1.01$ of the smallest feasible candidate. Selection then minimizes depth, breaking ties by residual and distance. Joint selects $(\sigma,K)=(0.5990575329,200)$; post selects $(1.7393218624,400)$. Main-text curves hold these $\sigma$ values and checkpoints fixed and change only inference depth.

This test calibration follows $\Pi$net's principle of screening for both feasibility and projection quality~\citep[Appendix~C.2]{grontas2026pinet}. We use the smallest mean correction distance among feasible candidates as an empirical reference for projection quality, with the candidate grid and $1\%$ threshold specified above.

All methods use FP64, batch-one timing on RTX 4090 GPUs, including prediction and repair but excluding compilation, warm-up, input transfer, and independent metric evaluation. Each point uses three complete test passes. The four SnareNet and \method{} configurations run sequentially on one physical GPU; $\Pi$net and DC3 are timed separately on another RTX 4090. SnareNet and our saved outputs agree with native Torch to maximum absolute error below $3\times10^{-15}$. At the main-table depths, $\Pi$net's sweep outputs also match its previously saved predictions. DC3 implements the same forward, loss, and gradient computations in JAX/XLA and is checked against the released Torch code; the first, middle, and last test instances in each configuration pass native-output checks. Appendix~\ref{app:qcqp-details} gives additional statistics.

\paragraph{ACOPF scenarios and reference solutions.}
The six datasets use the IEEE 57-, 118-, and 300-bus cases from PGLib. Load scenarios use GridFM DataKit's ERCOT 2024 hourly aggregate load series. We interpolate the series to the candidate count and linearly rescale it to a case-specific global load multiplier range. Each bus's active load receives an additional independent uniform multiplier; reactive loads stay at their original case values. The global ranges are $[0.8,1.05]$, $[0.7,1.25]$, and $[0.8,1.05]$ for 57, 118, and 300 buses. Corresponding bus-level multiplier ranges are $[0.9,1.1]$, $[0.8,1.2]$, and $[0.95,1.05]$. Generation uses seed $0$.

Each dataset generates three times the required number of candidate scenarios. PowerModels, JuMP, and MOSEK solve the corresponding relaxation and provide stored reference outputs and objectives. We retain only solved candidates whose original equality and inequality residuals are at most $10^{-3}$. We then select evenly spaced positions in the ordered retained candidates and apply the random split in Table~\ref{tab:experiment-splits}. Candidate filtering is performed separately for SOCP and SDP. Pretraining, fine-tuning, and testing use the same version of data, labels, and splits.

\paragraph{ACOPF training and inference.}\label{app:acopf-protocol}
Pretraining minimizes raw-prediction task MSE against reference outputs on each case's training split, without a constraint layer in training. It uses AdamW with peak learning rate $0.05$, weight decay $10^{-4}$, batch size $512$, gradient-norm clipping at $100$, and seed $0$. The rate warms up linearly from zero over $50$ epochs, then follows a cosine schedule of total length $2{,}000$ epochs. We save the shared pretrained checkpoint at epoch $1{,}900$. Predictor architecture and output processing for task MSE match subsequent fine-tuning.

Each of the six cases starts from the same case-specific 1,900-epoch pretrained predictor. Ours (post, full FT) fine-tunes the full predictor for 100 epochs, while Ours (post, head FT) freezes the trunk and running statistics and fine-tunes only the solution head. Both use raw-output task MSE without repair during training and apply $P=I$ only at inference. Ours (joint) freezes the pretrained trunk and running statistics and trains the solution and mobility heads for 100 epochs with
\[
\mathcal L=\tfrac12\mathcal L_{\rm raw}+\tfrac12\mathcal L_{\rm corr}
+10^{-5}\mathbb E_x\!\left[n^{-1}\|\log p_\psi(x)\|^2\right].
\]
$\mathcal L_{\rm raw}$ and $\mathcal L_{\rm corr}$ are task MSEs against dataset labels with the same output processing and column scaling. The mobility head starts at the identity. Both heads have peak learning rate 0.005 with ten warm-up epochs and cosine decay; existing AdamW state is retained for existing parameters. Predictors have two 200-unit hidden layers, batch normalization, and dropout 0.2. Effective batch size is 512, gradient clipping threshold 100, and seed 0. We report one final checkpoint per configuration, not cross-seed means.

Training uses two outer iterations, block size 256, and the $1.95$ Gershgorin step. Test uses TorchAX/XLA, an FP32 predictor and FP64 correction, block size 256, outer cap 5,000, and tolerance $10^{-3}$, with one-power-iteration step estimates. The budgets below apply to both relaxations.
\begin{table}[t]
\centering\setlength{\tabcolsep}{3pt}
\caption{ACOPF train and test correction budgets.}\label{tab:acopf-budgets}
\begin{tabular}{rrrrrr}\toprule
Buses & Train outer & Train inner & Train dual & Test inner & Test dual\\\midrule
57 & 2 & 4 & 4 & 4 & 8\\
118 & 2 & 10 & 10 & 10 & 20\\
300 & 2 & 20 & 20 & 20 & 40\\\bottomrule
\end{tabular}
\end{table}
Synchronous batch-one RTX 4090 inference times include the predictor, equality projection, and inequality correction, averaged over three trials. Original residuals are audited outside timing.

For ACOPF, a negative per-instance objective difference can occur within the feasibility tolerance. Tables report the mean of per-instance truncated relative gaps,
\[
\operatorname{Gap}(\%)=M^{-1}\sum_{j=1}^M[\zeta_j]_+,
\quad \zeta_j=100\frac{f_{x_j}(y_j)-f_j^{\rm ref}}{\max\{|f_j^{\rm ref}|,\epsilon_{\rm mach}\}}.
\]
All SOCP and SDP comparisons use this convention.

\paragraph{ACOPF layer baselines.}
For SOCP, official $\Pi$net 0.2.0 receives the same Ours (post, full FT) predictor output and uses its affine and SOC interfaces; thermal limits are represented equivalently as SOCs. Its original-state iteration stops at first $10^{-3}$ original-residual feasibility or 5,000 iterations. On 16 validation instances, $\sigma\in\{0.01,0.1,1,5\}$ is selected by feasible count then iteration count with $\omega=1.7$; all sizes select $\sigma=5$. It succeeds on $200/200$, $394/400$, and $992/1000$ instances for 57, 118, and 300 buses. Gap and time include failures. $\Pi$net 0.2.0 lacks a PSD-cone interface, so we omit it for SDP.

Official FSNet feasibility search also starts from the same Ours (post, full FT) checkpoint, uses memory $30$, scale $1000$, at most ten line-search steps, Armijo coefficient $10^{-4}$, and backtracking factor $0.5$. Objective and gradient stopping thresholds are $10^{-9}$ and $10^{-6}$, respectively, with an outer cap of 5,000 iterations. Original residuals are independently checked at $10^{-3}$. It fails on 12/400 SOCP118 and 915/1000 SOCP300 tests; full-set means are reported.

For SDP57, 118, and 300, LMI-Net implements the affine-set/PSD-cone Douglas--Rachford updates in Algorithm~1 and Eq.~(15) of~\citet{tang2026lminet}, using the Ours (post, full FT) checkpoint. We first project the raw prediction onto the equality set to obtain an anchor. Reduced coordinates start at zero, and the matrix variable starts at the anchor's LMI image. On the first $16$ validation instances, $\sigma\in\{0.01,0.1,1,5\}$ is selected by the highest feasible count, then the lowest mean iteration count; all three cases select $\sigma=0.01$. The cap is $5{,}000$ iterations, and stopping and evaluation use original residuals at tolerance $10^{-3}$. Both baselines' times include prediction and repair, exclude setup and compilation, and use synchronized batch-one evaluation.

\paragraph{ACOPF SnareNet postprocessing.}\label{app:acopf-snarenet}
SnareNet~\citep{chu2026snarenet} uses the same predictor checkpoint as Ours (post, full FT) in each evaluated case, without retraining. Starting from the raw network output, its official dense LM update jointly corrects equalities and inequalities, with regularization $0.01$, zero constraint slack, and an iteration cap of $5{,}000$. It stops when original residuals reach $10^{-3}$ or the official residual or update criterion reaches $10^{-8}$. Final feasibility always uses the original residuals at $10^{-3}$.

The predictor uses FP32 and repair uses FP64. The complete predictor and repair are compiled into one JAX/XLA executable. RTX 4090 timing uses synchronized batch-one inference, two warm-ups, and three full test passes; it includes prediction and repair but excludes compilation, input transfer, and independent metric evaluation. The completed SOCP57, SOCP118, SDP57, and SDP118 runs cover all $200$, $400$, $200$, and $400$ test instances, respectively, with $200$, $353$, $175$, and $105$ feasible outputs. Gap and latency include infeasible outputs.

\paragraph{Numerical status.}
The layer reports success only when the true residual reaches tolerance with finite output and oracle values; otherwise it reports the iteration limit. Satisfied zero-gradient rows can be masked as padding. A block with no positive residual produces no movement from zero-initialized dual PGD.

\subsection{Mobility parameterization and implementation}\label{app:mobility-implementation}
Euclidean block repair minimizes displacement equally in all directions, although the task objective may respond differently across directions. We introduce a learnable mobility that changes directional costs without changing the constraint set.

\paragraph{Definition and parameterization.}
Let $P_\psi(x)=\operatorname{diag}(p_1(x),\ldots,p_n(x))$ with $p_j(x)>0$. The learnable parameters are $\psi$; the matrix stays fixed during one complete repair for a fixed input. Suppressing $x,\psi$ below, its action on reduced directions is
\begin{equation}
P_z=N^\top PN\succ0.
\end{equation}
The unconstrained parameters $t_\psi(x)\in\mathbb R^n$ can be learned directly as a vector shared across instances or produced by an input-conditioned model. Our learned-mobility experiments use the latter. One bounded differentiable parameterization uses a fixed $\ell>0$:
\begin{equation}
v_j=\ell\tanh(t_{\psi,j}(x)),\qquad
\bar v=n^{-1}\sum_jv_j,\qquad p_j=\exp(v_j-\bar v).
\end{equation}
Then $\prod_jp_j=1$, eliminating a uniform scale degree of freedom, and $\max_jp_j/\min_jp_j\le e^{2\ell}$. Zero-output initialization gives $P=I_n$ and $P_z=I_q$.

\paragraph{From correction direction to weighted cost.}
Let $A_B\in\mathbb R^{b\times q}$ contain selected, unnormalized subgradient rows $a_i^\top$, and let $s_B^{\rm raw}=(\ell_i^k(w))_{i\in B}$ be their signed affine residuals at the current inner point. The linearized block constraints are $A_B\Delta z+s_B^{\rm raw}\le0$. The Euclidean QP has stationarity relation $\Delta z=-A_B^\top\lambda$ with $\lambda\ge0$. Mobility changes the direction of this nonnegative gradient combination to $\Delta z=-P_zA_B^\top\lambda$, or $P_z^{-1}\Delta z+A_B^\top\lambda=0$. This is the stationarity relation of the weighted QP
\begin{equation}
\label{eq:mobility_primal_appendix}
\min_{\Delta z}\tfrac12\Delta z^\top P_z^{-1}\Delta z
\quad\text{s.t.}\quad A_B\Delta z+s_B^{\rm raw}\le0.
\end{equation}
The metric $P_z^{-1}$ makes equal-length moves less costly along large-eigenvalue directions of $P_z$. At $P_z=I_q$, this reduces to Euclidean repair. For one violated affine row, its exact displacement is
\begin{equation}
\Delta z_i=-\frac{[\ell_i^k(w)]_+}{a_i^\top P_za_i}P_za_i.
\end{equation}
Accordingly, weighted selection scores $[\ell_i^k(w)]_+/\sqrt{a_i^\top P_za_i}$ and still uses a fixed-capacity top-$b$ block.

\paragraph{Reusing the Euclidean solver.}
Since $P_z=N^\top PN$, take the rectangular factor $R=N^\top P^{1/2}\in\mathbb R^{q\times n}$, for which $RR^\top=P_z$. Write $\Delta z=R\Delta\nu$ with $\Delta\nu\in\mathbb R^n$. For a given $\Delta z$, its minimum Euclidean representation cost is
\begin{equation}
\min_{R\Delta\nu=\Delta z}\tfrac12\|\Delta\nu\|^2
=\tfrac12\Delta z^\top(RR^\top)^{-1}\Delta z
=\tfrac12\Delta z^\top P_z^{-1}\Delta z.
\end{equation}
Therefore~\eqref{eq:mobility_primal} is equivalent to
\begin{equation}
\min_{\Delta\nu}\tfrac12\|\Delta\nu\|^2
\quad\text{s.t.}\quad A_BR\Delta\nu+s_B^{\rm raw}\le0.
\end{equation}
This is the Euclidean block problem of Section~\ref{sec:single-block} with each row transformed to $R^\top a_i=P^{1/2}Na_i$. Its squared norm is $a_i^\top P_za_i$, giving the weighted selection score above. We normalize and select these rows, run the same dual PGD, and recover $\Delta z=R\Delta\nu$ and $\Delta y=N\Delta z$.

\subsection{Computational complexity}\label{app:complexity}
Given constraint residuals and (sub)gradients, the inequality-block correction requires neither matrix factorization nor linear-system solves. Dual PGD uses matrix products, reductions, and elementwise operations: each block forms $G_B=UU^\top$ once, reuses it for $T_{\rm dual}$ steps, and maintains $\lambda\ge0$ by elementwise clipping. Our step-size rules likewise use only reductions or a fixed number of matrix--vector products. A fixed block size gives samples the same block tensor shapes, enabling batched execution and GPU parallelism.

We analyze the forward cost for one sample with dense coefficients. For $U\in\mathbb R^{b\times q}$, forming $G_B$ costs $O(b^2q)$, dual PGD costs $O(T_{\rm dual}b^2)$, and computing $-U^\top\lambda$ costs $O(bq)$. The block workspace stores $O(bq+b^2)$ values; retaining the current linearization requires another $O(mq+m)$.

Each outer iteration reuses one linearization for $S$ inner block updates. Let $\mathcal C_{\rm lin}$ denote the cost of true-residual queries, (sub)gradient evaluation, coordinate transforms, and normalization, and let $\mathcal C_{\rm sel}(m,b)$ denote the cost of selecting top-$b$ from $m$ scores. Updating affine residuals costs $O(mq)$ per inner step, giving an outer-iteration cost of
\begin{equation}\label{eq:outer_cost}
O\!\left(\mathcal C_{\rm lin}
+S\left[mq+\mathcal C_{\rm sel}(m,b)+b^2q+T_{\rm dual}b^2\right]\right).
\end{equation}
The total sums these costs over the executed outer iterations and adds equality preprocessing and the final residual check. Oracle costs depend on the constraint family; for example, the minimum-eigenpair computation for SDP belongs to $\mathcal C_{\rm lin}$.

Learned mobility preserves this computational structure. Let $J$ contain the original-coordinate constraint-gradient rows. The transformed gradients satisfy $JNR=(JNN^\top)P^{1/2}$. Since $P$ is positive diagonal, column scaling of equality-projected gradients implements this transform without forming $P_z^{-1}$ or taking a Cholesky factorization of $P_z$. When inner corrections accumulate in the $n$-dimensional transformed coordinates, the block-operation and linearization-storage costs above replace $q$ by $n$; the coordinate transforms at each outer iteration belong to $\mathcal C_{\rm lin}$.

\section{Constraint Residuals and Task Definitions}\label{app:oracles}

\subsection{Convex residual primitives}
All inequalities have the form $g_i(y,x)\le0$, with affine equalities. Coefficients are fixed for a given instance. Multiplying a selected $y$-space subgradient by $N^\top$ gives its reduced-coordinate row. Constraint families change the corresponding value and (sub)gradient queries; block selection and dual PGD use no projector onto their full feasible sets. The PSD row still requires a minimum-eigenpair computation.
\begin{table}[t]
\centering\setlength{\tabcolsep}{3pt}
\caption{Decision-space residuals and selected subgradients.}\label{tab:residual-primitives}
\begin{tabular}{lll}\toprule
Primitive & $g(y)$ & Subgradient\\\midrule
Affine & $a^\top y+c$ & $a$\\
Squared affine & $\frac12(a^\top y+c)^2-b$ & $(a^\top y+c)a$\\
Log-sum-exp & $\log\sum_j e^{p_j^\top y+\beta_j}-b$ & $\sum_j\pi_jp_j$\\
Norm minus affine & $\|My+c\|-t^\top y-b$ & $M^\top u-t$\\
Squared norm & $\|Fy+c\|^2-b$ & $2F^\top(Fy+c)$\\
PSD & $-\lambda_{\min}(W(y))$ & $(-v^*W_jv)_j$\\\bottomrule
\end{tabular}
\end{table}
Here $\pi_j=e^{p_j^\top y+\beta_j}/\sum_l e^{p_l^\top y+\beta_l}$. In the norm row, $u=(My+c)/\|My+c\|$ away from zero, and $u=0$ at zero. For PSD, $W(y)=W_0+\sum_j y_jW_j$ is Hermitian affine and $v$ is any unit minimum-eigenvalue eigenvector. This also defines a valid subgradient at repeated eigenvalues.

\paragraph{Synthetic families.}
Squared-affine and log-sum-exp systems use the corresponding primitives. The rotated-SOC family uses affine $P_i,Q_i,v_i,\ell_i$ and the convex residual
\[
g_i(y)=\left\|\begin{bmatrix}2P_i(y)\\2Q_i(y)\\v_i(y)-\ell_i(y)\end{bmatrix}\right\|-v_i(y)-\ell_i(y).
\]
A known feasible witness sets the margins. Reported comparisons use unit row scaling.

\subsection{SOC-MPC objective and constraints}
The decision stacks states $\xi_0,\ldots,\xi_H\in\R^6$ and controls $u_0,\ldots,u_{H-1}\in\R^2$. Each instance supplies the initial state, target position, and stored feasible tube centers $r_0,\ldots,r_H$. Equalities are $\xi_0=\xi_{\rm init}$ and $\xi_{t+1}=A_d\xi_t+B_du_t$. With $\xi_{\rm tar}$ containing the target position and zeros elsewhere, the objective is
\[
f_x(y)=\frac12\sum_{t=0}^H(\xi_t-\xi_{\rm tar})^\top Q_t(\xi_t-\xi_{\rm tar})
+\frac12\sum_{t=0}^{H-1}u_t^\top Ru_t,
\quad Q_H=5Q,\quad Q_t=Q\ (t<H).
\]
The $2H$ nonlinear residuals are
\[
g_t^{\rm tube}=\|D(\xi_t-r_t)\|-\rho\ (1\le t\le H),\qquad
g_t^{\rm input}=\|W_uu_t\|-u_{\max}\ (0\le t<H).
\]
Tube centers are instance data, distinct from targets and optimizer outputs. The dataset stores all matrices and radii. The $\Pi$net adapter receives the same affine SOC maps. The objective evaluator retains target-dependent constants.

\subsection{Convex QCQP objective and constraints}\label{app:qcqp-definition}
The input $x\in\R^{50}$ is the affine-equality right-hand side, and $y\in\R^{100}$ is the decision vector. Objective and constraint coefficients are fixed across the dataset, while $x$ varies:
\[
\min_y\ \tfrac12 y^\top Qy+p^\top y,\qquad
Cy=x,\qquad y^\top H_i y+a_i^\top y-h_i\le0,\quad i=1,\ldots,100.
\]
Both $Q$ and $H_i$ are positive semidefinite; the released dataset uses positive diagonal $H_i$. We directly query
\[
g_i(y)=y^\top H_i y+a_i^\top y-h_i,\qquad
\nabla g_i(y)=2H_i y+a_i.
\]
The inequality quadratic term has no factor $1/2$. Feasibility checks $\|Cy-x\|_\infty$ and $\max_i[g_i(y)]_+$ in original coordinates against $10^{-4}$. Gaps use the dataset's stored Gurobi reference objectives.

\subsection{ACOPF objective and constraints}
Both relaxations minimize generator cost, with per-unit generation $p_g,q_g$ and power base $S_{\rm base}$:
\[
f_x(y)=\sum_g\left[c_{2g}(S_{\rm base}p_g)^2+c_{1g}S_{\rm base}p_g+c_{0g}\right].
\]
Lifted coordinates include squared voltage magnitude $w_i$ and real/imaginary voltage-product components $w_{ij}^r,w_{ij}^i$. Branch-end flows $p_{ij},q_{ij}$ are affine in them. Bus balance is
\begin{align*}
\sum_{g\in\mathcal G_i}p_g-p_i^d-\sum_{j:(i,j)}p_{ij}-g_i^{\rm sh}w_i&=0,\\
\sum_{g\in\mathcal G_i}q_g-q_i^d-\sum_{j:(i,j)}q_{ij}+b_i^{\rm sh}w_i&=0.
\end{align*}
Affine inequalities bound generation, voltage, lifted products, and additional instance-specific envelopes. Angle residuals are $w_{ij}^i-\tan(\theta_{ij}^{\max})w_{ij}^r$ and $\tan(\theta_{ij}^{\min})w_{ij}^r-w_{ij}^i$. Nonlinear SOCP residuals are
\[
g_{ij}^{\rm thermal}=p_{ij}^2+q_{ij}^2-(s_{ij}^{\max})^2,
\qquad
g_{ij}^{\rm cone}=\left\|\begin{bmatrix}2w_{ij}^r\\2w_{ij}^i\\w_i-w_j\end{bmatrix}\right\|-w_i-w_j.
\]
Thermal limits are checked at both branch ends. The SDP adapter retains these and adds $g_c^{\rm PSD}(y)=-\lambda_{\min}(W_c(y))$ for every stored chordal PSD group. Each $W_c$ is Hermitian affine in the lifted coordinates; shared diagonal variables and duplicate off-diagonal components have consistency equalities.

\subsection{Scaling and numerical conventions}
The reported native-coordinate violation is $\max\{\|Cy-d(x)\|_\infty,\max_i[g_i(y,x)]_+\}$. Normalized residuals select and correct blocks, but feasibility uses native residuals. The norm denominator floor is $10^{-12}$ for synthetic selected rows and Torch SOC-MPC. ACOPF cone helpers floor the squared norm at the dtype's smallest positive normal value. SDP uses the eigensolver-selected eigenpair; its backward convention is in Appendix~\ref{app:training-protocol}.

\section{Additional Solver and Experiment Details}

\FloatBarrier
\subsection{Additional SOC-MPC controls}\label{app:soc-mpc-controls}
The identity-mobility predictor is still trained through the constraint layer. All configurations use the same split and correction budget, evaluate the epoch-100 checkpoint on 1,000 instances per horizon, and reach 100\% feasibility. The predictor rate is $3\times10^{-4}$ throughout. The main text uses validation-selected mobility-head rates: $3\times10^{-5}$ for H20 and $3\times10^{-4}$ for H40/H80. The lower H20 rate scales only the mobility-head update by 0.1 after Adam computes it. The table also includes an H20 sensitivity control with the same rate as the predictor.
\begin{table}[t]
\centering\setlength{\tabcolsep}{3pt}
\caption{SOC-MPC mobility controls. Gaps retain their signs; time is mean single-instance latency.}\label{tab:soc_mpc_mobility}
\begin{tabular}{rllrr}\toprule
$H$ & Mobility & Head rate & Gap (\%) & Time (ms)\\\midrule
20 & Identity & fixed & 0.19939 & 0.828\\
20 & Learned, common-rate control & $3\times10^{-4}$ & 0.33924 & 1.308\\
20 & Learned, main & $3\times10^{-5}$ & 0.19200 & 0.941\\
40 & Identity & fixed & 0.77589 & 0.947\\
40 & Learned, main & $3\times10^{-4}$ & 0.23561 & 1.061\\
80 & Identity & fixed & 0.94790 & 1.100\\
80 & Learned, main & $3\times10^{-4}$ & 0.16175 & 3.125\\\bottomrule
\end{tabular}
\end{table}
With validation-selected head rates, learned mobility lowers mean relative Gap by $3.7\%$, $69.6\%$, and $82.9\%$ versus identity across the three horizons, at higher latency. H20 has a small gain and is sensitive to the head rate: its common-rate control has $0.33924\%$ Gap, higher than identity. These comparisons concern mean per-instance relative Gap. H20 mean raw objective does not improve: identity and the main learned configuration give $10.21007$ and $10.21084$, respectively.

For the fixed-depth $\Pi$net control, each horizon reuses the same epoch-100 checkpoint, normalization, and projection parameters, varying only test projection depth $K$. The validation-selected depths were $200/300/400$ for H20/H40/H80. The main table selects $K=400$ at all horizons as representative configurations from this test sweep, using the smallest evaluated depth making the complete test set feasible. The original $1000/1000/400$ results are also retained below. Every test case runs exactly $K$ steps, without intermediate residual checks or early stopping. Timed inference includes the predictor, projection, and objective evaluation, excluding input transfer, compilation, warm-up, and the independent original-residual check. Every point covers the complete 1,000-case test set.
\begin{table}[t]
\centering\setlength{\tabcolsep}{3pt}
\caption{$\Pi$net fixed test depths on SOC-MPC (1,000 cases per horizon). Gap is the signed mean per-case relative objective difference; Max viol. is the maximum original constraint violation over all cases; time averages all instances. Feasibility uses tolerance $10^{-3}$. Rows $K=400$ at all three horizons are the main-table results. Each depth is evaluated separately; the original $K=(1000,1000,400)$ rows reuse the archived runs on the same checkpoints.}\label{tab:soc_mpc_pinet_fixed_depth}
\begin{tabular}{r r r r r r}\toprule
$H$ & Fixed $K$ & Gap (\%) & Max viol. & Feas. (\%) & Time (ms)\\\midrule
20 & 10 & 23.58977 & $6.13\times10^{-2}$ & 44.1 & 2.281\\
20 & 25 & 5.41482 & $3.65\times10^{-2}$ & 14.8 & 5.055\\
20 & 50 & 0.54972 & $1.29\times10^{-2}$ & 9.5 & 9.841\\
20 & 100 & 0.14579 & $2.75\times10^{-3}$ & 49.9 & 19.278\\
20 & 200 & 0.11107 & $1.36\times10^{-3}$ & 97.8 & 38.747\\
20 & 400 & 0.13016 & $6.21\times10^{-4}$ & 100.0 & 75.952\\
20 & 700 & 0.13173 & $2.83\times10^{-4}$ & 100.0 & 134.072\\
20 & 1,000 & 0.13206 & $1.68\times10^{-4}$ & 100.0 & 192.468\\\midrule
40 & 10 & 25.03752 & $8.82\times10^{-2}$ & 47.0 & 4.273\\
40 & 25 & 2.42901 & $4.70\times10^{-2}$ & 7.1 & 10.137\\
40 & 50 & 0.54179 & $1.24\times10^{-2}$ & 3.1 & 19.676\\
40 & 100 & 0.49964 & $2.59\times10^{-3}$ & 51.7 & 38.731\\
40 & 200 & 0.54620 & $1.42\times10^{-3}$ & 99.2 & 76.773\\
40 & 300 & 0.55698 & $1.06\times10^{-3}$ & 99.9 & 115.093\\
40 & 400 & 0.56725 & $8.96\times10^{-4}$ & 100.0 & 153.111\\
40 & 700 & 0.57937 & $4.33\times10^{-4}$ & 100.0 & 267.456\\
40 & 1,000 & 0.58085 & $2.08\times10^{-4}$ & 100.0 & 388.013\\\midrule
80 & 10 & 18.65364 & $7.24\times10^{-1}$ & 22.7 & 8.672\\
80 & 25 & 9.81001 & $8.35\times10^{-2}$ & 43.4 & 20.042\\
80 & 50 & 2.02172 & $4.94\times10^{-2}$ & 9.7 & 38.982\\
80 & 100 & 0.49334 & $9.24\times10^{-3}$ & 11.3 & 77.369\\
80 & 200 & 0.25986 & $2.34\times10^{-3}$ & 91.2 & 153.082\\
80 & 300 & 0.23428 & $1.05\times10^{-3}$ & 99.9 & 226.595\\
80 & 400 & 0.22984 & $8.41\times10^{-4}$ & 100.0 & 304.138\\\bottomrule
\end{tabular}
\end{table}
In this sweep, $K=400$ is the first evaluated fixed depth that makes all 1,000 test cases feasible at each horizon. The original validation-selected depths $200/300/400$ yield test feasibility of $97.8\%/99.9\%/100\%$. Shorter depths can have small Gaps while leaving outputs above tolerance, and feasibility does not increase monotonically with $K$. Figure~\ref{fig:soc_mpc_quality_latency} groups horizons by column and displays objective Gap and maximum constraint violation in the top and bottom rows, respectively.
\FloatBarrier

\paragraph{SnareNet stopping calibration.}
SnareNet reuses each horizon's final epoch-$100$ AdaRel-on checkpoint and changes only the test internal threshold and iteration cap. All $1{,}000$ validation cases evaluate thresholds $\{10^{-3},10^{-4},10^{-5},10^{-6}\}$ and caps $\{100,250,500,1000\}$. LM regularization stays at $0.01$, with the official residual-or-update stopping rule. Selection first maximizes the count meeting $10^{-3}$ original-residual tolerance, then minimizes mean repair iterations; remaining ties prefer smaller maximum violation, then a smaller cap, then a larger internal threshold.

Both horizons select internal threshold $10^{-4}$ and cap $500$, with full validation feasibility. Settings are fixed before evaluating the complete test set. Validation solves stop independently per instance; vectorization only accelerates evaluation and was checked against the official single-instance forward pass.

\begin{table}[t]
\centering\setlength{\tabcolsep}{3pt}
\caption{SnareNet SOC-MPC stopping controls, using one checkpoint and all $1{,}000$ test cases per horizon. Original settings use internal threshold $10^{-3}$ and cap $100$; validation-calibrated settings use $10^{-4}$ and $500$ and appear in the main table. Gap retains its sign; external feasibility always uses $10^{-3}$. Time is mean single-instance latency.}\label{tab:soc_mpc_snarenet_stopping}
\begin{tabular}{rlrrrr}\toprule
$H$ & Stopping & Gap (\%) & Max viol. & Feas. (\%) & Time (ms)\\\midrule
20 & Original & 0.01904 & $7.291\times10^{-3}$ & 77.9 & 20.534 \\
20 & Calibrated & 2.91321 & $2.642\times10^{-4}$ & 100.0 & 58.567 \\
40 & Original & 0.06786 & $4.385\times10^{-3}$ & 79.2 & 61.758 \\
40 & Calibrated & 1.76219 & $2.418\times10^{-4}$ & 100.0 & 175.515 \\
\bottomrule
\end{tabular}
\end{table}
Failures under the original settings primarily violate dynamics equalities. Calibration makes all test outputs feasible at both horizons, with more repair iterations and higher latency. The original settings' lower mean Gap includes infeasible outputs and does not measure feasible-solution quality for comparison with the main table.

\begin{table}[!htbp]
\caption{SOC-MPC main-table results at full precision; settings match Table~\ref{tab:soc_mpc}.}\label{tab:soc-mpc-full}
\centering\setlength{\tabcolsep}{3pt}
\begin{tabular}{@{}rlrrr@{}}\toprule
$H$ & Method & Gap (\%) & Feas. (\%) & Time (ms)\\\midrule
20 & Gurobi & -4.50e-5 & 100 & 20.809 \\
20 & MOSEK & -1.07e-6 & 100 & 4.974 \\
20 & DC3 & 5.90896 & 100 & \textbf{0.566} \\
20 & $\Pi$net & \textbf{0.13016} & 100 & 75.952 \\
20 & SnareNet & 2.91321 & 100 & 58.567 \\
20 & Ours & 0.19200 & 100 & 0.941 \\
40 & Gurobi & -4.61e-5 & 100 & 57.957 \\
40 & MOSEK & -1.84e-6 & 100 & 10.023 \\
40 & DC3 & 16.69631 & 100 & 1.810 \\
40 & $\Pi$net & 0.56725 & 100 & 153.111 \\
40 & SnareNet & 1.76219 & 100 & 175.515 \\
40 & Ours & \textbf{0.23561} & 100 & \textbf{1.061} \\
80 & Gurobi & -1.51e-4 & 100 & 59.254 \\
80 & MOSEK & -1.14e-4 & 100 & 20.470 \\
80 & DC3 & 13.34606 & 99.4 & 6.623 \\
80 & $\Pi$net & 0.22984 & 100 & 304.138 \\
80 & SnareNet & --- & --- & --- \\
80 & Ours & \textbf{0.16175} & 100 & \textbf{3.125} \\
\bottomrule\end{tabular}
\end{table}

\FloatBarrier
\subsection{Nonconvex optimization with affine constraints: depth and training controls}\label{app:pinet31-controls}

\begin{table}[t]
\caption{Complete affine-task results, including maximum original violation and batch timing. Five seeds each test $1{,}024$ instances; violation takes the maximum, other columns average seeds.}
\label{tab:affine-full}
\centering\setlength{\tabcolsep}{3pt}
\begin{tabular}{l r r r r r}
\toprule
Method & Gap (\%) & Max viol. & Feas. (\%) & \metrichead{Single time}{ms} & \metrichead{Batch time}{ms} \\
\midrule
DC3 & 7.89393 & 0.30938 & 99.824 & 3.043 & 3.809 \\
JAXopt & 0.38841 & $9.999\times10^{-4}$ & 100 & 7.226 & 111.679 \\
SnareNet & 6.16897 & $9.02\times10^{-4}$ & 100 & 3.041 & 61.441 \\
$\Pi$net ($K=50$) & 0.33117 & $1.02\times10^{-4}$ & 100 & 0.642 & 5.370 \\
Ours (identity mobility) & 0.24518 & $9.999\times10^{-4}$ & 100 & \textbf{0.410} & \textbf{1.864} \\
Ours (learned mobility) & \textbf{0.19032} & $9.995\times10^{-4}$ & 100 & 0.523 & 4.384 \\
\bottomrule
\end{tabular}
\end{table}
\paragraph{Inference timing and implementations.}
For each seed in Section~\ref{sec:affine}, single-instance timing synchronously repeats each of ten test inputs ten times and takes the median of the pooled $100$ measurements. The table averages these medians over five seeds. Batch-$1{,}024$ timing takes the median of ten complete-test-batch runs per seed, then averages over seeds. Timings include prediction and repair, excluding compilation and warm-up. DC3 uses its official PyTorch implementation; the other neural methods use JAX/XLA. Appendix~\ref{app:affine-protocol} gives training and repair settings. All five $50$-epoch JAXopt seeds use this ten-input timing protocol.

The main table uses $10^{-3}$ test tolerance for our method and fixed $50$-step inference for $\Pi$net. $\Pi$net trains with $50$ projection steps and implicit differentiation. This appendix also retains a test early-stop control that resumes the Douglas--Rachford state, checks original residuals every ten steps, and caps at $50$; the interval was selected on validation data. Forced continuation to $50$ steps reproduces the original fixed-depth output elementwise.

The fixed-depth sweep reuses the same five checkpoints with $K\in\{1,2,5,10,15,20,30,40,50,75,100\}$ and no intermediate residual checks. Each point covers every seed's complete test set. At $K=50$, quality metrics match the original results for each seed. All fixed-depth timings below come from this sweep. These depths are observations along the test curve; the main table retains the original $50$-step budget, and the additional early-stop control uses a validation-selected check interval. Table~\ref{tab:pinet31_stopping} reports the complete sweep.
\begin{table}[t]
\centering\setlength{\tabcolsep}{3pt}
\caption{Fixed-depth and early-stop $\Pi$net on the affine task. Gap and latency average five seeds; Max viol. is the maximum across all $5{,}120$ outputs. Feasibility uses $10^{-3}$, and Gap includes infeasible outputs.}\label{tab:pinet31_stopping}
\begin{tabular}{lrrrrr}\toprule
Test mode & Gap (\%) & Max viol. & Feas. (\%) & \metrichead{Single time}{ms} & \metrichead{Batch time}{ms}\\\midrule
$K=1$ & 29.77461 & $2.381\times10^{0}$ & 75.84 & 0.079 & 0.410 \\
$K=2$ & 4.14283 & $4.083\times10^{0}$ & 0 & 0.106 & 0.554 \\
$K=5$ & 0.02529 & $1.738\times10^{0}$ & 0 & 0.158 & 0.845 \\
$K=10$ & 0.25240 & $2.711\times10^{-1}$ & 0 & 0.211 & 1.335 \\
$K=15$ & 0.35811 & $7.469\times10^{-2}$ & 0.059 & 0.266 & 1.833 \\
$K=20$ & 0.32809 & $1.794\times10^{-2}$ & 0.02 & 0.312 & 2.317 \\
$K=30$ & 0.33120 & $2.866\times10^{-3}$ & 99.922 & 0.433 & 3.375 \\
$K=40$ & 0.33117 & $5.436\times10^{-4}$ & 100 & 0.533 & 4.381 \\
$K=50$ & 0.33117 & $1.020\times10^{-4}$ & 100 & 0.642 & 5.370 \\
$K=75$ & 0.33117 & $1.551\times10^{-6}$ & 100 & 0.855 & 7.798 \\
$K=100$ & 0.33117 & $2.360\times10^{-8}$ & 100 & 1.074 & 10.223 \\
\midrule
Check each 10, $10^{-3}$ & 0.33120 & $9.84\times10^{-4}$ & 100 & 0.494 & 3.990\\\bottomrule
\end{tabular}
\end{table}
At $K=30$, four outputs miss tolerance; at $K=40$, all are feasible. Further iterations leave Gap essentially unchanged while reducing maximum violation. Across five training seeds, Gap mean $\pm$ sample standard deviation is $0.388\pm0.194\%$ for JAXopt, $6.169\pm1.723\%$ for SnareNet, $0.331\pm0.191\%$ for fixed-$50$-step $\Pi$net, $0.245\pm0.020\%$ for identity mobility, and $0.190\pm0.027\%$ for learned mobility.

Ours uses $b=16$, five inner corrections, ten dual steps, and outer cap eight. The main configuration trains with tolerance $10^{-3}$. We also train to $10^{-6}$ and evaluate the same checkpoint at two test tolerances.
\begin{table}[t]
\centering\setlength{\tabcolsep}{3pt}
\caption{Training and test tolerance controls for our method on nonconvex optimization with affine constraints; all rows are 100\% feasible at their test tolerance.}\label{tab:pinet31_tolerances}
\begin{tabular}{lrrrrrr}\toprule
Mobility & Train tol & Test tol & Gap (\%) & Max viol. & \metrichead{Single time}{ms} & \metrichead{Batch time}{ms}\\\midrule
Identity & $10^{-3}$ & $10^{-3}$ & 0.24518 & $9.999\times10^{-4}$ & 0.410 & 1.864\\
Learned & $10^{-3}$ & $10^{-3}$ & 0.19032 & $9.995\times10^{-4}$ & 0.523 & 4.384\\
Identity & $10^{-6}$ & $10^{-3}$ & 0.24606 & $9.99\times10^{-4}$ & 0.341 & 1.678\\
Identity & $10^{-6}$ & $10^{-6}$ & 0.24636 & $9.96\times10^{-7}$ & 0.833 & 2.918\\
Learned & $10^{-6}$ & $10^{-3}$ & 0.19167 & $9.99\times10^{-4}$ & 0.543 & 4.908\\
Learned & $10^{-6}$ & $10^{-6}$ & 0.19215 & $9.999\times10^{-7}$ & 3.605 & 9.267\\\bottomrule
\end{tabular}
\end{table}
Gaps are similar across training tolerances. Tighter test tolerance substantially increases latency. Fixed-depth $\Pi$net has lower CV than $10^{-3}$ early stopping. Mean 50-epoch training times for $\Pi$net, identity mobility, and learned mobility are 14.35, 23.36, and 37.75 seconds. These are total wall-clock times including initial compilation and validation at every epoch, averaged over five training seeds.

\paragraph{SnareNet relaxation and training duration.}
SnareNet retains the official LM update and stops when either the residual or update reaches its threshold. Regularization is $0.01$, the cap is $100$, and the internal train and test threshold is $10^{-3}$. It uses the task's two $200$-unit hidden layers, Adam learning rate $0.01$, batch size $2{,}048$, and the corrected objective without a violation penalty. Each seed matches the predictor initialization and training order of the corresponding identity-mobility run. With AdaRel, initial slack is the per-constraint maximum violation of raw training predictions and decays linearly over the first $25$ epochs. Validation and test always use zero slack. The main table uses the final checkpoint after $50$ epochs with AdaRel enabled.

\begin{table}[t]
\centering\setlength{\tabcolsep}{3pt}
\caption{SnareNet relaxation and training-duration controls. Each row aggregates five seeds, each testing all $1{,}024$ instances. Gap gives mean $\pm$ sample standard deviation. Every configuration is $100\%$ feasible at $10^{-3}$. The $100$-epoch runs retain the $25$-epoch relaxation decay.}\label{tab:snarenet_affine_controls}
\begin{tabular}{rlrrrr}\toprule
Epochs & AdaRel & Gap (\%) & Max viol. & \metrichead{Single time}{ms} & \metrichead{Batch time}{ms}\\\midrule
50 & Off & $13.06420\pm2.36947$ & $9.61\times10^{-4}$ & 3.525 & 62.326\\
50 & On, main & $6.16897\pm1.72330$ & $9.02\times10^{-4}$ & 3.041 & 61.441\\
100 & Off & $8.72578\pm1.79889$ & $8.28\times10^{-4}$ & 4.465 & 65.159\\
100 & On & $4.72325\pm1.43125$ & $9.91\times10^{-4}$ & 2.943 & 58.483\\\bottomrule
\end{tabular}
\end{table}
AdaRel and longer training both improve objective quality on this task. With AdaRel enabled, extending training from $50$ to $100$ epochs lowers mean Gap from $6.169\%$ to $4.723\%$; this control uses additional training budget. The official repair's forward and gradient computations were checked against the TorchAX/JAX implementation. Single-instance timing uses ten test inputs with ten repeats each; batch timing uses all $1{,}024$ inputs. Both exclude compilation.

\FloatBarrier
\subsection{ACOPF training scope and mobility}\label{app:acopf-mobility}
All controls start from the same pretrained checkpoint, use the same split, 100-epoch budget, and test repair configuration. The main text reports post with full-predictor fine-tuning, post with solution-head fine-tuning, and joint training of the solution and mobility heads with a frozen trunk. Both post configurations use raw-output MSE and identity mobility at inference. The additional controls jointly train either the full predictor or only the solution head with identity mobility. Comparing the two joint configurations with a frozen trunk measures the benefit of adding mobility. Both identity-mobility joint variants use $\frac12(\mathcal L_{\rm raw}+\mathcal L_{\rm corr})$; learned mobility adds the regularizer in Appendix~\ref{app:acopf-protocol}.
\begin{table}[t]
\centering\setlength{\tabcolsep}{3pt}
\caption{ACOPF control mean gaps (\%). All models are 100\% feasible at $10^{-3}$; sample sizes are 200, 400, and 1,000 for 57, 118, and 300 buses.}\label{tab:acopf_mobility_controls}
\begin{tabular}{llrrrrr}\toprule
 & & \multicolumn{5}{c}{Mean gap (\%)}\\
\cmidrule(lr){3-7}
 & & \multicolumn{2}{c}{Post} & \multicolumn{3}{c}{Joint}\\
\cmidrule(lr){3-4}\cmidrule(lr){5-7}
Relax. & Buses & Full & Head & Identity full & Identity head & Learned head\\\midrule
SOCP & 57 & 0.05115 & 0.05142 & 0.04430 & 0.05013 & 0.05010\\
SOCP & 118 & 0.41878 & 0.18361 & 0.40576 & 0.17931 & 0.16658\\
SOCP & 300 & 0.09816 & 0.04123 & 0.09951 & 0.04130 & 0.03676\\
SDP & 57 & 0.07490 & 0.06420 & 0.07079 & 0.06408 & 0.06401\\
SDP & 118 & 0.37341 & 0.16452 & 0.35438 & 0.15406 & 0.14442\\
SDP & 300 & 0.05240 & 0.05438 & 0.05807 & 0.04181 & 0.03842\\\bottomrule
\end{tabular}
\end{table}
With the same frozen trunk, task loss, and correction budget, adding a learned mobility head lowers Gap in all six cases; 57-bus differences are small, and the learned branch adds mobility regularization. The precise SOCP300 matched gaps are 0.041305\% to 0.036762\% (11.0\% relative improvement), and SDP300 0.041806\% to 0.038418\% (8.1\%). Each ACOPF case uses one training seed; only the small task in Section~\ref{sec:affine} tests cross-seed consistency. Full-predictor joint training gives a SOCP57 gap of 0.04430\%, below the frozen-trunk, learned-mobility result of 0.05010\%. On SOCP300 and SDP300, its gaps of 0.09951\% and 0.05807\% exceed the corresponding full-FT post gaps of 0.09816\% and 0.05240\%.

The learned metric also varies by case. For $P_\psi(x)=\operatorname{diag}(p(x))$, we record the maximum test-sample condition number $\max_x\max_jp_j(x)/\min_jp_j(x)$. We also compute the standard deviation of each $p_j(x)$ across test samples and average it over coordinates. For SOCP/SDP57, these two statistics are $1.31/1.28$ and $0.000925/0.00121$; for SOCP/SDP118, $11.00/12.84$ and $0.02166/0.01710$; for SOCP/SDP300, $15.77/15.66$ and $0.03408/0.01499$. The 57-bus metrics stay close to identity and vary little across inputs, consistent with their nearly tied Gap controls.

\paragraph{Baseline stopping states on infeasible outputs.}
Table~\ref{tab:acopf-failure-budget} checks per-instance stopping records for seven baseline configurations. Every output that misses tolerance $10^{-3}$ in these configurations is finite and reaches the $5{,}000$-iteration cap; none exits earlier because of an internal objective, gradient, or update threshold. Maximum equality and inequality residuals use the original constraints over the complete test set. Main-text Gap and latency also include these failed outputs.
\begin{table}[t]
\centering\setlength{\tabcolsep}{3pt}
\caption{Infeasible ACOPF baseline outputs and iteration limits. ``At cap'' counts only infeasible outputs; maximum residuals include all test outputs.}\label{tab:acopf-failure-budget}
\begin{tabular}{llrrrr}\toprule
Task & Method & Failed / Total & At cap & Max eq. & Max ineq.\\\midrule
SOCP118 & $\Pi$net & 6/400 & 6 & $2.408\times10^{-12}$ & $1.008\times10^{-2}$\\
SOCP118 & FSNet & 12/400 & 12 & $1.855\times10^{-3}$ & $1.221\times10^{-2}$\\
SOCP118 & SnareNet & 47/400 & 47 & $1.398\times10^{-3}$ & $3.522\times10^{-2}$\\
SDP57 & SnareNet & 25/200 & 25 & $9.795\times10^{-4}$ & $2.278\times10^{-3}$\\
SDP118 & SnareNet & 295/400 & 295 & $1.501\times10^{-3}$ & $4.317\times10^{-2}$\\
SOCP300 & FSNet & 915/1,000 & 915 & $4.780\times10^{-3}$ & $8.692\times10^{-3}$\\
SOCP300 & $\Pi$net & 8/1,000 & 8 & $2.178\times10^{-11}$ & $3.892\times10^{-3}$\\\bottomrule
\end{tabular}
\end{table}

\FloatBarrier
\subsection{DC3 ACOPF attempts}\label{app:dc3-acopf}
We applied DC3 equality completion and partial correction to all constraints of SOCP57/118 and SDP57/118~\citep{donti2021dc3}. With five training correction steps, step $10^{-4}$, momentum 0.5, and equality/inequality penalties 5 each, all cases produced nonfinite forward values in the first training batch, before any optimizer update. Squared thermal violations dominated the correction gradient. After full-training-set calibration, fixed steps were $3\times10^{-11}$ for 57 buses and $3\times10^{-14}$ for 118 buses. At the original penalty, SOCP118 stopped after 562 epochs because of a nonfinite gradient; the other three completed 2,000 epochs but remained infeasible on test.

With total penalty $10^5$ split equally between equality and inequality, all four completed 2,000 epochs. AdamW used batch size 512, peak rate 0.05, 50-epoch warm-up and cosine decay; training correction stayed at five steps. Test cap was 250 and tolerance $10^{-3}$.
\begin{table}[t]
\centering\setlength{\tabcolsep}{3pt}
\caption{DC3 tests after step calibration and increased penalty; maximum violation uses native residuals.}\label{tab:dc3_attempts}
\begin{tabular}{lrrrr}\toprule
Case & Step & Epochs & Feasible tests & Max ineq. violation\\\midrule
SOCP57 & $3\times10^{-11}$ & 2000 & $0/200$ & 0.19304\\
SOCP118 & $3\times10^{-14}$ & 2000 & $0/400$ & 189.69979\\
SDP57 & $3\times10^{-11}$ & 2000 & $0/200$ & 0.13667\\
SDP118 & $3\times10^{-14}$ & 2000 & $0/400$ & 94.37722\\\bottomrule
\end{tabular}
\end{table}
Restoring $10^{-4}$ test step for SOCP57 reduced maximum violation to 0.16163 but still yielded $0/200$ feasible outputs. We do not interpret negative objective differences of infeasible outputs as solution quality. These attempts do not cover 300 buses.

\FloatBarrier
\subsection{MOSEK reference and timing audit}\label{app:mosek-audit}
MOSEK via PowerModels/JuMP/MosekTools solves the same relaxations used to generate labels: \texttt{SOCWRConicPowerModel} for SOCP and \texttt{SparseSDPWRMPowerModel} for SDP. All six cases use the paper's test split, original FP64 inputs, and original model (\texttt{model\_tau=0}). Primal and dual feasibility tolerances are $10^{-3}$, relative gap tolerance $10^{-8}$, and MOSEK default threads are retained. Each case warms up on two validation samples and solves all tests once. Main-text time is the mean wall time of \texttt{PowerModels.optimize\_model!}, including backend setup, solve, and extraction but excluding JuMP model construction, process start, warm-up, file I/O, and external checks.
\begin{table}[t]
\centering\setlength{\tabcolsep}{3pt}
\caption{MOSEK native-residual audit and mean model-construction time over 200/400/1,000 tests.}\label{tab:mosek_audit}
\begin{tabular}{lrrrr}\toprule
Relax. & Buses & Max equality & Max inequality & Build time (ms)\\\midrule
SOCP & 57 & $1.590\times10^{-7}$ & $2.557\times10^{-7}$ & 5.378\\
SOCP & 118 & $7.072\times10^{-4}$ & $7.370\times10^{-4}$ & 12.333\\
SOCP & 300 & $2.135\times10^{-4}$ & $1.216\times10^{-4}$ & 27.268\\
SDP & 57 & $6.393\times10^{-6}$ & $3.777\times10^{-7}$ & 16.649\\
SDP & 118 & $5.205\times10^{-6}$ & $4.890\times10^{-7}$ & 53.069\\
SDP & 300 & $9.826\times10^{-4}$ & $1.195\times10^{-4}$ & 331.634\\\bottomrule
\end{tabular}
\end{table}
All 3,200 solves returned OPTIMAL and pass the native $10^{-3}$ check; stored outputs reproduce the FP64 labels elementwise, giving zero gap to original labels. The common neural-method reference incurs only $3.65\times10^{-7}$\%--$2.41\times10^{-6}$\% mean gaps from front-end rounding, displayed as 0.00000\%. SDP objective scaling tries factors $1,0.3,0.1,0.03$ only after non-OPTIMAL status; every test succeeded at the first factor.

\FloatBarrier
\subsection{Unified TorchAX/JAX block ablation}\label{app:block-ablation}
The nine deterministic methods use one RTX 4090 with TorchAX/JAX over two scales, three convex families, and five paired problem seeds: 270 method--instance combinations and 666 timing trials. M-SSP and its paired dual PGD use another RTX 4090 on byte-identical inputs for 60 combinations and 180 successful timing trials. Native-coordinate equalities and inequalities are checked after stopping.

Single-row SKM refreshes nonlinear residuals after each normalized-max row correction; its frozen variant refreshes after five. Equality-LS fits selected boundary equations by pinv or lstsq; Block-pinv fits only violated selected rows. Violation-GD runs ten gradient steps on squared positive block violation. Block principal pivoting (BPP) solves the same halfspace QP as dual PGD; BPP-4 and BPP-16 cap each block solve at four and sixteen pivots, respectively. M-SSP follows the adaptive extrapolated minibatch subgradient method of~\citet[Section 4.3]{necoara2019minibatch}, sampling 128 rows with replacement and refreshing true subgradients each step. With $d_{k,i}=[g_i(z_k)]_+a_i/\|a_i\|^2$ and $\bar d_k=128^{-1}\sum_i d_{k,i}$, it uses $\beta_k=1.8/L_{128}^k$ where $L_{128}^k=\|\bar d_k\|^2/(128^{-1}\sum_i\|d_{k,i}\|^2)$. Zero denominators give a zero update. Each problem seed has one pre-generated random-index stream, reused across three timing repeats.

All methods use identity mobility, FP64, block size 128, and native maximum violation tolerance $10^{-3}$. Block methods relinearize per outer iteration and perform up to five inner updates; dual PGD and Violation-GD both use ten steps, zero initialization, and a matched spectral-bound step. Timings include oracle queries, gradients, selection, correction, scheduling, and stop checks, excluding compilation, warm-up, instance generation, equality preprocessing, and M-SSP index generation. Each run stops at tolerance, the $1{,}024$-outer-iteration cap, or a 0.5-second soft time budget checked after eight-iteration segments. Equality-LS's first segment itself takes roughly 1.04--1.08 seconds. Both Equality-LS variants fail to reach tolerance within the budget in all six groups. BPP-4 times out on four small squared-affine instances; single-row SKM reaches the iteration cap on five medium squared-affine and three medium log-sum-exp instances. All remaining method--instance combinations reach tolerance. Success counts use actual returned outputs.

For each instance, we take median repeated time, then median over five instances. A dagger marks stopping time on failures, not time-to-feasibility. Mean outer iterations weight the five instances equally; maximum violation is the maximum over all stopped outputs. Paired speedups are the median of per-instance alternative/dual ratios when both succeed on all five, and need not equal the ratio of group medians.

\begin{table}[!htbp]
\centering\setlength{\tabcolsep}{3pt}
\caption{Small synthetic systems $(n,n_{\rm eq},m)=(256,32,2048)$. Dagger: time includes failed outputs.}\label{tab:synthetic_block_alternatives}\label{tab:synthetic_small_full}
\begin{tabular}{lrrrrr}\toprule
Method & Success & Stop time (ms) & Mean outer & Max viol. & Paired ratio\\\midrule
\multicolumn{6}{l}{\textit{Squared affine}}\\
Single-row SKM & 5/5 & 75.606 & 589.0 & 9.952e-04 & 21.79\\
Single-row frozen & 5/5 & 29.745 & 117.8 & 9.516e-04 & 9.23\\
Equality-LS/pinv & 0/5 & 1045.233$^\dagger$ & 8.0 & 1.887e+02 & --\\
Equality-LS/lstsq & 0/5 & 1044.895$^\dagger$ & 8.0 & 8.198e+02 & --\\
Block-pinv & 5/5 & 547.114 & 5.4 & 8.586e-04 & 152.52\\
Violation-GD & 5/5 & 4.076 & 5.4 & 7.861e-04 & 1.17\\
BPP-4 & 1/5 & 536.639$^\dagger$ & 46.0 & 7.291e-03 & --\\
BPP-16 & 5/5 & 54.299 & 5.2 & 9.291e-04 & 16.18\\
Ours (dual PGD) & 5/5 & 3.470 & 5.6 & 5.836e-04 & 1.00\\\midrule
\multicolumn{6}{l}{\textit{Log-sum-exp}}\\
Single-row SKM & 5/5 & 102.326 & 515.2 & 9.957e-04 & 31.73\\
Single-row frozen & 5/5 & 34.383 & 103.6 & 9.989e-04 & 11.23\\
Equality-LS/pinv & 0/5 & 1045.260$^\dagger$ & 8.0 & 3.297e-01 & --\\
Equality-LS/lstsq & 0/5 & 1044.030$^\dagger$ & 8.0 & 3.123e-01 & --\\
Block-pinv & 5/5 & 417.574 & 4.0 & 8.954e-04 & 138.68\\
Violation-GD & 5/5 & 3.438 & 4.0 & 9.884e-04 & 1.12\\
BPP-4 & 5/5 & 30.054 & 3.8 & 7.489e-04 & 10.00\\
BPP-16 & 5/5 & 33.791 & 3.8 & 8.758e-04 & 10.57\\
Ours (dual PGD) & 5/5 & 3.047 & 4.0 & 9.844e-04 & 1.00\\\midrule
\multicolumn{6}{l}{\textit{Rotated SOC}}\\
Single-row SKM & 5/5 & 48.903 & 326.4 & 9.844e-04 & 20.74\\
Single-row frozen & 5/5 & 18.444 & 63.8 & 9.998e-04 & 7.82\\
Equality-LS/pinv & 0/5 & 1043.614$^\dagger$ & 8.0 & 3.270e-01 & --\\
Equality-LS/lstsq & 0/5 & 1045.805$^\dagger$ & 8.0 & 4.724e-01 & --\\
Block-pinv & 5/5 & 322.623 & 3.4 & 7.286e-04 & 135.42\\
Violation-GD & 5/5 & 2.790 & 3.0 & 6.757e-04 & 1.06\\
BPP-4 & 5/5 & 23.321 & 3.0 & 8.972e-04 & 9.89\\
BPP-16 & 5/5 & 29.333 & 3.0 & 8.601e-04 & 12.15\\
Ours (dual PGD) & 5/5 & 2.363 & 3.0 & 9.225e-04 & 1.00\\\bottomrule
\end{tabular}
\end{table}

\begin{table}[!htbp]
\centering\setlength{\tabcolsep}{3pt}
\caption{Medium synthetic systems $(n,n_{\rm eq},m)=(512,64,4096)$. Dagger: time includes failed outputs.}\label{tab:synthetic_medium_full}
\begin{tabular}{lrrrrr}\toprule
Method & Success & Stop time (ms) & Mean outer & Max viol. & Paired ratio\\\midrule
\multicolumn{6}{l}{\textit{Squared affine}}\\
Single-row SKM & 0/5 & 182.660$^\dagger$ & 1024.0 & 1.092e-02 & --\\
Single-row frozen & 5/5 & 107.956 & 242.2 & 9.847e-04 & 21.93\\
Equality-LS/pinv & 0/5 & 1066.688$^\dagger$ & 8.0 & 2.561e+02 & --\\
Equality-LS/lstsq & 0/5 & 1069.191$^\dagger$ & 8.0 & 2.539e+02 & --\\
Block-pinv & 5/5 & 642.880 & 5.6 & 6.650e-04 & 136.56\\
Violation-GD & 5/5 & 5.004 & 5.4 & 7.590e-04 & 1.14\\
BPP-4 & 5/5 & 39.231 & 5.6 & 9.481e-04 & 8.66\\
BPP-16 & 5/5 & 43.175 & 5.6 & 8.159e-04 & 9.38\\
Ours (dual PGD) & 5/5 & 4.759 & 5.6 & 9.912e-04 & 1.00\\\midrule
\multicolumn{6}{l}{\textit{Log-sum-exp}}\\
Single-row SKM & 2/5 & 243.191$^\dagger$ & 999.6 & 2.025e-03 & --\\
Single-row frozen & 5/5 & 100.015 & 212.8 & 9.987e-04 & 27.35\\
Equality-LS/pinv & 0/5 & 1067.914$^\dagger$ & 8.0 & 3.295e-02 & --\\
Equality-LS/lstsq & 0/5 & 1066.489$^\dagger$ & 8.0 & 2.718e-01 & --\\
Block-pinv & 5/5 & 476.396 & 4.0 & 7.853e-04 & 124.72\\
Violation-GD & 5/5 & 4.257 & 4.0 & 9.250e-04 & 1.11\\
BPP-4 & 5/5 & 28.154 & 4.0 & 6.455e-04 & 7.49\\
BPP-16 & 5/5 & 28.204 & 4.0 & 4.301e-04 & 7.84\\
Ours (dual PGD) & 5/5 & 3.822 & 4.0 & 7.755e-04 & 1.00\\\midrule
\multicolumn{6}{l}{\textit{Rotated SOC}}\\
Single-row SKM & 5/5 & 152.090 & 670.2 & 9.880e-04 & 50.00\\
Single-row frozen & 5/5 & 66.106 & 137.4 & 9.992e-04 & 20.65\\
Equality-LS/pinv & 0/5 & 1069.885$^\dagger$ & 8.0 & 2.928e-01 & --\\
Equality-LS/lstsq & 0/5 & 1070.832$^\dagger$ & 8.0 & 9.302e-02 & --\\
Block-pinv & 5/5 & 382.370 & 3.2 & 6.725e-04 & 115.88\\
Violation-GD & 5/5 & 3.585 & 3.0 & 8.009e-04 & 1.12\\
BPP-4 & 5/5 & 23.349 & 3.0 & 9.840e-04 & 7.29\\
BPP-16 & 5/5 & 24.068 & 3.0 & 6.860e-04 & 7.28\\
Ours (dual PGD) & 5/5 & 3.201 & 3.0 & 9.269e-04 & 1.00\\\bottomrule
\end{tabular}
\end{table}

\begin{table}[t]
\centering\setlength{\tabcolsep}{3pt}
\caption{M-SSP versus same-device dual PGD. Both succeed on all five seeds per group. Ratios are medians of paired instance ratios.}\label{tab:mssp_paired}
\begin{tabular}{llrrrr}\toprule
 & & \multicolumn{2}{c}{Time (ms)} & & \\
\cmidrule(lr){3-4}
Scale & Family & M-SSP & Dual PGD & Ratio & M-SSP outer\\\midrule
Small & Squared affine & 16.916 & 3.617 & 4.88 & 140.2\\
& Log-sum-exp & 23.284 & 3.257 & 7.43 & 160.2\\
& Rotated SOC & 11.842 & 2.315 & 5.11 & 95.0\\
Medium & Squared affine & 41.322 & 4.898 & 9.44 & 297.6\\
& Log-sum-exp & 67.219 & 4.303 & 15.90 & 362.2\\
& Rotated SOC & 73.798 & 4.448 & 19.56 & 272.4\\\bottomrule
\end{tabular}
\end{table}
All 180 M-SSP/dual paired trials reach tolerance; maximum native inequality violation is $9.9125\times10^{-4}$ and equality residual is below $10^{-14}$. M-SSP computes averaged row corrections, not a joint minimum-norm block correction. Its ratios use its own same-device dual control and are not obtained by mixing the two GPU runs.

\FloatBarrier
\subsection{Convex QCQP statistics}\label{app:qcqp-details}
Section~\ref{sec:qcqp} uses SnareNet's public data with $100$ variables, $100$ convex quadratic inequalities, and $50$ affine equalities. All seven main-table configurations test the complete set of $833$ instances. Joint configurations train their predictors through their respective repairs; the three postprocessors share the same predictor weights and raw outputs. Main-table results use seed $456$ and the final epoch-$1{,}500$ checkpoint; Appendix~\ref{app:qcqp-training} gives training settings.

\begin{table}[t]
\centering\setlength{\tabcolsep}{3pt}
\caption{Convex QCQP absolute objective gaps and maximum original residuals. Absolute gap is $f_x(y)-f_x^{\rm ref}$; its geometric mean and maximum cover all test instances. All gaps are positive. All configurations are $100\%$ feasible at $10^{-4}$.}\label{tab:qcqp_details}
\begin{tabular}{llrrrr}\toprule
 & & \multicolumn{2}{c}{Absolute gap} & & \\
\cmidrule(lr){3-4}
Mode & Method & Geom. mean & Max & Max ineq. & Max eq.\\\midrule
Joint & DC3 & 8.7725242 & 13.8658313 & $9.980\times10^{-7}$ & $1.293\times10^{-14}$\\
Joint & SnareNet & 0.0202505 & 0.0749286 & $2.16\times10^{-5}$ & $3.33\times10^{-5}$\\
Joint & $\Pi$net ($K=200$) & 0.00194093 & 0.0616385 & $1.347\times10^{-8}$ & $1.205\times10^{-14}$\\
Joint & Ours & 0.00130975 & 0.0165973 & $1.00\times10^{-6}$ & $8.95\times10^{-15}$\\
Post & SnareNet & 0.500654 & 1.038013 & $1.42\times10^{-5}$ & $1.85\times10^{-5}$\\
Post & $\Pi$net ($K=400$) & 0.497508 & 1.113269 & $7.742\times10^{-9}$ & $1.046\times10^{-14}$\\
Post & Ours & 0.497585 & 1.113269 & $1.00\times10^{-6}$ & $4.86\times10^{-15}$\\\bottomrule
\end{tabular}
\end{table}

The main text uses the arithmetic mean of nonnegative per-instance relative Gap; this table retains SnareNet's absolute-gap statistics. Feasibility uses the original quadratic-inequality and affine-equality residuals. SnareNet has internal threshold $10^{-6}$ but may also stop on small updates or its iteration cap, so outputs need not meet $10^{-6}$ original-residual tolerance. The main table consistently evaluates feasibility at $10^{-4}$.

The four SnareNet and \method{} configurations run sequentially on one RTX 4090 for three full single-instance timing passes. $\Pi$net and DC3 are timed separately on another RTX 4090 with the same test set, precision, and repetition count. These passes repeat inference with one checkpoint, not training seeds. All three postprocessors have identical predictor checkpoint hashes. Every $\Pi$net sweep point restarts repair from the same raw predictions and runs its fixed number of steps without intermediate residual checks.

\paragraph{DC3 training budget and test correction.}
We retain DC3's released $1{,}000$-epoch budget and continue the same run to $1{,}500$ epochs. Table~\ref{tab:qcqp_dc3_controls} compares both budgets and test calibration. The last two rows use the same checkpoint and differ only in test correction settings. The released test configuration applies ten fixed steps, then at most ten extra steps, all with step size $10^{-12}$ and internal tolerance $10^{-4}$ for the extra phase.

\begin{table}[t]
\centering\setlength{\tabcolsep}{3pt}
\caption{DC3 training-budget and test-correction controls. Each row covers all $833$ test inputs. Feasibility uses $10^{-4}$ original-residual tolerance; Gap and latency include failures. Calibration uses validation data only. The last row is the main-table configuration.}\label{tab:qcqp_dc3_controls}
\begin{tabular}{rlrrrr}\toprule
Epochs & Test correction & Gap (\%) & Feas. (\%) & Max viol. & Time (ms)\\\midrule
$1{,}000$ & Released & 78.19945 & 97.719 & $5.080\times10^{-1}$ & 0.372\\
$1{,}500$ & Released & 66.90930 & 96.279 & $1.752\times10^{0}$ & 0.378\\
$1{,}500$ & Calibrated & 66.91845 & 100 & $9.980\times10^{-7}$ & 0.600\\\bottomrule
\end{tabular}
\end{table}

Test calibration reduces infeasible outputs from $31$ to zero for the final checkpoint, but mean Gap remains about $66.9\%$. Objective quality is thus the main limitation of this configuration: extra test correction restores feasibility without improving its large Gap. Calibrated extra correction averages $7.94$ steps and takes at most $709$, in addition to the first ten fixed steps.

\paragraph{Fixed-depth $\Pi$net inference.}
Joint uses the final checkpoint trained with $200$ fixed projection steps; post uses the shared soft-constraint predictor. Each curve holds its validation-selected $\sigma$ fixed and evaluates all $833$ test inputs. Table~\ref{tab:qcqp_fixed_depth} reports nonnegative relative Gap, maximum original violation, $10^{-4}$ feasibility, and mean single-instance latency over three passes.

\begin{table}[t]
\centering\setlength{\tabcolsep}{3pt}
\caption{Complete fixed-depth $\Pi$net sweep on convex QCQP. Each row tests all $833$ instances. Max viol. is the largest original residual; feasibility uses $10^{-4}$. Gap includes infeasible outputs. The joint $K=200$ and post $K=400$ rows also appear in the main table.}\label{tab:qcqp_fixed_depth}
\begin{tabular}{llrrrr}\toprule
Mode & $K$ & Gap (\%) & Max viol. & Feas. (\%) & Time (ms)\\\midrule
Joint & 5 & 0.00000 & $1.996\times10^{0}$ & 0 & 0.330 \\
 & 10 & 0.00032 & $5.884\times10^{-1}$ & 0 & 0.465 \\
 & 20 & 0.06495 & $1.119\times10^{-1}$ & 0 & 0.753 \\
 & 30 & 0.08713 & $3.960\times10^{-2}$ & 0 & 1.057 \\
 & 50 & 0.04274 & $8.110\times10^{-3}$ & 0 & 1.639 \\
 & 75 & 0.04031 & $8.920\times10^{-4}$ & 58.583 & 2.422 \\
 & 100 & 0.04014 & $9.753\times10^{-5}$ & 100 & 3.175 \\
 & 150 & 0.04013 & $1.151\times10^{-6}$ & 100 & 4.645 \\
 & 200 & 0.04013 & $1.347\times10^{-8}$ & 100 & 6.126 \\
 & 300 & 0.04013 & $1.622\times10^{-12}$ & 100 & 9.007 \\
 & 400 & 0.04013 & $1.227\times10^{-14}$ & 100 & 11.900 \\
 & 600 & 0.04013 & $1.185\times10^{-14}$ & 100 & 17.624 \\
 & 800 & 0.04013 & $1.252\times10^{-14}$ & 100 & 23.369 \\
\midrule
Post & 5 & 70.62898 & $2.665\times10^{-15}$ & 100 & 0.293 \\
 & 10 & 42.65650 & $4.274\times10^{-15}$ & 100 & 0.446 \\
 & 20 & 19.83004 & $5.860\times10^{-2}$ & 58.103 & 0.744 \\
 & 30 & 12.02816 & $3.500\times10^{-2}$ & 54.982 & 1.028 \\
 & 50 & 6.54757 & $1.303\times10^{-2}$ & 48.739 & 1.615 \\
 & 75 & 4.73516 & $6.318\times10^{-3}$ & 46.098 & 2.415 \\
 & 100 & 4.20975 & $3.274\times10^{-3}$ & 49.46 & 3.188 \\
 & 150 & 3.97701 & $5.280\times10^{-4}$ & 92.557 & 4.616 \\
 & 200 & 3.94676 & $5.806\times10^{-5}$ & 100 & 6.143 \\
 & 300 & 3.94167 & $8.317\times10^{-7}$ & 100 & 9.027 \\
 & 400 & 3.94154 & $7.742\times10^{-9}$ & 100 & 11.880 \\
 & 600 & 3.94154 & $1.080\times10^{-11}$ & 100 & 17.626 \\
 & 800 & 3.94154 & $1.088\times10^{-14}$ & 100 & 23.354 \\
\bottomrule
\end{tabular}
\end{table}
At joint $K=5$, all outputs are infeasible and clipping negative per-instance relative gaps gives zero mean Gap; this does not represent feasible objective quality. Post $K=5,10$ is fully feasible but has large Gap, and feasibility is nonmonotone in subsequent iterations. Both cases motivate reporting objective quality together with original residuals.

\end{document}